\documentclass[twoside,11pt]{article}
\usepackage{blindtext}

\usepackage[preprint]{jmlr2e}
\usepackage{lastpage}

\jmlrheading{1}{2026}{1-\pageref{LastPage}}{}{}{}{Heesang Ann, Taehyun Hwang, and Min-hwan Oh}
\ShortHeadings{Diversified MNL Contextual Bandits}{Ann, Hwang, and Oh}
\firstpageno{1}

\usepackage[utf8]{inputenc}
\usepackage[T1]{fontenc}
\usepackage{amsmath}
    \newcommand\numberthis{\addtocounter{equation}{1}\tag{\theequation}}
\usepackage{amssymb}
\usepackage{amsfonts}
\usepackage{mathtools}
\usepackage{amsthm}
\usepackage{abbreviations}
\usepackage{natbib}
    \setcitestyle{authoryear,round,citesep={;},aysep={,},yysep={;}}
    \renewcommand{\cite}[1]{\citep{#1}}
\usepackage{hyperref}
    \hypersetup{
        pdftitle={Diversified Multinomial Logit Contextual Bandits},
        pdfsubject={},
        pdfkeywords={},
        pdfborder=0 0 0,
        pdfpagemode=UseNone,
        colorlinks=true,
        linkcolor=black,
        citecolor=mydarkblue,
        filecolor=mydarkblue,
        urlcolor=mydarkblue,
    }
\usepackage{url}
\usepackage{dsfont}
\usepackage{tabularx}
\usepackage{makecell}
\usepackage{graphicx}
\usepackage{subcaption}
\usepackage{booktabs}
\usepackage{subfiles}
\usepackage[capitalize,noabbrev]{cleveref}
\usepackage{multirow}
\usepackage{nicefrac}
\usepackage{microtype}
\usepackage{xcolor}
\usepackage{enumitem}
\usepackage{algorithm}
\usepackage[noend]{algpseudocode}
\usepackage{pifont}
\usepackage{caption}

\usepackage[toc,page,header]{appendix}
\usepackage{minitoc}

\title{Diversified Multinomial Logit Contextual Bandits}

\author{\name Heesang Ann \email sang3798@snu.ac.kr \\
       \addr Seoul National University \\
       \name Taehyun Hwang \email th.hwang@snu.ac.kr \\
       \addr Seoul National University \\
       \name Min-hwan Oh \email minoh@snu.ac.kr \\
       \addr Seoul National University}

\editor{}

\doparttoc
\faketableofcontents

\begin{document}

\maketitle

\begin{abstract}
Existing contextual multinomial logit (MNL) bandits model relevance-driven choice but ignore the potential benefits of within-assortment diversity, while submodular/combinatorial bandits encode diversity in rewards but lack structured choice probabilities. We bridge this gap with the \emph{diversified multinomial logit} (DMNL) contextual bandit, which augments MNL choice probabilities with a generally submodular diversity function, thereby formalizing the relevance--diversity trade-off within a single model.
Incorporating diversity renders exact MNL assortment optimization intractable. We propose a \emph{white-box} UCB-based algorithm, $\ofudmnl$, that constructs assortments item-wise by maximizing optimistic marginal gains, avoids black-box optimization oracles. 
We show that $\ofudmnl$ achieves at least a $(1-\tfrac{1}{e+1})$-\emph{approximate} regret bound $\tilde{\Ocal}\big(d \sqrt{T/K}\big)$, where $d$ is the context dimension, $K$ the maximum assortment size, and $T$ the horizon, and attains an improved approximation factor over standard submodular baselines. 
Experiments demonstrate consistent gains and, relative to exhaustive enumeration, comparable regret with substantially lower runtime. Overall, DMNL bandits provide a practical foundation for diversity-aware assortment optimization under uncertainty, and $\ofudmnl$ offers a statistically and computationally efficient solution.
\end{abstract}

\section{Introduction}

Sequential \emph{assortment} selection arises whenever a platform repeatedly presents a set of items and observes a user response. 
E-commerce websites curate product slates, streaming services recommend a set of movies, and app stores surface a collection of apps. 
In each round, the decision-making agent chooses an assortment subject to a size constraint, the user selects at most one item (or makes no selection), and the agent updates future assortments based on the observed feedback. 
Because user preferences are not known a priori and must be learned from interactions with users, the problem is naturally cast as an online learning task: maximize cumulative reward while balancing exploration and exploitation.

A key ingredient in this setting is a probabilistic choice model that links an offered assortment to the user’s selection. 
The \emph{multinomial logit} (MNL) model~\citep{mcfadden1978modelling} has served as a canonical choice model for dynamic assortment learning: it represents choice probabilities through latent item utilities based on relevance, a structure that supports tractable assortment optimization and clean statistical learning guarantees. 
These advantages have motivated a substantial literature on MNL assortment bandits~\citep{rusmevichientong2010dynamic,saure2013optimal,agrawal2017thompson, agrawal2019mnl} and contextual variants that exploit user and item features to generalize across contexts~\citep{cheung2017thompson, ou2018multinomial, chen2020dynamic, oh2019thompson, oh2021multinomial, perivier2022dynamic, zhang2024online, lee2024nearly, lee2025improved}. 
In these models, uncertainty resides in the relevance-dependent utilities, and the agent’s task is to estimate them online efficiently enough to enable near-optimal sequential assortments.

However, practical assortment design is rarely driven by relevance alone: \emph{diversity within the offered set} is often important in practice. 
Users tend to value assortments that span complementary attributes (e.g., different genres, brands, or styles), while assortments filled with near-duplicates can cannibalize one another and provide little additional benefit beyond offering a single representative item. 
At the same time, diversity is not a substitute for relevance: a diverse but irrelevant assortment still performs poorly. 
This creates an inherent \emph{relevance--diversity} trade-off. 
Existing MNL bandits, both contextual and non-contextual, do not capture this trade-off because under the MNL model choice probabilities depend on items only through their individual utilities; consequently, within-assortment interactions---such as similarity-induced cannibalization or complementarity effects---are not modeled.

A natural way to incorporate such within-assortment interactions is to model the payoff of an offered set directly through a \emph{submodular} objective, which captures diminishing returns and encourages coverage and diversity. 
This idea underlies a literature on submodular and combinatorial bandits~\citep{YueGuestrin2011, chen2013combinatorial, qin2014contextual, chen2016combinatorial,chen2017interactive, Hiranandani20a, hwang2023combinatorial}, where the agent selects a subset of items each round and receives a reward specified by an (often monotone) submodular set function, possibly depending on context. 
While these models capture diversity-aware set selection, they abstract away the \emph{choice-based} feedback mechanism central to assortment settings: the reward is defined as a set function rather than arising from a user selecting at most one item according to a structured discrete choice model such as MNL. 
Moreover, they typically do not couple relevance-driven utilities with diversity effects within a single probabilistic choice model. 
Consequently, there remains a modeling gap between diversity-aware set selection and relevance-based, choice-model-driven assortment learning.

We close this gap by introducing a practically motivated bandit model that embeds diversity directly into MNL choice probabilities. A key technical challenge is that, once diversity is incorporated, the tractable exact optimization available in classical MNL assortment problems is no longer applicable—the optimal assortment may require exhaustive search. Many combinatorial bandit approaches~\citep{chen2013combinatorial, qin2014contextual, chen2016combinatorial, li2016contextual, hwang2023combinatorial} circumvent this difficulty by \emph{assuming} access to a black-box combinatorial optimization oracle with a prescribed approximation factor, an assumption that can be unrealistic in practice and that obscures the source of approximation. Our goal, instead, is to design an algorithm that simultaneously guarantees sublinear regret and a provable approximation factor \emph{without} relying on such oracles. To this end, we introduce the diversified multinomial logit (DMNL) contextual bandit together with an efficient learning algorithm and end-to-end guarantees.

\begin{figure}[t!]
    \centering
    \includegraphics[width=1\textwidth, trim=1cm 6cm 1cm 5.5cm, clip]{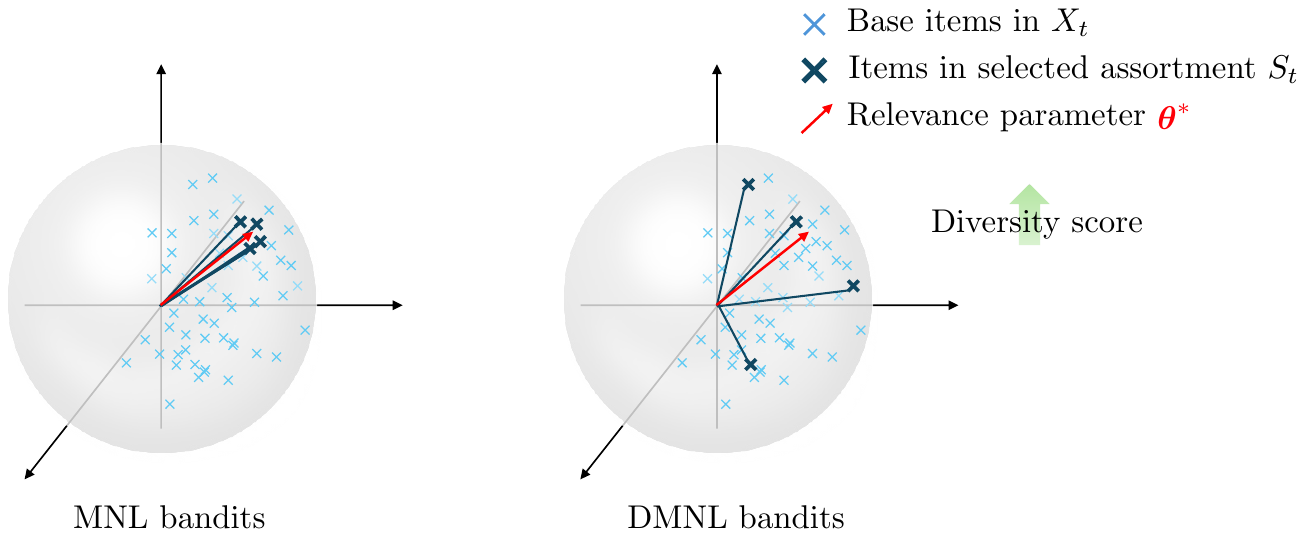}
    \vspace{-20pt}
    \caption{Differences between MNL bandit algorithms and DMNL bandit algorithms ($d=3, K=4$). While MNL bandit algorithms select the top-$K$ relevant items in the uniform revenue setting, DMNL bandit algorithms consider both item relevance and assortment diversity, resulting in more diverse selections whose degree depends on the DMNL setting.}
    \label{fig:DMNL_alg}
\end{figure}

We summarize our main contributions as follows:
\begin{itemize}

\item \textbf{Novel assortment bandit model.}
We introduce a new sequential decision-making model, which we call the \emph{diversified multinomial logit (DMNL) contextual bandit}. In this model, user choice follows the multinomial logit choice model augmented with a diversity function—assumed submodular in general—that scores the diversity of assortments (Definition~\ref{def:DMNL_model}). To our knowledge, existing MNL bandit work does not account for assortment diversity. This is the first model to incorporate diversity directly into the choice probabilities. The model captures practical scenarios in which greater within-assortment diversity increases the likelihood of selecting an individual item for a given level of relevance, while the choice probabilities still depend on item relevance. Hence, the model addresses the natural tension between relevance and diversity—a phenomenon commonly observed in real-world recommender systems.

\item \textbf{Algorithmic design.}
We propose an upper confidence bound (UCB) algorithm, $\ofudmnl$~(Algorithm~\ref{alg:ofu_dmnl}), for DMNL bandits. The salient feature of the algorithm is an item-wise optimistic construction of an assortment: it incrementally adds the item that yields the largest marginal increase in the optimistic reward estimate.
This process is computationally efficient and comes with a provable \emph{approximation} guarantee.%
\footnote{Due to the augmented diversity function, exact assortment optimization is no longer tractable as in prior work~\citep{ou2018multinomial, oh2019thompson, oh2021multinomial, perivier2022dynamic, lee2024nearly, lee2025improved}; hence, we must resort to approximation. Unlike existing work on combinatorial bandits, which assumes access to a black-box optimization oracle returning a super-arm (a set of base arms) with a prescribed approximation factor, our algorithm employs a transparent, white-box construction for which we directly prove the approximation rate.}


\item \textbf{Regret guarantee.}
We prove that our proposed algorithm is statistically efficient. Under a sufficient condition on the diversity function (Definition~\ref{def:strict_submodular}), we prove that the algorithm achieves at least a $(1 - \tfrac{1}{e+1})$-approximate regret bound of
$\tilde{\Ocal}\big(d \sqrt{T/K}\big)$ (Theorem~\ref{thm:regret}),
where $d$ is the context feature dimension, $K$ is the maximum assortment size, $T$ is the total number of rounds, and $\tilde{\Ocal}$ suppresses logarithmic factors.
This bound closely matches that of nearly minimax-optimal algorithms for MNL bandits under uniform revenues~\citep{lee2024nearly}, despite learning a diversity parameter.

\item \textbf{Approximation guarantee.}
We show that the item-wise greedy construction under the DMNL model attains a stronger approximation rate (Theorem~\ref{thm:ap_rate}) than those established in the submodular maximization literature~\citep{Nemhauser1978, Feige1998, YueGuestrin2011}. Unlike the prior literature on MNL bandits—where identifying the optimal assortment can be done efficiently—finding the optimal assortment in DMNL bandits while accounting for diversity requires exhaustive search, making approximation guarantees essential.
By leveraging the MNL structure together with the submodularity of the diversity function, we obtain an improved approximation rate of at least $\bigl(1 - \tfrac{1}{e+1}\bigr)$, without having to rely on black-box optimization oracles.

\item \textbf{Numerical performance.}
Extensive numerical experiments show that our algorithm outperforms benchmark methods across a wide range of scenarios. In particular, even relative to exhaustive enumeration over all possible assortments, our algorithm achieves comparable regret while offering a substantial reduction in running time. Hence, our proposed method is both computationally and statistically efficient.

\end{itemize}

\section{Related Work}
\paragraph{Contextual MNL bandits.}
The MNL bandit framework—which applies the MNL choice model to dynamic assortment optimization—has led to significant progress in sequential decision-making~\citep{rusmevichientong2010dynamic, saure2013optimal, agrawal2017thompson, cheung2017thompson, ou2018multinomial, agrawal2019mnl, oh2019thompson, chen2020dynamic, oh2021multinomial, perivier2022dynamic, lee2024nearly, zhang2024online, lee2025improved}.
Early results established the statistical efficiency of UCB-type~\citep{agrawal2017thompson} and Thompson Sampling (TS)-type~\citep{agrawal2019mnl} algorithms, while~\citet{chen2017note} derived lower bounds for the MNL bandit problem.
The contextual MNL bandit was introduced by~\citet{oh2019thompson}, who proposed a TS-type algorithm with provable guarantees, followed by a UCB-type algorithm~\citep{oh2021multinomial} achieving $\tilde{\Ocal}(\sqrt{dT/\kappa})$ regret, where $\kappa = \Ocal(1/K^2)$ is an instance-dependent parameter.
Subsequent works refined the theoretical analysis:~\citet{perivier2022dynamic} derived tighter regret bounds, and~\citet{lee2024nearly, lee2025improved} proposed computationally efficient algorithms with nearly minimax guarantees.
However, despite these advances, no prior work has incorporated diversity into the MNL bandit model to better reflect user preferences for varied assortments.

\paragraph{Submodular bandits and combinatorial bandits.} 
Handling the computational problem via the submodular reward function in bandit setting is first explored by ~\citet{YueGuestrin2011}.
They introduced the linear submodular bandit framework, in which the reward of a set of items is assumed by a linear combination of submodular set functions.
Built on the fact that by using submodular reward functions, item-wise selection (iteratively adding one item at a time in a greedy manner) for set construction guarantees the approximation rate $(1-{1\over e})$ for submodular rewards~\citep{Nemhauser1978}, they suggest an algorithm that item-wisely selects items during set optimization and proved a theoretical bound for their algorithm with $(1-{1\over e})$-approximate regret.

In the submodular bandit framework~\citep{YueGuestrin2011, chen2017interactive, Hiranandani20a}, the set of items is either ranked or presented sequentially to the user, and the probability of an item being selected depends on its marginal gain relative to the previously presented items.
Especially,~\citet{Hiranandani20a} proposed a cascade variant of the model suggested by~\citet{YueGuestrin2011}.
It is notable that these settings are fundamentally different from the assortment bandit problem we study, since the feedback is determined by a user choice at the end.
While we aim to exploit submodularity to design computationally tractable algorithms, unlike in submodular bandits, we cannot leverage information about the marginal gain obtained when items are added individually.
Therefore, whether the optimization advantages of submodular diversity functions can be applied to the MNL framework remains an open direction. 

In the combinatorial bandit framework~\citep{chen2013combinatorial, qin2014contextual,chen2016combinatorial, li2016contextual, hwang2023combinatorial}, the reward of a set of items is defined as a function of the rewards of the individual arms, which allows optimization to exploit properties of the set such as diversity.
However, the key difference from the assortment bandit is that the expected reward of each individual item is unaffected by the other items in the set; consequently, the properties of the set influence only the reward, not the choice model.
In particular, when modeling diversity within the combinatorial bandit framework, the diversity parameter must be given in advance as a hyperparameter.
This is fundamentally different from our setting, in which diversity is embedded into the MNL choice probability model and the algorithm must estimate the corresponding parameters.

\section{Preliminaries}
\subsection{Notations and Definitions} 
We use $\| \xb \|_2$ to denote the $l_2$-norm of a vector $\xb \in \RR^d$ and $\|\xb\|_\Ab:=\sqrt{\xb^\top \Ab \xb}$ to denote the weighted norm of $\xb$ induced by a positive definite matrix $\Ab \in \RR^{d \times d}$. For a symmetric matrices $\Vb$ and $\Wb$ of the same dimensions, $\Vb \succeq \Wb$ means that $\Vb-\Wb$ is positive semi-definite.
For a positive integer $n$, we denote by $[n]$ the set $\{1, \ldots, n\}$.

\subsection{Problem Setting} \label{sec: problem setting}
\paragraph{Diversified Multinomial Logit (DMNL) Contextual Bandits.}
We consider a sequential assortment selection problem where, in each round $t \in [T]$, the agent receives a set of feature vectors $X_t := \{ \xb_{t1}, \ldots, \xb_{tN} \} \subset \RR^d$, which may be chosen adversarially.
The agent then offers an assortment of size of at most $K$, i.e., $S_t=\{i_1, \ldots,i_l\}\in \Scal := \{ S \subset [N] : | S | \le K \}$, where $l\le K$.
After presenting the assortment $S_t$, the agent observes the user's decision $i_t \in S_t \cup\{0\}$, where $0$ represents the ``outside option'', indicating that the user does not choose any item from $S_t$.
The selection $i_t$ is modeled by the MNL model~\citep{mcfadden1978modelling}.    

In the existing MNL bandit framework~\citep{cheung2017thompson, ou2018multinomial, oh2019thompson, oh2021multinomial,  chen2020dynamic, lee2024nearly, lee2025improved} the click probability that a user selects an item depends only on the relevance utility of the item and the other items in the assortment.
We instead consider an MNL choice model that incorporates the diversity of the assortment.

\begin{definition}[Diversified multinomial logit choice model]\label{def:DMNL_model}
   For each round $t\in[T]$, let $g_t:\mathcal{S} \to \RR_{\ge 0}$ be a given monotone submodular function, where $g_t(S)$ quantifies the diversity of the items in the assortment $S$ in round $t$.
    Then, the probability of selecting an item $i_t \in S_t \cup \{0\}$ in round $t$ is defined 
    as follows:
\begin{equation} \label{eq: DMNL}
    \begin{split}
        & \PP(i_t = i \mid X_t, S_t) =: p_t(i \mid S_t, \thetab^*, \lambda^*) := \dfrac{\exp(\xb_{ti}^\top \thetab^*)}{\exp(-\lambda^* g_t(S_t)) + \sum_{j \in S_t} \exp(\xb_{tj}^\top \thetab^*)} \, ,
        \\
        & \PP(i_t = 0 \mid X_t, S_t) =: p_t(0 \mid S_t, \thetab^*, \lambda^*) := \dfrac{\exp(-\lambda^* g(S_t))}{\exp(-\lambda^* g_t(S_t)) + \sum_{j \in S_t} \exp(\xb_{tj}^\top \thetab^*)} \, , 
    \end{split}
\end{equation}
where $\thetab^* \in \RR^d$ and $\lambda^* \in \RR$ are \textit{unknown} parameters that represent the degree of relevance and diversity, respectively. 
\end{definition}

\begin{remark}
    Inspired by the submodular bandit literature~\citep{YueGuestrin2011, chen2017interactive, Hiranandani20a}, our model captures diversity through monotone submodular functions $g_t$ (Definition~\ref{def:monotone} and~\ref{def:submodular}). 
    Common notions of diversity—such as counting the number of distinct categories, measuring coverage of item attributes (e.g., brands or genres), or quantifying dispersion in an embedding space via pairwise distances or spectral properties of a Gram matrix—naturally exhibit diminishing diversity returns: adding an item similar to those already selected contributes less than adding one from a new category or a distant region in feature space.
    Such measures are monotone and submodular by construction, so monotone submodular functions provide a unifying and behaviorally plausible abstraction for a broad class of real-world diversity notions.
\end{remark}

In DMNL bandit setting, the user choice follows the DMNL model. In other words, the choice feedback ${\bf{y}}_t:=(y_{t0}, y_{t1}, \ldots, y_{tl})$ follows the following MNL distribution: 
\begin{equation*}
    \yb_t \sim \mathrm{Multinomial} \{1, (p_t(0 \mid S_t, \thetab^*, \lambda^*), \, \ldots, \, p_t(i_l \mid S_t, \thetab^*, \lambda^*)) \} \, ,
\end{equation*}
where the parameter $1$ indicates that ${\bf{y}}_t$ is a single-trial sample, i.e. $y_{t0} + \sum_{k=1}^l y_{tk}=1$.

When two assortments consist of items with identical utility values, the one with a higher diversity score reduces the probability of the outside option being chosen. As a result, the probability of selecting each item in the assortment increases, leading to a higher expected reward for the assortment.
Conversely, if an assortment with a lower diversity score is offered, the outside option becomes more attractive, resulting in lower selection probabilities for the items in the assortment.

\begin{remark}\label{rmk:dmnl_generalize_mnl}  
    We note that the proposed DMNL model generalizes the existing MNL models~\citep{cheung2017thompson, ou2018multinomial, oh2019thompson, chen2020dynamic, oh2021multinomial, lee2024nearly, lee2025improved}. 
    When the diversity function $g_t(S)$ is constant across all assortments, the DMNL model reduces to the existing MNL model. In contrast, the existing MNL model does not allow the outside option’s attraction to vary with the offered set, as DMNL does.
\end{remark}
Then, the expected reward of an assortment $S$ in round $t$ is defined as follows: 
\begin{equation*} 
    R_t(S, \thetab^*, \lambda^*) := \sum_{i \in S} p_t(i \mid S, \thetab^*, \lambda^*) = \sum_{i \in S}  \dfrac{\exp(\xb_{ti}^\top \thetab^*)}{\exp(-\lambda^* g_t(S)) + \sum_{j \in S} \exp(\xb_{tj}^\top \thetab^*)}.    
\end{equation*}
The goal of the agent is to maximize the total expected reward, or equivalently, to minimize the cumulative regret over $T$ rounds, defined as total difference in expected reward between the offline optimal assortment $S_t^* = \argmax_{S \in \Scal} R_t(S, \thetab^*, \lambda^*)$ and the assortment $S_t$ offered by the agent.
\begin{remark}
    Previous works on MNL bandits~\citep{oh2019thompson, chen2020dynamic, oh2021multinomial, zhang2024contextual, lee2024nearly, lee2025improved} have also studied the non-uniform revenue setting, where in each round, the agent observes the item-wise revenues $\{r_{ti} \}_{i=1}^N$.
    In this setting, the optimal assortment is heavily influenced by high-revenue items, making the diversity of the assortment less critical to the reward. 
    In other words, encouraging diversity in the selected assortment may not align with the objective of reward maximization under non-uniform revenue. 
    By contrast, in the uniform revenue setting ($r_{ti}=1$) we study, maximizing diversity directly contributes to increasing the overall click probability and expected reward, making it a more appropriate objective (Figure~\ref{fig:DMNL_alg}). 
    We focus on the uniform revenue setting not only for analytical clarity but also because it allows us to isolate and rigorously study the effect of assortment diversity on user choice behavior. 
\end{remark}

\paragraph{$\gamma$-approximate Regret.}
In the case of uniform revenues in MNL bandit setting, maximizing the expected reward of an assortment over all sets $S \in \Scal$ reduces to selecting the $K$ items with the highest relevance utility.
However, unlike in the existing MNL bandit literature, such a top-$K$ selection strategy is no longer sufficient in the DMNL model.
Because the diversity of an assortment influences click probabilities, the expected reward depends not only on individual item relevance utilities but also on the overall diversity of the selected set.    
Thus, finding the optimal assortment requires evaluating all $\binom{N}{K}$ subsets, which is computationally prohibitive even when $\thetab^*$ and $\lambda^*$ are known.
In previous combinatorial bandit works~\citep{chen2013combinatorial, qin2014contextual,chen2016combinatorial, li2016contextual, hwang2023combinatorial, liu2024contextual, liu2025combinatorial}, such computational challenges are typically addressed by assuming access to a $\gamma$-approximate oracle, and the performance of algorithms is evaluated via cumulative $\gamma$-approximate regret rather than exact regret.
The $\gamma$-approximate regret at round $t$ is defined as $\Rcal^{\gamma}(t, S_t)=\gamma R_t(S_t^*,\thetab^*, \lambda^*) - R_t(S_t, \thetab^*, \lambda^*)$. 
Then, the alternative objective of the agent is to minimize the \textit{cumulative $\gamma$-regret}, defined as
\begin{equation*}
    \Rcal^{\gamma}(T):=\sum_{t=1}^T\Rcal^{\gamma}(t, S_t) =\sum_{t=1}^T \left[ \gamma R_t(S_t^*,\thetab^*, \lambda^*) - R_t(S_t, \thetab^*, \lambda^*) \right] \, .    
\end{equation*}

We adopt this standard evaluation metric but \textit{do not rely on an oracle}.
Instead, in Section~\ref{sec:ap_rate}, we explicitly construct a computationally efficient assortment selection strategy that serves as an approximation oracle.
Specifically, we show that the item-wise greedy construction (Eq.\eqref{eq: optimistic greedy assortment}) achieves a provable approximation ratio $\gamma \ge 1 - \tfrac{1}{e+1}$ with respect to the offline optimum, while requiring only $\Ocal(NK)$ computation per round.
This result enables practical deployment without sacrificing theoretical guarantees.

Following prior work on MNL bandits, we make the following boundedness assumption.
\begin{assumption}[Boundedness] \label{assump_bdd}
    We assume that $\|[\thetab^*,  \lambda^*]\|_2  \le 1$, $\|\xb_{ti}\|_2\le 1$ and $0 \le g(S_t) \le 1$ for all $t\in [T], i\in [N]$, and there exists a constant $l>0$ such that $l <\lambda^*$. 
\end{assumption}
The boundedness in Assumption~\ref{assump_bdd} is standard in the MNL bandit literature~\citep{oh2019thompson, oh2021multinomial, perivier2022dynamic, zhang2024online, lee2024nearly, lee2025improved}.
Since we focus on scenarios where the diversity of an assortment influences user choice behavior, we assume that the effect of diversity is strictly positive—i.e., the minimum effect of diversity is bounded below by a positive constant $l$.
We note that our proposed algorithm does not require the knowledge of $l$.

\section{Main Results}\label{sec:main}

\subsection{Approximation Guarantee of Item-wise Greedy Assortment}\label{sec:ap_rate}

In this section, as an instantiation of $\gamma$-approximate oracle, we show that the item-wise greedy construction can approximate the offline optimal assortment reward $ R_t(S_t^*, \thetab^*, \lambda^*)$.
The item-wise greedy construction refers to a process that incrementally builds a solution by repeatedly adding the item with the highest marginal gain. To be specific, for any $k \in [K]$, the $k$-th element added during the item-wise greedy construction is: 
\begin{equation} \label{eq: item-wise optimistic construction}
    a_{k} = \argmax_{a \in [N] \setminus \{a_1, \dots, a_{k-1}\}} R_t(\{a_{1}, \dots, a_{k-1}\} \cup \{a\}, \thetab^*, \lambda^*) \, .
\end{equation}
\vspace{-10pt}
\begin{figure}[h!]
    \centering
    \includegraphics[width=0.4\textwidth, trim=11cm 9cm 12cm 8.2cm, clip]{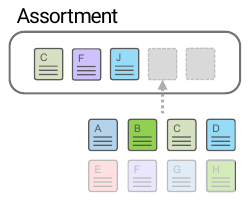}
    \vspace{-10pt}
    \caption{Item-wise construction}
    \label{fig:item-wise}
\end{figure}

It is well known that if the expected reward function $R_t$ is a monotone submodular function with respect to $S \in \Scal$, the item-wise greedy construction in Eq.\eqref{eq: item-wise optimistic construction} can achieve a $(1 - \frac{1}{e})$-approximation rate~\citep{Nemhauser1978}, and that obtaining an approximation rate better than $(1 - \frac{1}{e})$ is intractable~\citep{Feige1998}. 

On the other hand, since $R_t(S,\thetab^*, \lambda^*)$ increases as more items are added to $S$, it is a monotone set function (Definition~\ref{def:monotone}). Moreover,
by the definition of submodular functions (Definition~\ref{def:submodular}), the LogSumExp function of the form  $\log \left( \sum_{i\in S}\exp (\xb_{ti}^\top \thetab^*) \right)$ is submodular.
The expected reward of an assortment $S$ can be written as $R_t(S, \thetab^*, \lambda^*) = \frac{\exp(f_t(S))}{1 + \exp(f_t(S))}$, where $f_t(S):= \log \left( \sum_{i\in S}\exp (\xb_{ti}^\top \thetab^* + \lambda^* g_t(S)) \right)=\log \left( \sum_{i\in S}\exp (\xb_{ti}^\top \thetab^* ) \right)+ \lambda^* g_t(S)$. 
Since $f_t(S)$ is a non-negative sum of submodular functions, it remains submodular.
Moreover, $\frac{\exp(x)}{1 + \exp(x)}$ is a non-decreasing concave function for $x > 0$, and it is known that the composition of a submodular function with a non-decreasing concave function preserves submodularity (Proposition~\ref{prop:concavesubmodular}).
Therefore, $R_t(S, \thetab^*, \lambda^*)$ is also a submodular function. 
 
Consequently, the item-wise greedy construction in Eq.\eqref{eq: item-wise optimistic construction} achieves at least a $(1 - \tfrac{1}{e})$ approximation rate.
This approximation guarantee holds for general monotone submodular functions under cardinality constraints ($|S| \le K$).
However, we further show that by leveraging the specific structure of the MNL model, it is possible to obtain an approximation ratio that strictly improves upon the standard $(1 - \tfrac{1}{e})$ rate.

\begin{theorem}[Improved approximation rate for MNL submodular function] \label{thm:ap_rate}
    Let $S_t^{\text{greedy}}$ be the solution from Eq.\eqref{eq: item-wise optimistic construction}.  
    For any $t \ge 1$, if $g_t$ is monotone and submodular, then we have
    \begin{equation*}
        R_t(S_t^{\text{greedy}}, \thetab^*, \lambda^*) \geq \frac{\psi_0 (1 + \psi_0^\alpha)}{\psi_0^\alpha (1 + \psi_0)} \cdot R_t(S_t^*, \thetab^*, \lambda^*) \, ,
    \end{equation*}
    where $\psi_0$ is a solution to the equation $x^\alpha=\alpha x + \alpha-1$, with $\alpha=\frac{e}{e-1}$.
\end{theorem}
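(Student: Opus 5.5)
The plan is to reduce the greedy guarantee on $R_t$ to a greedy guarantee on the log-attraction $f_t(S)=\log\bigl(\sum_{i\in S}\exp(\xb_{ti}^\top\thetab^*)\bigr)+\lambda^* g_t(S)$. We then convert that guarantee back through the logistic link using its exact shape.

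\textbf{Step 1: greedy on $R_t$ is greedy on $f_t$.} Write $R_t(S,\thetab^*,\lambda^*)=\psi(S)/(1+\psi(S))$ with $\psi(S):=\exp(f_t(S))$. The map $x\mapsto x/(1+x)$ is strictly increasing, so the maximizer in Eq.\eqref{eq: item-wise optimistic construction} at every step is also the maximizer of $f_t(\{a_1,\dots,a_{k-1}\}\cup\{a\})$. Hence $S_t^{\text{greedy}}$ is exactly the greedy solution for $f_t$ under the cardinality constraint $|S|\le K$.

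\textbf{Step 2: Nemhauser on $f_t$.} As argued before the theorem, $f_t$ is monotone and submodular, being a log-sum-exp plus $\lambda^* g_t$ with $\lambda^*>0$. The standard bound of \citet{Nemhauser1978} then gives
\[
f_t(S_t^{\text{greedy}})\ge (1-\tfrac1e)\,f_t(S_t^*)=f_t(S_t^*)/\alpha .
\]
This holds relative to a normalization $f_t(\emptyset)=0$. Exponentiating yields $\psi(S_t^{\text{greedy}})\ge \psi(S_t^*)^{1/\alpha}$.

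\textbf{Step 3: ratio bound.} Set $x:=\psi(S_t^*)^{1/\alpha}$ and use monotonicity of $x\mapsto x/(1+x)$ once more:
\[
\frac{R_t(S_t^{\text{greedy}})}{R_t(S_t^*)}\ \ge\ \frac{x/(1+x)}{x^\alpha/(1+x^\alpha)}=\frac{x(1+x^\alpha)}{x^\alpha(1+x)}=:\phi(x).
\]
It remains to lower bound $\phi$ over $x>0$.

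\textbf{Step 4: minimize $\phi$.} Work with $\log\phi(x)=\log x+\log(1+x^\alpha)-\alpha\log x-\log(1+x)$. Setting its derivative to zero and multiplying through by $x(1+x)(1+x^\alpha)$ gives
\[
(1-\alpha)(1+x)(1+x^\alpha)+\alpha x^\alpha(1+x)-x(1+x^\alpha)=0,
\]
which simplifies to $x^\alpha=\alpha x+\alpha-1$. Its positive root other than $x=1$ is $\psi_0$.

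We also check the endpoints:
\begin{itemize}
\item $\phi(x)\to 1$ as $x\to0$;
\item $\phi(x)\to 1$ as $x\to\infty$;
\item $\phi(1)=1$.
\end{itemize}
So the interior critical point $\psi_0$ is the global minimizer, and $\phi(x)\ge \phi(\psi_0)$, which is the claimed constant. A numerical evaluation then confirms the value is at least $1-\frac1{e+1}$.

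\textbf{Main obstacle.} The hard step is the normalization in Step 2. Literally $f_t(\emptyset)=\log 0=-\infty$, and the singleton values $f_t(\{a\})=\xb_{ta}^\top\thetab^*+\lambda^* g_t(\{a\})$ may be negative. Nemhauser's bound in the form $f(G)\ge(1-1/e)f(S^*)$ requires a monotone submodular function with $f(\emptyset)=0\le f$.

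What I would try first is to define $f_t(\emptyset):=0$ and verify monotonicity and submodularity of this extension. The extension fails exactly when some singleton has $f_t(\{a\})<0$, i.e.\ $\psi(\{a\})<1$, so that case needs separate treatment.

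If $\psi(S_t^*)\le 1$, I would argue directly. Submodularity of $\psi$-type sums together with the greedy first pick gives $\psi(G)\ge \psi(S^*)/K$ style bounds. Alternatively, I would use the shifted guarantee $f(G)-f(\{a_1\})\ge(1-1/e)\bigl(f(S^*)-f(\{a_1\})\bigr)$, obtained by running Nemhauser on the contracted function $S\mapsto f(S\cup\{a_1\})-f(\{a_1\})$. This bound is at least as strong as $\psi(G)\ge\psi(S^*)^{1/\alpha}$ whenever $\psi(\{a_1\})\ge 1$. When $\psi(\{a_1\})<1$, the resulting bound is a convex combination in the exponent that can still be fed into the same function $\phi$.
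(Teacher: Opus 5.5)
Your Steps 1--4 are the paper's argument. The paper likewise writes $R_t=\psi/(1+\psi)$ with $\psi=\exp(f_t)$. It invokes Nemhauser's bound to get $f_t(S_t^{\text{greedy}})\ge(1-\tfrac1e)f_t(S_t^*)$, and it optimizes the same one-variable ratio, parametrized by $\psi(S_t^{\text{greedy}})$ instead of $\psi(S_t^*)^{1/\alpha}$, which is equivalent.

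The paper applies Nemhauser to $f_t$ without comment, and the obstacle you flag is a genuine hole in that argument. First, $f_t(\emptyset)=-\infty$. Second, whenever $f_t(S_t^*)<0$ the inequality is false outright, because its right-hand side exceeds $f_t(S_t^*)\ge f_t(S_t^{\text{greedy}})$. This case can occur even under Assumption~\ref{assump_bdd} when $K=2$.

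Your proposal, however, leaves this step open, and none of your sketched repairs closes it.
\begin{itemize}
\item Setting $f_t(\emptyset):=0$ can fail even when no singleton is negative. Nemhauser's first step needs $f_t(S^*)\le\sum_{o\in S^*}f_t(\{o\})$. This fails for two items of utility $0$ with $g_t\equiv c$ and $\lambda^*c<\log 2$: it reads $\log 2+\lambda^*c\le 2\lambda^*c$, even though both singletons are nonnegative.
\item Contracting at $a_1$ leaves $K-1$ greedy steps against a comparator of size $K$. The recursion then only gives $f_t(S^*)-f_t(G)\le(1-\tfrac1K)^{K-1}\bigl(f_t(S^*)-f_t(\{a_1\})\bigr)$. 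Since $(1-\tfrac1K)^{K-1}>\tfrac1e$ (it equals $\tfrac12$ at $K=2$), you do not get the factor $\tfrac1e$ you claim.
\item Bounds of the form $\psi(G)\ge\psi(S^*)/K$ give a ratio of order $1/K$, not the stated constant.
\end{itemize}

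More fundamentally, in the regime $\psi(S_t^*)<1$ no repair can work without the boundedness in Assumption~\ref{assump_bdd}. This contradicts the remark after the theorem that it holds for every $[\thetab,\lambda]\in\mathbb{R}^{d+1}$. Here is a counterexample.
\begin{itemize}
\item Take three items with a common utility $u$, and set $K=2$.
\item Let $g_t$ be weighted coverage: item $1$ covers $\{q,r\}$, item $2$ covers $\{p,q\}$, item $3$ covers $\{r,s\}$. Use weights $w_q=w_r=\tfrac14+\epsilon$ and $w_p=w_s=\tfrac14-\epsilon$ for small $\epsilon>0$.
\item Since $f_t(S)=u+\log|S|+\lambda g_t(S)$, greedy on $f_t$ is greedy on $g_t$. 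It takes item $1$ and ends at $g_t=\tfrac34+\epsilon$, whereas $g_t(\{2,3\})=1$.
\item Letting $u\to-\infty$ gives $R_t(S_t^{\text{greedy}})/R_t(S_t^*)\to e^{-\lambda(1/4-\epsilon)}$. This is about $0.78<0.88$ already at $\lambda=1$, and it tends to $0$ as $\lambda$ grows.
\end{itemize}
So a complete proof has to use Assumption~\ref{assump_bdd}. For instance, $\xb_{ti}^\top\thetab^*\ge-1$ and $\lambda^*g_t\ge0$ give $\psi(S)\ge|S|/e$, which confines $\psi(S_t^*)<1$ to $K\le2$. The proof must still justify or replace the $(1-\tfrac1e)$ bound for the unnormalized $f_t$ when $f_t(S_t^*)\ge0$.

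There are also two smaller slips in Step 4.
\begin{itemize}
\item $x=1$ is not a root of $x^\alpha=\alpha x+\alpha-1$; the right side equals $2\alpha-1>1$ there.
\item $\phi(x)=x^{1-\alpha}(1+x^\alpha)/(1+x)$ tends to $+\infty$, not $1$, as $x\to0^+$.
\end{itemize}
Your conclusion survives. We have $(\log\phi)'(x)=q(x)/[x(1+x)(1+x^\alpha)]$ with $q(x)=x^\alpha-\alpha x-(\alpha-1)$. This is negative on $(0,\psi_0)$ and positive afterwards, and $q(1)=2-2\alpha<0$ gives $\psi_0>1$. Substituting the root equation, as the paper does, gives $\phi(\psi_0)=\bigl(1+\tfrac{1}{e\psi_0}\bigr)^{-1}>\tfrac{e}{e+1}$ in closed form, so no numerics are needed.
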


Theorem~\ref{thm:ap_rate} holds for any parameter configuration $[\thetab, \lambda] \in \RR^{d+1}$, provided that both the item-wise greedy construction and the optimal assortment are evaluated under the same parameters.
Moreover, since a crude lower bound for $\frac{\psi_0 (1 + \psi_0^\alpha)}{\psi_0^\alpha (1 + \psi_0)}$ is $\frac{1}{e+1}$, for simplicity, we may state that the item-wise greedy construction in Eq.\eqref{eq: item-wise optimistic construction} achieves at least a $(1 - \frac{1}{e+1})$-approximate rate.
This surpasses the existing $(1 - \frac{1}{e})$ approximation rate attainable under general submodularity assumption alone. 
The improvement arises from the structural properties of the MNL reward function, and is of standalone theoretical interest.
The detailed proof is provided in Appendix~\ref{ap_sec:pf_thm_ap_rate}.

\subsection{Algorithm}
In this section, we propose $\ofudmnl$, an algorithm that leverages the \textit{optimism-in-the-face-of-uncertainty} (OFU) principle in estimating the unknown relevance utility and diversity parameters. 
The complete process is described in Algorithm~\ref{alg:ofu_dmnl}, consisting of three stages.

\begin{algorithm}[t!]
\caption{\texttt{OFU-DMNL}}
\label{alg:ofu_dmnl}
    \begin{algorithmic}[1]
    \State \textbf{Input:} diversity function $\{g_t\}_{t\ge1}$, regularization parameter $\Lambda$, confidence radius $\{\alpha_t\}_{t\ge1}$, step size $\eta$, exploration parameter $\nu$
    \State \textbf{Initialization:} $\Hb_1= \Lambda \Ib_{d+1}$  and $\mathbf{w}_1$ at any point in $\mathcal{W}$.
    \For{$t = 1,\dots,T$}
        \If{$\|[\zero_d, 1]\|_{{\Hb_t^{-1}}} \ge  \nu{ \hat{\lambda}_t  \over \alpha_t}$}  
            \State Randomly choose $S_t \sim \mathrm{Unif}(\Scal)$ with $|S_t| = K$
        \Else
            \State $S_t \gets \emptyset$
            \For{$k = 1, \ldots, K$}
                \State $a_{t, k} = \argmax_{a \in [N] \setminus S_t} \tilde{R}_t(\{a_{t,1}, \dots, a_{t, k-1}\} \cup \{a\})$
                \State $S_t \gets S_t \cup \{a_{t,k}\}$
            \EndFor
        \EndIf
        \State Offer $S_t$ and observe $y_t$
        \State Update $\tilde{\Hb}_t = \Hb_t + \eta \,\mathcal{G}_t(\mathbf{w}_t)$, $\mathbf{w}_{t+1}$, and $\Hb_{t+1}=\Hb_t + \mathcal{G}_t(\mathbf{w}_{t+1})$
    \EndFor
    \end{algorithmic}
\end{algorithm}

\paragraph{Diversity-augmented parameter estimation.}
Let $\zb_{ti}(S) := [\xb_{ti}, \, g_t(S)] \in \mathbb{R}^{d+1}$ be a diversity-augmented feature vector, and $\wb^* := [\thetab^*, \lambda^*] \in \mathbb{R}^{d+1}$. 
Then, the DMNL probability in Eq.\eqref{eq: DMNL} can be represented by
\begin{equation*} 
    \begin{split}
        & p_t(i \mid S, \thetab^*, \lambda^*) =: p_t(i \mid S, \wb^*)
        = \dfrac{\exp(\zb_{ti}(S)^\top \wb^*)}{1 + \sum_{j \in S_t} \exp(\zb_{tj}(S)^\top \wb^*)} \, ,
        \\
        & p_t(0 \mid S, \thetab^*, \lambda^*) =: p_t(0 \mid S, \wb^*)
        = \dfrac{1}{1 + \sum_{j \in S_t} \exp(\zb_{tj}(S)^\top \wb^*)} \, .
    \end{split}
\end{equation*}
Consequently, parameter estimation in the DMNL model—namely, $(\thetab^*, \lambda^*)$—can be reformulated as estimating a single parameter vector $\wb^*$ using diversity-augmented feature vectors $\zb_{ti}(S)$, similarly to the procedure used in existing MNL models.
Adapting the computationally efficient parameter estimation used in~\citet{lee2024nearly}, we use the online mirror descent algorithm to estimate the parameter $\wb^*$.
Let us define the multinomial logit loss function at round $t$ as $\ell_t(\mathbf{w}) := - \sum_{i \in S_t} y_{ti} \log p_t(i \mid S_t, \mathbf{w})$, and estimate the true parameter $\wb^*$ as follows:
\begin{equation} \label{eq:omd_update}
    [\hat{\thetab}_{t+1}, \hat{\lambda}_{t+1}]=\wb_{t+1} =  \argmin_{\wb \in \Wcal}
    \Big\{ \langle \nabla \ell_t(\mathbf{w}_t), \mathbf{w} \rangle 
    + \frac{1}{2\eta} \| \mathbf{w} - \mathbf{w}_t \|_{\tilde{\Hb}_t}^2 \Big\},
    \quad \forall t \geq 1 \, ,
\end{equation}
where $\Wcal := \{ \wb \in \RR^{d+1}: \| \wb \|_2 \le 1 \}$, $\eta > 0$ is the step-size parameter, and $\tilde{\Hb}_t := \Hb_t + \eta \, \mathcal{G}_t(\mathbf{w}_t)$, with $\Hb_t := \Lambda \Ib_{d+1} + \sum_{s=1}^{t-1} \mathcal{G}_s(\mathbf{w}_{s+1})$ and 
\begin{equation*}
    \mathcal{G}_t(\mathbf{w}) 
    = \sum_{i \in S_t} p_t(i \mid S_t, \mathbf{w}) \zb_{ti}(S_t) \zb_{ti}(S_t)^\top
        - \sum_{i \in S_t} \sum_{j \in S_t} p_t(i \mid S_t, \mathbf{w}) p_t(j \mid S_t, \mathbf{w}) \zb_{ti}(S_t) \zb_{tj}(S_t)^\top \, .
\end{equation*}
Based on the estimated parameter $\wb_t$ and a suitably chosen confidence radius $\alpha_t$, we have with high probability that $\| \wb_t - \wb^* \|_{\Hb_t} \le \alpha_t$ (Lemma 1 in~\citet{lee2024nearly}).
This concentration bound enables us to construct an optimistic estimate of the diversity-augmented utility by evaluating it over the diversity-augmented feature vector as:
\begin{equation} \label{eq:diversity-augmented ucb}
    \ucb(\zb_{ti}(S)) := [\xb_{ti}, g_t(S)]^\top \wb_t + \alpha_t \| [\xb_{ti}, g_t(S)] \|_{\Hb_t^{-1}} \, .
\end{equation}
Based on the optimistic utility estimates $\ucb(\zb_{ti}(S))$, we formulate the diversified optimistic expected reward for a given assortment $S$ as:
\begin{equation} \label{eq: diversified optimistic expected reward}
    \tilde{R}_t(S) := \sum_{i \in S} \frac{\exp(\ucb(\zb_{ti}(S)))}{1 + \sum_{j \in S} \exp(\ucb(\zb_{tj}(S)))} \, .
\end{equation}

As discussed in Section~\ref{sec: problem setting}, in the existing MNL bandits with uniform revenues,
it is sufficient to construct an assortment by selecting the top-$K$ items with the highest optimistic utility estimates,
since such an assortment serves as an optimistic estimate of the offline optimal reward.
However, in the DMNL model, the diversity-augmented feature vector $\zb_{ti}(S)$ depends on the entire assortment $S$,
which means that the optimistic reward $\tilde{R}_t(S)$ cannot be computed by evaluating each item in isolation.
As a result, identifying the assortment that maximizes $\tilde{R}_t(S)$ requires evaluating $\binom{N}{K}$ combinations, which is computationally prohibitive for large $N$ or $K$.
In the following paragraph, we introduce a computationally efficient method—serving as the main component of our algorithm—for approximating $\max_{S} \tilde{R}_t(S)$ without exhaustive enumeration.

\paragraph{Item-wise optimistic construction.}
As discussed in Section~\ref{sec:ap_rate}, the item-wise greedy construction using the true parameters $[\thetab^*, \lambda^*]$ achieves at least $(1-{1 \over e+1})$-approximation to the optimal assortment. 
However, since the true model parameters are unknown to the agent, we replace them with their estimates.
In particular, we use an optimistic estimate of the expected reward to guide assortment construction, which encourages exploration over uncertain items while preserving computational efficiency.
Using the diversified optimistic reward defined in Eq.\eqref{eq: diversified optimistic expected reward}, we apply an item-wise optimistic  construction:
for each $k \in [K]$, 
\begin{equation} \label{eq: optimistic greedy assortment}
    a_{k} = \argmax_{a \in [N] \setminus \{a_1, \dots, a_{k-1}\}} \tilde{R}_t(\{a_{1}, \dots, a_{k-1}\} \cup \{a\}) \, .
\end{equation}
This procedure mirrors the ideal greedy construction under the true model, but substitutes the unknown parameters with optimistic estimates—hence the name \textit{item-wise optimistic construction.}
The agent then offers the assortment $S_t$ obtained via Eq.\eqref{eq: optimistic greedy assortment}.
We note that the complexity of the item-wise optimistic construction in Eq.\eqref{eq: optimistic greedy assortment} is $\Ocal(NK)$ for each round.

\paragraph{Adaptive exploration.}
As the two parameters $\thetab$ and $\lambda$ are estimated jointly via the diversity-augmented feature vector, their individual uncertainties cannot be disentangled.
However, 
the joint confidence width may not provide a sufficiently tight uncertainty estimate for $\lambda^*$ alone.
This looseness in the confidence interval may result in a failure to ensure the optimism of the item-wise optimistic construction in Eq.\eqref{eq: optimistic greedy assortment}.
To address this, we employ an adaptive exploration that triggers when the confidence on the diversity parameter estimate is deemed insufficient.
We show that the number of rounds is at most $\Ocal(\sqrt{d} \log T)$ (Lemma~\ref{lem:num_init}).

\subsection{Regret Bound}
In this section, we establish an upper bound on the cumulative $\gamma$-approximate regret incurred by the proposed algorithm.
To facilitate the theoretical analysis, we first present a set of technical assumptions under which the regret bound is derived.

\begin{assumption}[Non-degeneracy] 
\label{assm:non-degeneracy}
    The feature set $X_t = \{ \xb_{t1}, \ldots, \xb_{tN} \}$ spans $\RR^d$ for all $t \in [T]$, $g_t$ is not a constant over $\Scal_K := \{ S \subset[N]: |S| =K \}$, i.e., $\exists S, S' \in \Scal_K$ such that $g_t(S) \neq g_t(S')$.
\end{assumption}

\begin{definition}[$\omega$-strict submodular function]
\label{def:strict_submodular}
    For $\omega \in (0,1)$, a submodular function $f$ is said to be $\omega$-strict submodular if and only if for every $S \subseteq S'$ with $f(S) \ne f(S')$ and every $e \notin S'$, $f$ satisfies 
    \begin{equation*}
        f(S' \cup \{e\}) - f(S') \leq (1 - \omega)\left( f(S \cup \{e\}) - f(S) \right) \, .
    \end{equation*}
\end{definition}

\begin{assumption}[Strict submodularity]
\label{assump_str_submod}
    The diversity score function $g_t$ is monotone and $\omega$-strict submodular for some $\omega>0$.
\end{assumption}

\paragraph{Discussions of assumptions.}
Assumption~\ref{assm:non-degeneracy} is used to ensure the diversity-augmented feature set $\{[\xb_{ti}, g_t(S)]\}_{i \in [N], S \in \Scal_K}$ spans $\RR^{d+1}$.
Under Assumption~\ref{assm:non-degeneracy}, there exist $S_1, S_2 \in \Scal_K$ such that $S_1 \cap S_2 \neq \emptyset$ and $g_t(S_1) \neq g_t(S_2)$.
Let $i_0 \in S_1 \cap S_2$. 
Then we have $[\zero_d,1] = {1 \over g_t(S_1)-g_t(S_2)}\left( [\xb_{ti_0}, g_t(S_1)] - [\xb_{ti_0},g_t(S_2)] \right)$.
This shows that the $(d+1)$-th unit vector $[\zero_d, 1] \in \RR^{d+1}$ can be expresses as a linear combination of diversity-augmented features. 
Moreover, the first d-dimensional components of $\RR^{d+1}$ can be spanned by the set $\{ [\xb_{ti},0] \}_{i \in [N]}$ due to Assumption~\ref{assm:non-degeneracy}. 
Therefore, the diversity-augmented feature set $\{[\xb_{ti}, g_t(S)]\}_{i \in [N], S \in \Scal_K}$ spans $\RR^{d+1}$.
With the diversity-augmented feature set spanning $\RR^{d+1}$, 
we can define a constant $\sigma_0 > 0$ such that for all $t \in [T]$, 
\begin{equation}\label{eq:sigma_0}
    {1\over |\Scal_K|\cdot K} \sum_{S \in \Scal_K} \sum_{i \in S} [\xb_{ti}, g_t(S)] [\xb_{ti},g_t(S)]^\top \succeq \sigma_0 \Ib_{d+1} \, .
\end{equation}
We note that this type of non-degeneracy condition is also commonly used in prior works on GLM and MNL bandits~\citep{li2017provably, chen2020dynamic, oh2021multinomial}.

The strict submodularity in Assumption~\ref{assump_str_submod} implies that for any $S \subsetneq S'$ and any element $e \notin S$, the marginal gain of $g_t$ from adding $e$ to $S'$ is strictly smaller than that from adding $e$ to $S$.
This condition more explicitly captures the law of diminishing returns than the standard definition of submodularity (Definition~\ref{def:submodular}).
Unlike prior submodular bandit works~\citep{YueGuestrin2011, chen2017interactive, Hiranandani20a}, the DMNL bandit setting assumes that the agent does not receive intermediate feedback on the diversity score during the construction of the assortment.
Moreover, the agent does not observe the marginal gain in reward for each item in the assortment, which significantly increases the difficulty of learning. 
On the other hand,~\citet{YueGuestrin2011} assume access to the marginal contribution of each item after the assortment is offered, while~\citet{chen2017interactive} receive interactive feedback on the gain of each added item during the assortment construction process.
Similarly,~\citet{Hiranandani20a} assume that in a cascading setting, the agent receives feedback corresponding to the utility gain of adding new items to a previously selected subset.
In contrast, in the DMNL bandit setting, the agent only observes the final reward associated with the offered assortment $S_t$, making the problem significantly more challenging.
However, under the strict submodularity assumption we show that it is possible to recover submodularity of $\tilde{R}_t(S)$ after sufficient exploration, even \textit{without intermediate feedback}.
Please refer to Appendix~\ref{ap_sec:strict_submodular} for a detailed discussion on strict submodularity.

We first present a lower bound for the worst-case expected regret in the DMNL setting. 
\begin{theorem}[Regret lower bound]\label{thm:lb}
    Let Assumption~\ref{assump_bdd}, ~\ref{assm:non-degeneracy}, and ~\ref{assump_str_submod} hold. Suppose $d$ is divisible by $4$ and $T\ge C\cdot d^4(K+1)^2/K$ for some constant $C>0$. Then, in the DMNL bandit setting, for any policy $\pi$, there exists a worst-case problem instance such that the expected regret of $\pi$ is lower bounded as 
    \begin{equation*}
        \sup_{\thetab, \lambda}\mathbb{E}_{\thetab, \lambda}^\pi \left[\sum_{t=1}^T R_t (S_t^*, \thetab, \lambda)-R_t(S_t, \thetab,\lambda) \right]\ge \Omega \left({d \sqrt{T \over K} }\right). 
    \end{equation*}
\end{theorem}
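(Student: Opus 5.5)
The plan is to reduce the lower bound to the known MNL lower bound of \citet{lee2024nearly} (uniform revenue, $\Omega(d\sqrt{T/K})$). The DMNL model contains such instances once the diversity term is neutralized. The bound holds for the supremum over parameters, so it suffices to exhibit one family of instances, all satisfying Assumptions~\ref{assump_bdd}, \ref{assm:non-degeneracy} and \ref{assump_str_submod}, on which every policy suffers the claimed regret.

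\textbf{Step 1: make the diversity term harmless.} I would pick $g_t(S)$ depending only on $|S|$, for instance $g_t(S)=c\,\phi(|S|)$ with $\phi$ strictly concave and increasing and $c>0$ small. Such a function is monotone and submodular. It is also $\omega$-strict submodular, because marginal gains strictly decrease with cardinality whenever $S\subsetneq S'$. However, it is constant on $\Scal_K$, which violates Assumption~\ref{assm:non-degeneracy}.

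To repair this, I would give $g_t$ a tiny category-coverage term. Add a dummy item set whose coverage contributes, and scale it by $\epsilon\to0$ (or by an amount $\ll 1/\sqrt{T}$). This keeps the regret effect negligible while $g_t$ stays nonconstant on $\Scal_K$ and strictly submodular.

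I would then fix $\lambda$ to a known constant above $l$ and rescale $\thetab$ so that $\|[\thetab,\lambda]\|_2\le1$. After this, every full-size assortment has outside-option weight $\exp(-\lambda g_t(S))\approx$ const. The instance is then an MNL instance with a rescaled outside utility, and the optimal assortment is the top-$K$ by $\xb^\top\thetab$. Adding items never hurts here, so $|S_t^*|=K$.

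\textbf{Step 2: embed the hard MNL construction.} Following \citet{lee2024nearly}, I would split $d$ into $d/4$ blocks. I would take $\thetab\in\{\pm\mu\}^{d}$ structure (or a hypercube over subsets), with $\mu\asymp\sqrt{d/(KT)}$-type scaling, and build contexts so that within each block the learner must identify the sign pattern. The divisibility by $4$ and the condition $T\gtrsim d^4(K+1)^2/K$ are exactly what ensure $\mu$ is small enough that $\|\thetab\|\le 1$ still leaves room for $\lambda>l$, and that the per-round KL divergence between neighboring instances is $O(\mu^2 K/(K+1)^2\cdot\ldots)$.

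Next I would compute the per-round regret gap for choosing a wrong item. In the uniform-revenue MNL with $K$ items and a constant outside weight, this gap is of order $\mu/K$ per wrong item (times $K/(K+1)^2$-type factors). The KL divergence of the multinomial feedback between $\thetab$ and $\thetab$ with one coordinate flipped is bounded by $O(\mu^2)\times$ the expected number of times that coordinate is ``exposed''.

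\textbf{Step 3: Assouad/averaging argument.} I would average regret over the hypercube of instances and apply Pinsker's inequality coordinatewise. Summing over the $d$ coordinates and optimizing $\mu$ gives $\Omega(d\sqrt{T/K})$.

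The main obstacle is verifying that the diversity perturbation does not change the KL and gap computations beyond constants. Because $g_t$ (nearly) depends only on $|S|$ and $\lambda$ is fixed and known, the multinomial probabilities coincide with an MNL with outside weight $e^{-\lambda c\phi(K)}=\Theta(1)$. The constants in the MNL lower bound only scale by this $\Theta(1)$ factor. I expect the most care to be needed in checking that the tiny coverage term, included for non-degeneracy, changes regret by at most $o(d\sqrt{T/K})$; scaling it by $1/T$ suffices.
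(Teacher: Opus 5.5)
Your route is sound in outline, but it differs from the paper's. The paper does not neutralize diversity. It fixes $\lambda=1/2$, keeps the contexts of \citet{lee2024nearly} ($K$ copies of $x_U$ for each $d/4$-subset $U$), and takes the $0/1$ function with $g(S)=0$ iff $S$ consists only of copies of $x_0=x_{U_0}$, $U_0=[d/4]$.

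That $g$ has four useful properties:
\begin{itemize}
\item It is monotone.
\item It is $\omega$-strictly submodular for every $\omega$, because the left side of the defining inequality vanishes whenever $g(S)\neq g(S')$.
\item It is nonconstant on $\mathcal{S}_K$ with no perturbation.
\item It keeps the outside weight $e^{-\lambda g(S)}\in\{e^{-1/2},1\}$, so the KL bound of \citet{lee2024nearly} carries over up to constants.
\end{itemize}
The price is that diversity changes the optimum at constant order for $V=U_0$. There the optimum is $K-1$ copies of $x_0$ plus one $x_{U'}$ with $|U'\cap U_0|=d/4-1$, which forces the separate Case~2 in Lemma~\ref{lem:lb_1}.

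Your near-cardinality $g$ makes every hard instance an MNL instance in disguise with (nearly) top-$K$ optimum. You avoid that case analysis and can take any $\lambda\in(l,1)$. However, your hard family shows no real relevance--diversity interaction, and you owe some bookkeeping.

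\textbf{Reduction is not black-box.} \citet{lee2024nearly} cannot be invoked as a black box, because your outside weight depends on $|S|$ and on the perturbation. You need to rerun two of their steps:
\begin{itemize}
\item The gap step. The homogenized assortment $\tilde S_t$ still dominates $S_t$, since $g(\tilde S_t)\ge g(S_t)$ up to the perturbation.
\item The KL step, now with $v_0(S)=e^{-\lambda g(S)}\in[e^{-\lambda},1]$.
\end{itemize}
Run the averaging argument directly on the perturbed family, where all instances share the same $g$. Then the perturbation costs only $O(\lambda\epsilon' T)$ regret, and your choice $\epsilon'=1/T$ is enough.

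\textbf{Scaling is off.} The coordinate size must be $\epsilon\asymp\sqrt{dK/T}$; the paper takes $\epsilon^2=d(1+K)^2/(2CKT)$. This is driven by the $K/(1+K)^2$ factor in the per-round KL, and the per-round gap is of order $\delta\epsilon/(K\sqrt d)$. With your stated $\mu\asymp\sqrt{d/(KT)}$ you would only get $d\sqrt{T}/K^{3/2}$. It is $\epsilon<d^{-3/2}$ for the correct choice that produces the hypothesis $T\gtrsim d^4(K+1)^2/K$.
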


\paragraph{Discussion of Theorem~\ref{thm:lb}.}
The theorem shows that the regret lower bound of our DMNL bandit setting matches that of MNL bandits under uniform revenues~\citep{lee2024nearly}. In our setting, the choice probabilities depend on the value of the assortment’s diversity function, and therefore the existing lower-bound arguments for MNL bandits cannot be applied directly. Specifically, we consider a non-constant, strict submodular $g_t$ and derive an inequality for the instantaneous regret lower bound, even though the optimal assortment includes an item that is not individually optimal in terms of their relevance scores. The detailed proof is provided in Appendix~\ref{ap_sec:lb}. 

We present the main result: the cumulative $\gamma$-approximate regret bound for Algorithm~\ref{alg:ofu_dmnl}.
\begin{theorem} [Regret upper bound of \texttt{OFU-DMNL}]\label{thm:regret}
    Suppose that Assumptions~\ref{assump_bdd},~\ref{assm:non-degeneracy}, and~\ref{assump_str_submod} hold.
    For any $\delta \in (0,1)$, if we set the algorithmic parameters in Algorithm~\ref{alg:ofu_dmnl} as follows: 
    $\alpha_t = \Ocal(\sqrt{d} \log t \log K)$, 
    $\eta= {1\over2}\log(K+1)+2$, $\Lambda=84\sqrt{(d+1)\eta}$, 
    $\nu={\omega \over2}$, then with probability at least $1-\delta-(d+1)T^{-\Ocal({\sigma_0 \sqrt{d}  \log K \over \kappa l \omega K})}$, the cumulative $\gamma$-regret of $\ofudmnl$ is bounded by 
    \begin{align*}
        R^{\gamma}(T)= \tilde{\mathcal{O}}\left({\sqrt{K} (d+1) \over K+1}\cdot \sqrt{T} + {1 \over \kappa}\left((d+1)^2+{\sqrt{d} \over l \omega}\right)\right ) \, ,
    \end{align*}
    where $\kappa:=\min_{t \in [T], S \in \Scal, i \in S, \|\thetab\|_2 \le 1, 0 \le \lambda \le 1}p_t(i|S,\thetab,\lambda) p_t(0|S,\thetab,\lambda) > 0$ is a problem-dependent instance, and $\gamma \ge (1-\frac{1}{1+e})$.
\end{theorem}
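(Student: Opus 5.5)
We split the horizon into \emph{exploration rounds}, where $\|[\zero_d,1]\|_{\Hb_t^{-1}} \ge \nu \hat\lambda_t/\alpha_t$, and \emph{exploitation rounds}, where the item-wise optimistic construction is used. Throughout we work on the event $\Ecal_t=\{\|\wb_t-\wb^*\|_{\Hb_t}\le\alpha_t\}$, which holds for all $t$ with probability at least $1-\delta$ by Lemma~1 of \citet{lee2024nearly} applied to the augmented features $\zb_{ti}(S)$. This is legitimate because the DMNL model in these features is just a contextual MNL with $\|\zb_{ti}\|_2\le\sqrt2$ and $\|\wb^*\|_2\le1$.

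\textbf{Exploration rounds.} Each such round costs at most $1$. Their number is bounded by $\Ocal(\sqrt d\log T/(\kappa l\omega))$ by Lemma~\ref{lem:num_init}. The argument goes as follows. A uniformly random $S\in\Scal_K$ adds to $\Hb_t$ an expected increment $\succeq \kappa K\sigma_0 \Ib_{d+1}$ by \eqref{eq:sigma_0}, since $\mathcal G_t \succeq \kappa\sum_i \zb_{ti}\zb_{ti}^\top$ up to the standard MNL Hessian lower bound. A matrix Chernoff bound then shows that $\lambda_{\min}(\Hb_t)$ grows linearly in the number of exploration rounds. This growth eventually forces $\|[\zero_d,1]\|_{\Hb_t^{-1}}<\nu l/\alpha_t$. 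The Chernoff failure probability is the $(d+1)T^{-\Ocal(\cdot)}$ term in the statement. This step contributes the $\frac{1}{\kappa}\frac{\sqrt d}{l\omega}$ term.

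\textbf{Exploitation rounds.} We decompose
\[
\gamma R_t(S_t^*,\wb^*)-R_t(S_t,\wb^*) = \big[\gamma R_t(S_t^*,\wb^*) - \tilde R_t(S_t)\big] + \big[\tilde R_t(S_t)-R_t(S_t,\wb^*)\big].
\]

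For the first bracket, optimism on $\Ecal_t$ gives $\ucb(\zb_{ti}(S))\ge \zb_{ti}(S)^\top\wb^*$ for every $S$. Since $R$ is increasing in each utility, $\max_S\tilde R_t(S)\ge R_t(S_t^*,\wb^*)$. It then suffices that the greedy output satisfies $\tilde R_t(S_t)\ge\gamma\max_S\tilde R_t(S)$, which follows from Theorem~\ref{thm:ap_rate} if $\tilde R_t$ has the same structure as the true reward. That structure is a logistic transform of $\log\sum_{i\in S}\exp(\xb_{ti}^\top\hat\thetab_t + \alpha_t\|\zb_{ti}(S)\|_{\Hb_t^{-1}})+\hat\lambda_t g_t(S)$, which must be monotone and submodular in $S$.

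This is the \textbf{main obstacle}, because the bonus $\alpha_t\|[\xb_{ti},g_t(S)]\|_{\Hb_t^{-1}}$ depends on $S$ through $g_t$ and can break submodularity. The plan is to show that on exploitation rounds the set-dependence of the bonus is dominated by $\hat\lambda_t g_t(S)$. Expanding the norm, the $S$-dependent part of the bonus changes by at most $\alpha_t\|[\zero_d,1]\|_{\Hb_t^{-1}}\,|\Delta g_t| < \nu\hat\lambda_t|\Delta g_t| = \frac{\omega}{2}\hat\lambda_t|\Delta g_t|$, using the triangle inequality on $\Hb_t^{-1}$-norms. Combined with $\omega$-strict submodularity (Definition~\ref{def:strict_submodular}), the marginal gains of $\hat\lambda_t g_t + \text{bonus}$ decrease by at least $\hat\lambda_t\omega\,\Delta - \frac{\omega}{2}\hat\lambda_t\cdot 2\Delta\cdot(\text{slack})\ge 0$, so submodularity is restored. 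I expect this to need care with constants, possibly requiring a factor-$2$ adjustment in $\nu$. Monotonicity follows from $\hat\lambda_t>0$, which holds once $\alpha_t\|[\zero_d,1]\|_{\Hb_t^{-1}}<\hat\lambda_t\nu$ together with $\lambda^*>l$. With both properties in place, Theorem~\ref{thm:ap_rate}, which holds for any parameters, yields $\gamma\ge1-\frac1{e+1}$.

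For the second bracket we follow the uniform-revenue analysis of \citet{lee2024nearly}. A second-order Taylor expansion of $R$ in the utilities bounds it by $\sum_{i\in S_t}p_t(i|S_t,\wb_t)p_t(0|S_t,\wb_t)\,2\alpha_t\|\zb_{ti}\|_{\Hb_t^{-1}}$ plus quadratic terms in $\alpha_t^2\|\zb_{ti}\|^2_{\Hb_t^{-1}}$. Cauchy--Schwarz, the elliptical potential lemma applied to $\mathcal G_t(\wb_{t+1})$, and the $\frac{1}{K+1}$ factor arising from $p_t(0)$ in the uniform-revenue case then give $\tilde\Ocal\big(\frac{\sqrt K(d+1)}{K+1}\sqrt T\big)$. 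The quadratic terms contribute $\tilde\Ocal((d+1)^2/\kappa)$.

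Summing the exploration cost and the two exploitation brackets yields the stated bound.
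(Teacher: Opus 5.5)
Your plan follows the paper's proof step for step. Exploration rounds are charged at most $1$ each and counted through Lemma~\ref{lem:num_init}; Lemma~\ref{lem:exp_condition} converts the eigenvalue growth into the actual test, which is against $\hat\lambda_t$ rather than $l$. On the other rounds:
\begin{itemize}
\item optimism gives $R_t(S_t^*,\thetab^*,\lambda^*)\le \exp(\tilde f_t(S_t^*))/(1+\exp(\tilde f_t(S_t^*)))$;
\item Lemma~\ref{lem:ucb_submodular} supplies monotone submodularity of $\tilde f_t$ with $\nu=\omega/2$;
\item the $h(\psi)$ bound from the proof of Theorem~\ref{thm:ap_rate} yields $R_t(S_t^*,\thetab^*,\lambda^*)\le\frac{e+1}{e}\tilde R_t(S_t)$;
\item $\tilde R_t(S_t)-R_t(S_t,\thetab^*,\lambda^*)$ is handed to the uniform-revenue analysis of \citet{lee2024nearly}.
\end{itemize}
The constants are not the issue: for the terms the paper compares, its inequality closes with exactly $\nu=\omega/2$. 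The real problem sits in the step you flagged as the main obstacle. It is a genuine gap, and the paper's Lemma~\ref{lem:ucb_submodular} has the same gap.

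Your inequality needs the slack $\hat\lambda_t\omega\Delta$, and $\omega$-strict submodularity supplies it only when $g_t(S)\neq g_t(S\cup\{e_2\})$. Take $e_1,e_2\notin S$ with $g_t(S)=g_t(S\cup\{e_2\})<g_t(S\cup\{e_1\})=g_t(S\cup\{e_1,e_2\})$. For example, $e_2$ lies in a category already covered by $S$ and $e_1$ in a new one. Then the $\hat\lambda_t g_t$ increments cancel exactly and there is no slack. The bonus shifts do not cancel, because $\alpha_t\|\zb_{ti}(S)\|_{\Hb_t^{-1}}$ is item-specific and sits inside the log-sum-exp. With $S_1=S\cup\{e_1\}$, the diminishing-returns inequality for adding $e_2$ is equivalent to
\[
\frac{\exp\bigl(\ucb(\zb_{te_2}(S))\bigr)}{\sum_{i\in S}\exp\bigl(\ucb(\zb_{ti}(S))\bigr)}\;\ge\;\frac{\exp\bigl(\ucb(\zb_{te_2}(S_1))\bigr)}{\sum_{i\in S_1}\exp\bigl(\ucb(\zb_{ti}(S_1))\bigr)} ,
\]
where the $\hat\lambda_t g_t$ parts cancel inside each ratio.

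Now $\partial_g\|[\xb,g]\|_{\Hb_t^{-1}}=[\zero_d,1]^\top\Hb_t^{-1}[\xb,g]/\|[\xb,g]\|_{\Hb_t^{-1}}$ depends on $\xb$, with either sign, once $\Hb_t^{-1}$ couples the feature block with the diversity coordinate. So the bonus of $e_2$ can grow from level $g_t(S)$ to $g_t(S_1)$ by more than that of every item of $S$. If in addition $\exp(\ucb(\zb_{te_1}(S_1)))$ is small relative to $\sum_{i\in S}\exp(\ucb(\zb_{ti}(S_1)))$, the right side is larger. One can write down such instances that pass the exploitation test. The test only bounds $\|[\zero_d,1]\|_{\Hb_t^{-1}}$, i.e.\ the size of these shifts, not their differences across items. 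Shrinking $\nu$ therefore shrinks the violation without removing it.

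In the paper this is the case $g_t(S)=g_t(S_2)$, where \eqref{ineq: weightnorm} is asserted. With $g_t(S_2)=g_t(S)$ and $g_t(S_3)=g_t(S_1)$ it reads $\|[\xb_{j_1},g_t(S_1)]\|_{\Hb_t^{-1}}+\|[\xb_{j_2},g_t(S)]\|_{\Hb_t^{-1}}\ge\|[\xb_{j_1},g_t(S)]\|_{\Hb_t^{-1}}+\|[\xb_{j_2},g_t(S_1)]\|_{\Hb_t^{-1}}$, which can fail when $j_1\neq j_2$. The product expansion there also does not match $S_1\times S_2$ with $S\times S_3$ term by term.

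Closing the argument needs something other than exact submodularity of $\tilde f_t$. One option is a greedy bound for approximately submodular functions that absorbs an $\Ocal(\alpha_t\|[\zero_d,1]\|_{\Hb_t^{-1}})$ defect per step. Another is a bonus whose dependence on $S$ is the same for all items.
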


\paragraph{Discussion of Theorem~\ref{thm:regret}.}
The theorem establishes that the regret upper bound of Algorithm~\ref{alg:ofu_dmnl} is nearly minimax-optimal, as it matches the lower bound for our problem setting in its dependence on $d$, $K$, and $T$, up to the effects introduced by $\gamma$-approximation. Furthermore,
our regret bound closely matches that of nearly minimax-optimal algorithms for MNL bandits under uniform revenues~\citep{lee2024nearly}.
The difference lies in the dimensionality: in our DMNL setting, the agent must learn both the relevance parameter $\thetab^*$ and the diversity parameter $\lambda^*$, whereas existing MNL bandits only require estimation of $\thetab^*$.
Despite this additional complexity, the matching regret bound implies that the proposed algorithm remains statistically efficient while explicitly accounting for diversity.

Unlike prior works in combinatorial bandits that model diversity through an explicit balance between a submodular diversity function and an additive reward function~\citep{chen2013combinatorial, qin2014contextual, chen2016combinatorial}, our DMNL framework jointly learns both the relevance parameter $\thetab^*$ and the diversity parameter $\lambda^*$.
As a result, our method does not require manually tuning hyperparameters to balance relevance and diversity.
Furthermore, the proposed algorithm leverages item-wise optimistic construction based on the submodularity of the reward function, achieving computational efficiency (with $\Ocal(NK)$ cost per round) and provably improved approximation ratio—without relying on a black-box optimization oracle often assumed in combinatorial bandit literature.
From a technical perspective, even though the agent does not receive intermediate feedback on the marginal reward gain for individual items, we show that the strict submodularity of the diversity score function is sufficient to guarantee the submodularity of the overall optimistic reward function $\tilde{R}_t(S)$.
This allows us to maintain provable performance guarantees without requiring marginal gain feedback, which is typically assumed in prior submodular bandit settings~\citep{YueGuestrin2011, chen2017interactive, Hiranandani20a}.
The detailed proof is provided in Appendix~\ref{ap_sec:pf_thm2}.

\section{Numerical Experiments}

\begin{figure}[t]
    \centering
    \includegraphics[width=\textwidth, trim=0cm 0.5cm 0cm 0.5cm, clip]{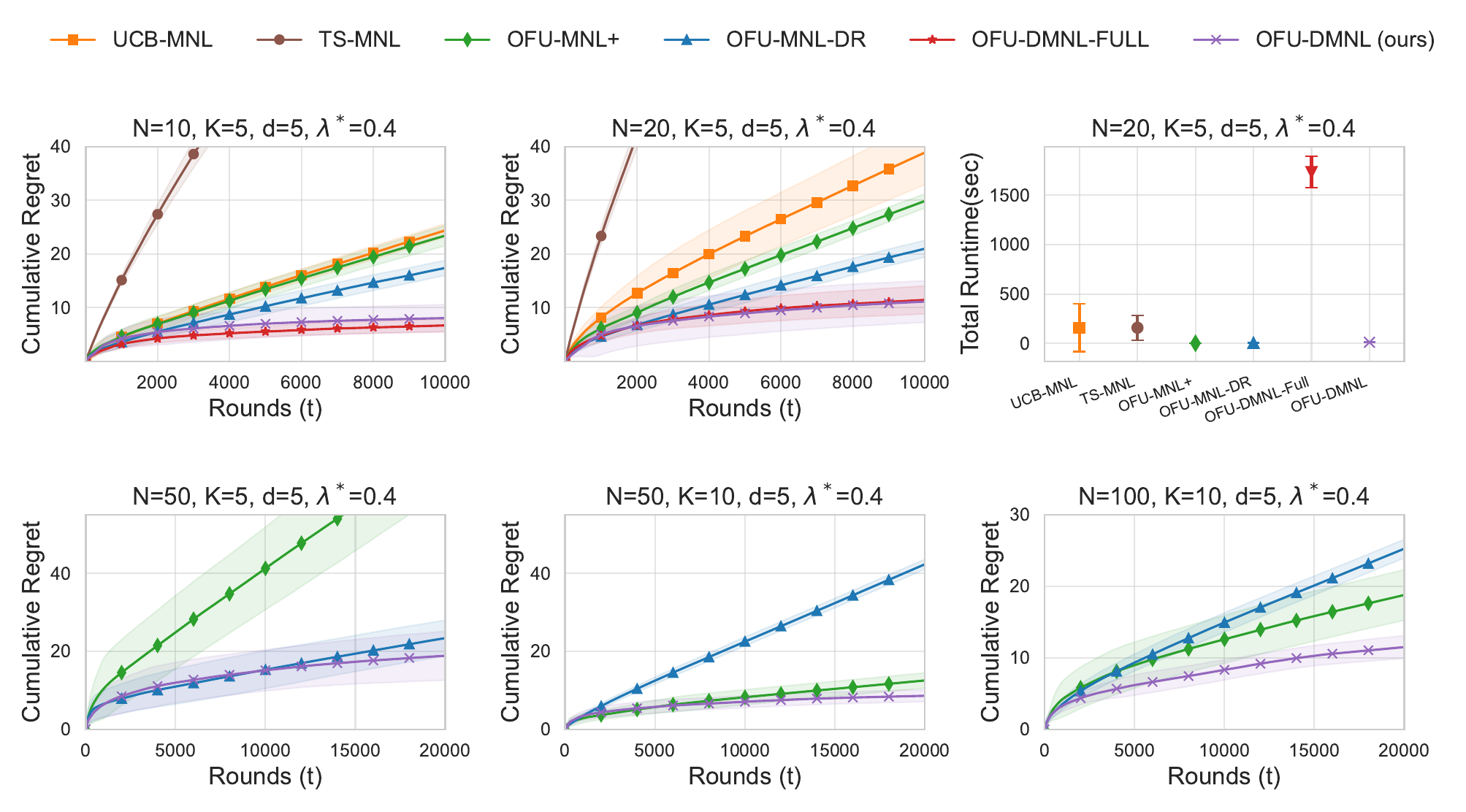}
    \vspace{-15pt}  
    \caption{Performance comparison between algorithms. The top row shows cumulative regret (left two, $N=10, 20$) and total runtime (rightmost, $T=10000$), and the bottom row shows the cumulative regret of the top $3$ algorithms under various parameter settings.}
    \label{fig:main_graph}
    \vspace{-7pt}
\end{figure}

We evaluate the empirical performance of our proposed algorithm against several baselines in the DMNL bandit setting.
These include existing MNL bandit algorithms: \texttt{UCB-MNL}~\citep{oh2021multinomial}, \texttt{TS-MNL}~\citep{oh2019thompson}, and \texttt{OFU-MNL+}~\citep{lee2024nearly}, as well as two additional variants of \texttt{OFU-MNL+} adapted to incorporate diversity.

First, we consider \texttt{OFU-MNL-DR} (Algorithm~\ref{alg:ofu-mnl-dr}), which follows the existing MNL choice model but uses a submodular reward function of the form $R'_t(S, \thetab^*, \lambda) := \frac{\sum_{j \in S} \exp(\xb_{tj}^\top \thetab^*)}{1 + \sum_{j \in S} \exp(\xb_{tj}^\top \thetab^*)} + \lambda g(S)$,
where $g(S)$ is the diversity score and $\lambda$ is a predefined balancing parameter. Note that \texttt{OFU-MNL-DR} requires tuning $\lambda$ manually, unlike our approach which learns diversity directly.

Second, we include \texttt{OFU-DMNL-FULL} (Algorithm~\ref{alg:ofu_dmnl_full}), which exactly implements the DMNL model via exhaustive search.
It computes $\tilde{R}_t(S)$ for all $\binom{N}{K}$ subsets at each round, incurring a computational cost of roughly $\Ocal\bigl((\tfrac{eN}{K})^K\bigr)$ per round.
These two variants help illustrate the benefit of learning diversity directly (vs. tuning it manually) and the computational trade-off of our efficient item-wise optimistic construction relative to exhaustive search.
Details on the implementation of these two variants are provided in Appendix~\ref{ap_sec:exp_settings}.

For each round, the context features are independently drawn from a Gaussian distribution $\Ncal(\mathbf{0}_d, \Ib_d)$ and clipped to the range $[-1/\sqrt{d},, 1/\sqrt{d}]^d$. Each item is also assigned a category, ans the diversity function on an assortment $S$ is then defined as the exponential decaying categorical function (Example~\ref{ex:categorial_functions}). To assess how effectively the proposed algorithm adapts to relevance–diversity trade-offs, we fix the diversity parameter $\lambda^*$ at several values. We then sample the relevance parameter $\thetab^*$ from a uniform distribution over $[-1/\sqrt{d}, 1/\sqrt{d}]^d$ and scale it to satisfy $\|\thetab^*\|_2 + \lambda^* = 1$.  We conducted $10$ independent runs for each configuration,, and all reported results are averaged over these runs.

As shown in Figure~\ref{fig:main_graph}, our algorithm exhibits superior performance compared to the baseline algorithms.
Notably, it achieves competitive regret performance relative to the exhaustive-search algorithm, \texttt{OFU-DMNL-FULL}, while demonstrating a dramatic advantage in runtime efficiency.
Moreover, our proposed algorithm outperforms both \texttt{OFU-MNL+} and \texttt{OFU-MNL-DR} across various problem sizes and under various configurations that control the balance between relevance and diversity (controlled by $\lambda$, refer to Figure \ref{fig:exp_lamD}).
These results highlight the robustness of our method in handling different trade-off regimes between item relevance and assortment diversity.
Detailed experimental settings and additional results under various configurations are provided in Appendix \ref{ap_sec:experiment}.
    
\section{Conclusion}
In this paper, we propose the diversified multinomial logit contextual bandit, a new model that captures the trade-off between item relevance and assortment diversity.
To solve this problem, we design a UCB-based algorithm that incrementally constructs assortment via item-wise optimistic utility estimates.
Unlike prior works relying on black-box optimization oracles, our approach employs a white-box, item-wise construction strategy with a provable approximation guarantee of at least $(1 - \frac{1}{e+1})$.
We further show that the algorithm achieves a $(1 - \tfrac{1}{e+1})$-approximate cumulative regret bound of $\tilde{\Ocal}\big(d \sqrt{T/K}\big)$, matching the nearly minimax regret of MNL bandits despite the added challenge of jointly learning a diversity parameter—highlighting both statistical efficiency and modeling generality.
Empirical results demonstrate superior performance across a wide range of scenarios with significantly lower computational cost.
Overall, our work offers a practical and theoretically grounded solution for diversity-aware sequential decision-making.

\bibliographystyle{plainnat}
\bibliography{references}

@article{mcfadden1978modelling,
  title={Modelling the choice of residential location},
  author={McFadden, Daniel and others},
  year={1978},
  publisher={Institute of Transportation Studies, University of California California}
}

@article{rusmevichientong2010dynamic,
  title={Dynamic assortment optimization with a multinomial logit choice model and capacity constraint},
  author={Rusmevichientong, Paat and Shen, Zuo-Jun Max and Shmoys, David B},
  journal={Operations research},
  volume={58},
  number={6},
  pages={1666--1680},
  year={2010},
  publisher={INFORMS}
}

@article{saure2013optimal,
  title={Optimal dynamic assortment planning with demand learning},
  author={Saur{\'e}, Denis and Zeevi, Assaf},
  journal={Manufacturing \& Service Operations Management},
  volume={15},
  number={3},
  pages={387--404},
  year={2013},
  publisher={INFORMS}
}

@inproceedings{agrawal2017thompson,
  title={Thompson sampling for the mnl-bandit},
  author={Agrawal, Shipra and Avadhanula, Vashist and Goyal, Vineet and Zeevi, Assaf},
  booktitle={Conference on learning theory},
  pages={76--78},
  year={2017},
  organization={PMLR}
}

@article{agrawal2019mnl,
  title={MNL-bandit: A dynamic learning approach to assortment selection},
  author={Agrawal, Shipra and Avadhanula, Vashist and Goyal, Vineet and Zeevi, Assaf},
  journal={Operations Research},
  volume={67},
  number={5},
  pages={1453--1485},
  year={2019},
  publisher={INFORMS}
}

@article{chen2017note,
  title={A note on a tight lower bound for mnl-bandit assortment selection models},
  author={Chen, Xi and Wang, Yining},
  journal={arXiv preprint arXiv:1709.06109},
  year={2017}
}

@inproceedings{li2017provably,
  title={Provably optimal algorithms for generalized linear contextual bandits},
  author={Li, Lihong and Lu, Yu and Zhou, Dengyong},
  booktitle={International Conference on Machine Learning},
  pages={2071--2080},
  year={2017},
  organization={PMLR}
}

@article{zhang2024contextual,
  title={Contextual multinomial logit bandits with general value functions},
  author={Zhang, Mengxiao and Luo, Haipeng},
  journal={Advances in Neural Information Processing Systems},
  volume={37},
  pages={34123--34160},
  year={2024}
}

@article{zhang2024online,
  title={Online (multinomial) logistic bandit: Improved regret and constant computation cost},
  author={Zhang, Yu-Jie and Sugiyama, Masashi},
  journal={Advances in Neural Information Processing Systems},
  volume={36},
  year={2024}
}

@article{lee2024nearly,
  title={Nearly minimax optimal regret for multinomial logistic bandit},
  author={Lee, Joongkyu and Oh, Min-hwan},
  journal={Advances in Neural Information Processing Systems},
  volume={37},
  pages={109003--109065},
  year={2024}
}

@article{chen2020dynamic,
  title={Dynamic assortment optimization with changing contextual information},
  author={Chen, Xi and Wang, Yining and Zhou, Yuan},
  journal={Journal of machine learning research},
  volume={21},
  number={216},
  pages={1--44},
  year={2020}
}

@inproceedings{ou2018multinomial,
  title={Multinomial logit bandit with linear utility functions},
  author={Ou, Mingdong and Li, Nan and Zhu, Shenghuo and Jin, Rong},
  booktitle={Proceedings of the 27th International Joint Conference on Artificial Intelligence},
  pages={2602--2608},
  year={2018}
}

@article{cheung2017thompson,
  title={Thompson sampling for online personalized assortment optimization problems with multinomial logit choice models},
  author={Cheung, Wang Chi and Simchi-Levi, David},
  journal={Available at SSRN 3075658},
  year={2017}
}

@article{oh2019thompson,
  title={Thompson sampling for multinomial logit contextual bandits},
  author={Oh, Min-hwan and Iyengar, Garud},
  journal={Advances in Neural Information Processing Systems},
  volume={32},
  year={2019}
}

@inproceedings{oh2021multinomial,
  title={Multinomial logit contextual bandits: Provable optimality and practicality},
  author={Oh, Min-hwan and Iyengar, Garud},
  booktitle={Proceedings of the AAAI conference on artificial intelligence},
  volume={35},
  number={10},
  pages={9205--9213},
  year={2021}
}

@inproceedings{
lee2025improved,
title={Improved Online Confidence Bounds for Multinomial Logistic Bandits},
author={Lee, Joongkyu and Oh, Min-hwan},
booktitle={Forty-second International Conference on Machine Learning},
year={2025}
}

@article{perivier2022dynamic,
  title={Dynamic pricing and assortment under a contextual mnl demand},
  author={Perivier, Noemie and Goyal, Vineet},
  journal={Advances in Neural Information Processing Systems},
  volume={35},
  pages={3461--3474},
  year={2022}
}

@article{Tropp2011,
  title={User-friendly tail bounds for matrix martingales},
  author={Tropp, Joel A},
  journal={ACM Report},
  volume={1},
  year={2011}
}

@article{Nemhauser1978,
  author    = {Nemhauser, G. L. and Wolsey, L. A. and Fisher, M. L.},
  title     = {An analysis of approximations for maximizing submodular set functions---I},
  journal   = {Mathematical Programming},
  year      = {1978},
  volume    = {14},
  number    = {1},
  pages     = {265--294}
}

@article{Feige1998,
author = {Feige, Uriel},
title = {A threshold of ln n for approximating set cover},
year = {1998},
issue_date = {July 1998},
publisher = {Association for Computing Machinery},
address = {New York, NY, USA},
volume = {45},
number = {4},
journal = {J. ACM},
month = jul,
pages = {634–652},
numpages = {19}
}

@inproceedings{YueGuestrin2011,
 author = {Yue, Yisong and Guestrin, Carlos},
 booktitle = {Advances in Neural Information Processing Systems},
 pages = {},
 publisher = {Curran Associates, Inc.},
 title = {Linear Submodular Bandits and their Application to Diversified Retrieval},
 volume = {24},
 year = {2011}
}

@article{chen2017interactive,
  title={Interactive submodular bandit},
  author={Chen, Lin and Krause, Andreas and Karbasi, Amin},
  journal={Advances in Neural Information Processing Systems},
  volume={30},
  year={2017}
}

@InProceedings{Hiranandani20a,
  title = 	 {Cascading Linear Submodular Bandits: Accounting for Position Bias and Diversity in Online Learning to Rank},
  author =       {Hiranandani, Gaurush and Singh, Harvineet and Gupta, Prakhar and Burhanuddin, Iftikhar Ahamath and Wen, Zheng and Kveton, Branislav},
  booktitle = 	 {Proceedings of The 35th Uncertainty in Artificial Intelligence Conference},
  pages = 	 {722--732},
  year = 	 {2020},
  editor = 	 {Adams, Ryan P. and Gogate, Vibhav},
  volume = 	 {115},
  series = 	 {Proceedings of Machine Learning Research},
  month = 	 {22--25 Jul},
  publisher =    {PMLR}
}

@inproceedings{chen2013combinatorial,
  title={Combinatorial multi-armed bandit: General framework and applications},
  author={Chen, Wei and Wang, Yajun and Yuan, Yang},
  booktitle={International conference on machine learning},
  pages={151--159},
  year={2013},
  organization={PMLR}
}

@inproceedings{qin2014contextual,
  title={Contextual combinatorial bandit and its application on diversified online recommendation},
  author={Qin, Lijing and Chen, Shouyuan and Zhu, Xiaoyan},
  booktitle={Proceedings of the 2014 SIAM International Conference on Data Mining},
  pages={461--469},
  year={2014},
}

@article{chen2016combinatorial,
  title={Combinatorial multi-armed bandit with general reward functions},
  author={Chen, Wei and Hu, Wei and Li, Fu and Li, Jian and Liu, Yu and Lu, Pinyan},
  journal={Advances in Neural Information Processing Systems},
  volume={29},
  year={2016}
}

@article{chen2018contextual,
  title={Contextual combinatorial multi-armed bandits with volatile arms and submodular reward},
  author={Chen, Lixing and Xu, Jie and Lu, Zhuo},
  journal={Advances in Neural Information Processing Systems},
  volume={31},
  year={2018}
}

@inproceedings{hwang2023combinatorial,
  title={Combinatorial neural bandits},
  author={Hwang, Taehyun and Chai, Kyuwook and Oh, Min-hwan},
  booktitle={International Conference on Machine Learning},
  pages={14203--14236},
  year={2023},
  organization={PMLR}
}

@inproceedings{li2016contextual,
  title={Contextual combinatorial cascading bandits},
  author={Li, Shuai and Wang, Baoxiang and Zhang, Shengyu and Chen, Wei},
  booktitle={International conference on machine learning},
  pages={1245--1253},
  year={2016},
  organization={PMLR}
}

@misc{bach2013learningsubmodularfunctionsconvex,
      title={Learning with Submodular Functions: A Convex Optimization Perspective}, 
      author={Francis Bach},
      year={2013},
      eprint={1111.6453},
      archivePrefix={arXiv},
      primaryClass={cs.LG}
}

@misc{liu2024contextual,
      title={Contextual Combinatorial Bandits with Probabilistically Triggered Arms}, 
      author={Xutong Liu and Jinhang Zuo and Siwei Wang and John C. S. Lui and Mohammad Hajiesmaili and Adam Wierman and Wei Chen},
      year={2024},
      eprint={2303.17110}
}

@misc{liu2025combinatorial,
      title={Combinatorial Logistic Bandits}, 
      author={Xutong Liu and Xiangxiang Dai and Xuchuang Wang and Mohammad Hajiesmaili and John C. S. Lui},
      year={2025},
      eprint={2410.17075}
}

\newpage
\appendix
\counterwithin{table}{section}
\counterwithin{lemma}{section}
\counterwithin{corollary}{section}
\counterwithin{theorem}{section}
\counterwithin{algorithm}{section}
\counterwithin{assumption}{section}
\counterwithin{figure}{section}
\counterwithin{equation}{section}
\counterwithin{condition}{section}
\counterwithin{remark}{section}
\counterwithin{definition}{section}
\counterwithin{proposition}{section}
\counterwithin{example}{section}

\clearpage
\appendix

\begingroup
\renewcommand{\partname}{}
\renewcommand{\thepart}{}
\part{Appendix}
\endgroup

\parttoc

\section{Definitions and Notations} \label{appx:definitions}

Recall that we define $N$ as the total number of items and $\mathcal{S}$ as the set of candidate assortments with a size constraint of at most $K$, i.e., $\Scal = \{S \subset [N] : |S| \le K\}$.

\begin{definition}[Monotone increasing set function]\label{def:monotone}
    The set function $f$ mapping sets $S \in \Scal$ to a real-valued number is monotone if and only if for every $S, S' \in \Scal$ with $S \subseteq S'$, $f$ satisfies $f(S) \leq  f(S')$,
\end{definition}

\begin{definition}[Submodular function]\label{def:submodular}
    The set function $f$ mapping sets $S \in \Scal$ to a real-valued number is submodular if and only if for every $S \subseteq S'$ and $e \notin S'$, $f$ satisfies 
    \begin{equation*}
        f(S' \cup \{e\}) - f(S') \leq  f(S \cup \{e\}) - f(S)  \, .
    \end{equation*}
    Or, equivalently, for $e_1, e_2 \notin S$, we have
    \begin{equation*}
        f(S \cup \{e_1\}) - f(S) \ge f(S \cup \{e_1, e_2\}) - f(S \cup \{e_2\}) \, .
    \end{equation*}
\end{definition}
For convenience, we provide a table summarizing the notations.
\begin{table*}[h]
    \centering
    \caption{Notations}
    \begin{center}
    \begin{tabular}{ll}
        \toprule
        $N$ & total number of items \\
        $K$ & maximum size of assortmens \\
        $d$ & dimension of feature vectors \\
        $T$ & number of total rounds \\
        $g_t$ & diversity score function in round $t$ \\
        $\xb_{ti}$ & feature vector for item $i$ in round $t$\\
        $\zb_{ti}(S)$ & $:=[\xb_{ti},g_t(S)]$, diversity-augmented feature vector for item $i$ in round $t$ \\ 
        $0$ & outside option in MNL choice model \\
        $\kappa$ & $:=\min_{t \in [T], S \in \Scal, i \in S, \|\thetab\|_2 \le 1, 0 \le \lambda \le 1}p_t(i|S,\thetab,\lambda) p_t(0|S,\thetab,\lambda) > 0$\\
        $S_t$ & selected assortment in round $t$\\
        $S_t^*$ & optimal assortment in round $t$\\
        $y_{ti}$ & user's choice for item $i \in S_t\, \cup\,\{0\} $ in round $t$ \\
        $R_t(S, \thetab^*, \lambda^*)$& $:= \sum_{i \in S} p_t(i \mid S, \thetab^*, \lambda^*)$, reward of the assortment $S$ in round $t$\\
        $\ell_t(\mathbf{w})$ & $:= - \sum_{i \in S_t} y_{ti} \log p_t(i \mid S_t, \mathbf{w})$, loss function in round $t$ \\
        $\mathcal{G}_t(\mathbf{w})$ & $:= \nabla^2 \ell_t(\mathbf{w})$ \\
        &\small{$= \sum\limits_{i \in S_t} p_t(i \mid S_t, \mathbf{w}) \zb_{ti}(S_t) \zb_{ti}(S_t)^\top
        - \sum\limits_{i \in S_t} \sum\limits_{j \in S_t} p_t(i \mid S_t, \mathbf{w}) p_t(j \mid S_t, \mathbf{w}) \zb_{ti}(S_t) \zb_{tj}(S_t)^\top \, $}\\
        $\Lambda$ & regularization parameter \\
        $\Hb_t $ & $:= \Lambda \Ib_{d+1} + \sum_{s=1}^{t-1} \mathcal{G}_s(\mathbf{w}_{s+1})$\\
        $\tilde{\Hb}_t$ & $:= \Hb_t + \eta \, \mathcal{G}_t(\mathbf{w}_t)$\\
        $V_t$ & $:=\Lambda \Ib_{d+1}+\sum_{s=1}^{t} \sum_{i \in S_{s}}\zb_{si}(S_{s})\zb_{si}(S_{s})^\top$, gram matrix\\
        $\alpha_t$ & confidence radius \\
        $\ucb(\zb_{ti}(S))$& $:= \zb_{ti}(S)^\top \wb_t + \alpha_t \| \zb_{ti}(S) \|_{\Hb_t^{-1}} \, .$ \\
        $\tilde{R}_t(S)$ & $:= \sum_{i \in S} \frac{\exp(\ucb(\zb_{ti}(S)))}{1 + \sum_{j \in S} \exp(\ucb(\zb_{tj}(S)))}$\\
        $f_t(S)$ & $:= \log \left( \sum_{i\in S}\exp (\xb_{ti}^\top \thetab^* + \lambda^* g_t(S)) \right)$ \\
        $\tilde{f}_t(S) $ &$ := \log \left( \sum_{i \in S} \exp \!\left( \zb_{ti}(S)^\top {\bf{w}}_{t} + \alpha_t\|  \zb_{ti}(S)\| _{\Hb_t^{-1}}\right)\right).$ \\

        \bottomrule
    \end{tabular}
    \end{center}
    \label{table:related_work1}
\end{table*}

\section{Strict Submodularity}\label{ap_sec:strict_submodular}

\subsection{Challenges in Item-wise Optimistic Construction} \label{sec:challenge}

\paragraph{Lack of intermediate feedback.}
In the item-wise optimistic construction process in Eq.\eqref{eq: optimistic greedy assortment}, when selecting the $k$-th item $a_{t,k}$ (for $k \in [K]$), the current partial assortment consists of the previously selected items ${a_{t,1}, \ldots, a_{t,k-1}}$ (Line 9 in Algorithm~\ref{alg:ofu_dmnl}).
Therefore, the optimistic diversity-augmented utility in Eq.\eqref{eq:diversity-augmented ucb} is computed using only the diversity score of the partial assortment, $g(\{a_{t,1},\ldots,a_{t,k-1}\})$.
However, the agent does not receive any feedback about the diversity score at the time of item addition, nor does it observe the marginal gain in reward for each individual item—even after offering the full assortment.
Instead, it only observes the total reward associated with the final constructed set $S_t = \{{a_{t,1}, \ldots, a_{t,K}}\}$.
This lack of intermediate feedback complicates the analysis of regret, particularly in submodular bandit settings that rely on item-wise optimistic construction.

On the other hand, prior works on submodular bandits~\citep{YueGuestrin2011, chen2017interactive, Hiranandani20a} assume access to intermediate feedback on the marginal gain of each item during or after the construction of an assortment.
For example,~\citet{YueGuestrin2011} receives slot-level feedback after offering an assortment (i.e., a list of articles), which enables the agent to estimate the marginal utility contribution of each item.
Similarly, in the cascading bandit model of~\citet{Hiranandani20a}, items are presented in a ranked list and examined sequentially by the user.
This naturally reveals the marginal utility of each item conditioned on the items already shown.
In~\citet{chen2017interactive}, the agent receives explicit ``interactive feedback'' on the marginal gain of each newly added item during the construction process, making the feedback even more granular.
We also emphasize that this challenge does not arise in the combinatorial bandit literature~\citep{chen2013combinatorial, qin2014contextual, chen2018contextual, hwang2023combinatorial}, where the diversity of selected arms is incorporated solely through the reward function.
Since diversity is not parameterized in those models, there is no need to estimate any diversity-related parameters during learning.

\paragraph{Beyond submodular diversity.} 
One way to overcome the intermediate feedback problem is to exploit the submodularity of the diversified optimistic expectd reward in Eq.\eqref{eq: diversified optimistic expected reward}.
For an assortment $S$, let us define $\tilde{f}_t(S)$ as follows:
\begin{equation} \label{eq:optimistic_logsumexp}
    \tilde{f}_t(S):= \log \left( \sum_{i \in S} \exp \left( \zb_{ti}(S)^\top {\bf{w}}_{t} + \alpha_t\|  \zb_{ti}(S) \| _{\Hb_t^{-1}}\right)\right) \, .    
\end{equation}
We note that if $\tilde{f}_t(S)$ is submodular, then we show that the assortment $S_t$ constructed by the item-wise optimistic construction can approximate the true optimal assortment of $\tilde{f}_t$, making it possible to establish bounds without relying on feedback on marginal gain.
However, unfortunately, while the submodularity of $g_t$ guarantees that $f_t$ is submodular (Section~\ref{sec:ap_rate}), it does not ensure the submodularity of $\tilde{f}_t$. 

To check whether $\tilde{f}_t$ is submodular or not, for any $S \in \Scal$, and $e_1, e_2 \notin S$, define $S_1 := S \cup \{e_1\}, \quad S_2 := S \cup \{e_2\}, \quad S_3 := S \cup \{e_1,e_2\}$. Then, 
\begin{align*}
    & \tilde{f}_t(S_1) - \tilde{f}_t(S) - \left( \tilde{f}_t(S_3) - \tilde{f}_t(S_2) \right)
    \\
    & = \log \left( \frac{
    \sum_{i \in S_1} \exp\left( \xb_{ti}^\top \hat{\thetab}_t + \hat{\lambda}_t g_t(S_1) + \alpha_t \left\| [\xb_{ti}, g_t(S_1)] \right\|_{\Hb_t^{-1}} \right)
    }{
    \sum_{i \in S} \exp\left( \xb_{ti}^\top \hat{\thetab}_t + \hat{\lambda}_t g_t(S) + \alpha_t \left\| [\xb_{ti}, g_t(S)] \right\|_{\Hb_t^{-1}} \right)
    } \right.
        \\
        & \quad \left.
        \times \frac{
        \sum_{i \in S_2} \exp\left( \xb_{ti}^\top \hat{\thetab}_t + \hat{\lambda}_t g_t(S_2) + \alpha_t \left\| [\xb_{ti}, g_t(S_2)] \right\|_{\Hb_t^{-1}} \right)
        }{
        \sum_{i \in S_3} \exp\left( \xb_{ti}^\top \hat{\thetab}_t + \hat{\lambda}_t g_t(S_3) + \alpha_t \left\| [\xb_{ti}, g_t(S_3)] \right\|_{\Hb_t^{-1}} \right)
        } \right)
    \\
    & = \log \left( \frac{
    \sum\limits_{i \in S, j \in S_3} 
    \exp\left( \xb_{ti}^\top \hat{\thetab}_t + \xb_{tj}^\top \hat{\thetab}_t + \hat{\lambda}_t g_t(S_1)+  \hat{\lambda}_t g_t(S_2)  
    + \alpha_t \left( \left\| [\xb_{ti}, g_t(S_1)] \right\|_{\Hb_t^{-1}} + \left\| [\xb_{tj}, g_t(S_2)] \right\|_{\Hb_t^{-1}} \right) \right)
    }{
    \sum\limits_{i \in S, j \in S_3} 
    \exp\left( \xb_{ti}^\top \hat{\thetab}_t + \xb_{tj}^\top \hat{\thetab}_t + \hat{\lambda}_t g_t(S) + \hat{\lambda}_t g_t(S_3) 
    + \alpha_t \left( \left\| [\xb_{ti}, g_t(S)] \right\|_{\Hb_t^{-1}} + \left\| [\xb_{tj}, g_t(S_3)] \right\|_{\Hb_t^{-1}} \right) \right)
    }\right. 
        \\
        & + \left. \frac{
        \exp\left( \xb_{t,e_1}^\top \hat{\thetab}_t + \xb_{t,e_2}^\top \hat{\thetab}_t + \hat{\lambda}_t g_t(S_1)+  \hat{\lambda}_t g_t(S_2)  
        + \alpha_t \left( \left\| [\xb_{t,e_1}, g_t(S_1)] \right\|_{\Hb_t^{-1}} + \left\| [\xb_{t,e_2}, g_t(S_2)] \right\|_{\Hb_t^{-1}} \right) \right)
        }{
        \sum\limits_{i \in S, j \in S_3} 
        \exp\left( \xb_{ti}^\top \hat{\thetab}_t + \xb_{tj}^\top \hat{\thetab}_t + \hat{\lambda}_t g_t(S) + \hat{\lambda}_t g_t(S_3) 
        + \alpha_t \left( \left\| [\xb_{ti}, g_t(S)] \right\|_{\Hb_t^{-1}} + \left\| [\xb_{tj}, g_t(S_3)] \right\|_{\Hb_t^{-1}} \right) \right)} \right) \, .
\end{align*}

As in prior approaches to ensure the submodularity of the LogSumExp function, one would need to show that the numerator of the first term inside the logarithm is larger than its denominator. However, this does not hold in general. In general, even though $g$ is submodular, the following inequality does not hold:
\begin{equation} \label{ineq: weightnorm}
    \left\| [\xb_{j_1}, g_t(S_1)] \right\|_{\Hb_t^{-1}} + \left\| [\xb_{j_2}, g_t(S_2)] \right\|_{\Hb_t^{-1}}
    \ge
    \, \left\| [\xb_{j_1}, g_t(S)]  \right\|_{\Hb_t^{-1}} + \left\| [x_{j_2}, g_t(S_3)] \right\|_{\Hb_t^{-1}} \, .
\end{equation}
Specifically when $j_1 = j_2 = j$, we can prove that the left hand side is negative, since the weighted norm has convex structure. This means that the item-wise optimistic construction cannot, in general, guarantee $(1 - \tfrac{1}{e})$-approximate optimality with respect to $\tilde{f}_t$.
However, we show that the submodularity of $\tilde{f}_t$ can be recovered if the diversity function $g_t$ satisfies $\omega$-strict submodularity, as established in Lemma~\ref{lem:ucb_submodular}.

\subsection{Examples of Strict Submodular Diversity Functions}
In this section, we present several examples of strictly submodular set functions that are applicable to a wide range of practical settings.

\begin{example}[Categorical functions] \label{ex:categorial_functions}
For a given item set $S$, let $n_S$ denote the number of categories that can be covered by $S$, i.e., the size of the set of categories 
to which the items in $S$ belong. 

\begin{enumerate}[label=(\roman*)]
    \item \textbf{Exponential decaying case.}  
    Consider the case when an item from the $m$-th new category is added, the value of $g_\rho(S)$ increases by $\rho^{m-1}$ for $\rho \in (0,1)$. Then, the diversity function is defined by $g_\rho(S) := 1+\rho+\rho^2 + \cdots + \rho^{n_S-1}$ and is $(1-\rho)$-strict submodular.  

    \item \textbf{Polynomial decaying case.}  
    If the diversity function is defined by $g_\alpha(S) := (n_S)^\alpha$ for $\alpha \in (0,1)$, then, $g_\alpha(S)$ is $\Bigl(1-\bigl(\tfrac{2M}{2M+1}\bigr)^{1-\alpha}\Bigr)$-strict submodular, 
    where $M$ is the maximum number of categories.  
\end{enumerate}
\end{example}

\begin{proof}[Proof of Example~\ref{ex:categorial_functions}]
     We define $M$ as the number of categories, $c(i)\in[M]$ the category that the item $i\in[N]$ belongs to, and  $c(S):=\{c(i)\,|\,i \in S\}$ ($|c(S)|=n_s$). Let $S'=S \cup \{e'\}$, $e \notin S'$.  To show $\omega$-strict submodularity of the set function $g$, it is enough to show that if $g(S) \neq g(S')$, then the following holds. 
    \begin{equation} \label{eq:categorical_functions_1}
        g(S'\cup\{e\}) -g(S') \le (1-\omega)[g(S\cup\{e\}) -g(S) ]
    \end{equation}

    We first show that for any $\rho \in (0,1)$, the exponential decaying categorical function $g_\rho$ is $(1-\rho)$-strict submodular.
    Suppose that $S$ and $S'$ satisfy $g_\rho(S) \neq g_\rho(S')$. Then, by the definition of $g_\rho$, $c(e') \notin c(S)$. Thus, 
    \begin{align*}
        &g_\rho(S)=1+\rho+\ldots + \rho^{n_S-1} \\
        &g_\rho(S')=1+\rho+\ldots + \rho^{n_S}. 
    \end{align*}

    If $c(e) \in c(S)$, then $g_\rho(S\cup\{e\}) -g_\rho(S) =0=g_\rho(S'\cup\{e\}) -g_\rho(S')$. Else if $c(e) \in c(S')\backslash c(S)$, i.e. $c(e)=c(e')$, then $g_\rho(S\cup\{e\}) -g_\rho(S) = \rho^{n_S}$ and  $g_\rho(S'\cup\{e\}) -g_\rho(S')=0$, and hence the inequality (*) holds for all $\omega \in (0,1)$. 
    Otherwise, if $c(e) \notin c(S')$, then 
    \begin{align*}
        &g_\rho(S\cup\{e\}) -g_\rho(S)=\rho^{n_S} \\
        &g_\rho(S'\cup\{e\}) -g_\rho(S')=\rho^{n_S+1}, 
    \end{align*}
    and so, $g_\rho(S'\cup\{e\}) -g_\rho(S') = \rho[g_\rho(S\cup\{e\}) -g_\rho(S)]$ holds for $\omega=1-\rho$.

    For all cases, the function $g_\rho$ satisfies condition in Eq.\eqref{eq:categorical_functions_1} for $\omega=1-\rho$, and therefore it is $(1-\rho)$-strict submodular. 

    Secondly, we will show that for any $\alpha\in (0,1)$, the polynomial decaying categorical function $g_\alpha$ is $\Bigl(1-\bigl(\tfrac{2M}{2M+1}\bigr)^{1-\alpha}\Bigr)$-strict submodular. By the same reasoning as in the exponential decaying case, it suffices that condition $(*)$ holds for $\omega = \bigl(1-\tfrac{2M}{2M+1}\bigr)^{1-\alpha}$only when $c(e') \in c(S') \backslash c(S)$ and $c(e) \notin c(S')$. If $c(e') \in c(S') \backslash c(S)$ and $c(e) \notin c(S')$, then 
    \begin{align*}
        &g_\alpha (S\cup\{e\}) -g_\alpha(S)= (n_S+1)^\alpha - (n_S)^\alpha\\
        &g_\alpha (S'\cup\{e\}) -g_\alpha(S')=(n_S+2)^\alpha - (n_S+1)^\alpha.
    \end{align*}
    Let $h(x)=x^\alpha$ for $\alpha \in (0,1)$. Since $h$ is a increasing concave function, $h'(x+1)< h(x+1)-h(x)<h'(x+{1\over2})$ holds. Thus, $(n_S+1)^\alpha - (n_S)^\alpha> {\alpha \over (n_S +1)^{1-\alpha}}$ and $(n_S+2)^\alpha - (n_S+1)^\alpha < {\alpha \over (n_S +{3 \over 2})^{1-\alpha}}$ hold, and hence,
    \begin{align*}
        {{g_\alpha (S'\cup\{e\}) -g_\alpha(S')}\over {g_\alpha (S\cup\{e\}) -g_\alpha(S)}}
        &={{(n_S+2)^\alpha - (n_S+1)^\alpha}\over{(n_S+1)^\alpha - (n_S)^\alpha}} \\
        &< {{(n_S +1)^{1-\alpha}}\over{(n_S +{3 \over 2})^{1-\alpha}}} = {{1}\over{\bigl(1 +{1 \over 2(n_S +1)}\bigr)^{1-\alpha}}}\\
        &\le \left({2M \over 2M+1}\right)^{1-\alpha}.
    \end{align*}
     Therefore, the function $g_\alpha$ satisfies condition in Eq.\eqref{eq:categorical_functions_1} for $\omega=1-\left({2M \over 2M+1}\right)^{1-\alpha}$, and therefore it is $(1-\left({2M \over 2M+1}\right)^{1-\alpha})$-strict submodular. 
\end{proof}

\begin{example}[Categorical level functions] \label{ex:categorial_level_functions}
For each item $i \in [N]$, let $c(i)\in \RR^M$ be the categorical feature vector of item $i$. Each category $m\in[M]$ has $n_m$ levels, and the $m$-th entry of $c(i)$ has a value of $c_m(i)\in \{0, {1 \over n_m}, {2 \over n_m}, \ldots, {n_m \over n_m}=1\}$. Define $c(S):=\sum_{m\in[M]}\max_{i\in S}c_m(i)$ be a sum of categorical level-coverage of items in $S$, and for $\rho \in (0,1)$ and $k>0$, $g_{\rho,k}(S)=k\big(1-\rho^{c(S)}\big)$ be a categorical function on $S \in \Scal$.  
\end{example}

\begin{proposition} \label{prop:categorial_level_functions}
    For any $\rho \in (0,1)$ and $k>0$, $g_{\rho,k}$ is $(1-\rho^\Delta)$-strict submodular, where $\Delta:=\min_{m\in[M]}{1 \over n_m}={1 \over \max_{m\in [M]}n_m}$. 
\end{proposition}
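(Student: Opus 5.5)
We would prove the statement in two stages. First we show that the coverage $c(S)=\sum_{m}\max_{i\in S}c_m(i)$ is monotone and submodular, and that its nonzero increments are at least $\Delta$. Then we transfer this to $g_{\rho,k}=k(1-\rho^{c(S)})$ using the geometric structure of $x\mapsto 1-\rho^x$.

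\textbf{Setup.} Fix $S\subseteq S'$ with $g_{\rho,k}(S)\neq g_{\rho,k}(S')$, and fix $e\notin S'$. Since $x\mapsto k(1-\rho^x)$ is strictly increasing, the hypothesis is equivalent to $c(S')>c(S)$. Each summand $\max_{i\in S}c_m(i)$ is a max-coverage function and takes values in the grid $\{0,1/n_m,\dots,1\}$. Hence $c$ is monotone and submodular, and $c(S')-c(S)$ is a sum of nonnegative grid increments. Because it is nonzero, at least one coordinate $m$ strictly increases, by at least $1/n_m\ge\Delta$. So $\delta:=c(S')-c(S)\ge\Delta$.

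\textbf{Transfer to $g_{\rho,k}$.} Write $a:=c(S\cup\{e\})-c(S)$ and $b:=c(S'\cup\{e\})-c(S')$. By submodularity of $c$, $0\le b\le a$. Then
\begin{equation*}
g_{\rho,k}(S'\cup\{e\})-g_{\rho,k}(S')=k\rho^{c(S')}(1-\rho^{b})\le k\rho^{c(S)}\rho^{\delta}(1-\rho^{a})\le \rho^{\Delta}\bigl(g_{\rho,k}(S\cup\{e\})-g_{\rho,k}(S)\bigr).
\end{equation*}
The middle inequality uses $\rho^{c(S')}=\rho^{c(S)}\rho^{\delta}$ together with $1-\rho^b\le 1-\rho^a$, which holds because $b\le a$ and $\rho\in(0,1)$. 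The last inequality uses $\rho^\delta\le\rho^\Delta$, which holds because $\delta\ge\Delta$. This is exactly Definition~\ref{def:strict_submodular} with $1-\omega=\rho^\Delta$, that is, $\omega=1-\rho^\Delta\in(0,1)$. Monotonicity of $g_{\rho,k}$ follows from monotonicity of $c$. If $a=0$, both sides vanish, so that case is trivial.

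\textbf{Main obstacle.} The only subtle point is the lower bound $\delta\ge\Delta$. It requires that the coverage values lie on the grid with spacing $1/n_m$ in each coordinate. It also requires that a change in $g$ forces some coordinate's maximum to increase by a full grid step; a positive increase cannot be spread as arbitrarily small amounts, since each coordinate either stays fixed or jumps by at least $1/n_m$. Submodularity of the max-coverage function is standard: for each $m$ the marginal gain is $(c_m(e)-\max_{S}c_m)^+$, which is nonincreasing as the set grows, and a sum of submodular functions is submodular.
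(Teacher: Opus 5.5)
Your proposal is correct and follows essentially the same route as the paper: the grid structure gives $c(S')-c(S)\ge\Delta$, submodularity of $c$ lets you compare increments, and the factorization $k\rho^{c(S')}(1-\rho^{b})\le \rho^{\Delta}\,k\rho^{c(S)}(1-\rho^{a})$ gives the $(1-\rho^{\Delta})$-strict inequality. The differences are cosmetic: you treat general $S\subseteq S'$ directly instead of the paper's reduction to $S'=S\cup\{e'\}$, and you justify the grid-spacing bound $c(S')-c(S)\ge\Delta$, which the paper only asserts.
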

\begin{proof}[Proof of Proposition~\ref{prop:categorial_level_functions}]
    Let $S'=S \cup \{e'\}$ and $e \notin S'$. To show strict submodularity, it is enough to show that if $g_{\rho,k}(S) \neq g_{\rho,k}(S')$, then the following holds. 
    \begin{equation} \label{eq:categorial_level_functions_1}
        g_{\rho,k}(S'\cup\{e\}) -g_{\rho,k}(S') \le \rho^\Delta[g_{\rho,k}(S\cup\{e\}) -g_{\rho,k}(S) ]
    \end{equation}

    Suppose that $S$ and $S'$ satisfy $g_{\rho,k}(S) \neq g_{\rho,k}(S')$. Since $h(x)=k(1-\rho^x)$ is a one-to-one function, we have $c(S) \neq c(S')$. Then, by the definition of $\Delta$, it holds that $c(S')- c(S) \ge \Delta$. 

    By the (level-coverage) definition  of the function $c$, $c$ is submodular, and hence, 
    \begin{align}\label{eq:submod_c}
        c(S'\cup\{e'\})-c(S') \le c(S \cup\{e\})-c(S). 
    \end{align}

    Then, 
    \begin{align*}
        g(S' \cup\{e\})-g(S')
        &=k\left(1-\rho^{c(S' \cup\{e\})}\right)-k\left(1-\rho^{c(S')}\right) \\
        &=k \left(\rho^{c(S')}-\rho^{c(S' \cup\{e\})}\right)\\
        &\le k \left(\rho^{c(S')}-\rho^{c(S \cup\{e\})+c(S')-c(S)}\right) \tag{$\because \rho<1$ \text{ and Eq.~\ref{eq:submod_c}}}
        \\
        &\le k \rho^{c(S')}\left(1-\rho^{c(S \cup\{e\})-c(S)}\right)\\
        &\le k \rho^{c(S)+\Delta}\left(1-\rho^{c(S \cup\{e\})-c(S)}\right) \tag{$\because \rho<1 \text{ and } c(S') \ge c(S)+\Delta$}
        \\
        &=k \rho^\Delta \left(\rho^{c(S)}-\rho^{c(S \cup\{e\})}\right) \\
        &=k \rho^\Delta \left[\left(1-\rho^{c(S \cup\{e\}}\right)-\left(1-\rho^{c(S)}\right)\right] \\
        &=\rho^\Delta[g(S' \cup\{e\})-g(S')] \, ,
    \end{align*}
    which results in that $g$ is $(1-\rho^\Delta)$-strict submodular. 
\end{proof}

\begin{remark}
    In Example~\ref{ex:categorial_level_functions}, we simply use $c(S):=\sum_{m\in[M]}\max_{i\in S}c_m(i)$ and $g_{\rho,k}(S)=h(c(S))$ where $h(x):=k(1-\rho^x)$. However, if $c$ is of a form with minimum increase (e.g., max or sum) and $h$ is chosen as a strictly concave function, then $g=h(c(S))$ becomes $\omega$-strict submodular for some $\omega \in (0,1)$.
\end{remark}

\section{Proof of Theorem~\ref{thm:ap_rate}} \label{ap_sec:pf_thm_ap_rate}
\begin{proof}[Proof of Theorem~\ref{thm:ap_rate}] 

For an assortment $S \in \Scal$, let us define $f_t(S)$ as follows:
\begin{equation} \label{eq:true_logsumexp}
    f_t(S) = \log\left( \sum_{j \in S} \exp(\xb_{tj}^\top \thetab^* + \lambda^* g_t(S)) \right)
    = \log\left( \sum_{j \in S} \exp(\xb_{tj}^\top \thetab^*) \right) + \lambda^* g_t(S) \, .    
\end{equation}
Also, we abbreviate $R_t(S, \thetab^*, \lambda^*)$ as $R_t(S)$.
If we let $\psi_t := \exp(f_t(S_t^{\text{greedy}}))$, then, by the definition of $f_t$, 
\[R_t(S_t^{\text{greedy}})  = \frac{\psi_t}{1 + \psi_t}. \quad 
\] 

By the additivity of submodular functions, $f_t$ is monotone and submodular, and hence the greedy solution for maximizing $f_t$ can achieve $(1 - \tfrac{1}{e})$-approximation rate~\citep{Nemhauser1978}, which means  
\[
f_t(S_t^{\text{greedy}}) \geq \left(1 - \tfrac{1}{e} \right) f_t(S_t^*) \, .
\]
Therefore, we have 
\[
R_t(S_t^*) = \frac{\exp(f_t(S_t^*))}{1 + \exp(f_t(S_t^*))}
\leq \frac{\exp\left( \frac{e}{e - 1} \cdot f_t(S_t^{\text{greedy}}) \right)}{1 + \exp\left( \frac{e}{e - 1} \cdot f_t(S_t^{\text{greedy}}) \right)}
= \frac{\psi_t^\alpha}{1 + \psi_t^\alpha}, 
\]
where we denote $\alpha = \frac{e}{e - 1}$. 

To get the approximation rate, we want to bound the below function 
\[
h(\psi) := \left( \frac{\psi^\alpha}{1 + \psi^\alpha} \right) / \left( \frac{\psi}{1 + \psi} \right)
= \frac{\psi^\alpha (1 + \psi)}{\psi (1 + \psi^\alpha)} \, .
\]

Since $h'(\psi) = \frac{1}{( \psi + \psi^{\alpha + 1})^2}
\times \psi^\alpha \left( -\psi^\alpha + \alpha \psi + (\alpha - 1) \right)$, the equation $h'(\psi) = 0$ has a unique solution $\psi_0 > 0$, which is the maximum point in $\mathbb{R}_+$. Since $h$ has the maximum at $\psi_0$ in $\RR_+$, 
\[
{R_t(S_t^*)\over R_t(S_t^{\text{greedy}})}  \leq \left( \frac{\psi_t^\alpha}{1 + \psi_t^\alpha} \right) / \left( \frac{\psi_t}{1 + \psi_t} \right)=h(\psi_t) \le h(\psi_0) \, ,
\]

which implies that 
\[
R_t(S_t^{\text{greedy}}) \geq {1 \over h(\psi_0)} ~ R_t(S_t^*)=\frac{\psi_0 (1 + \psi_0^\alpha)}{\psi_0^\alpha (1 + \psi_0)}R_t(S_t^*) \, .
\]

Since $\psi_0$ satisfies $\psi_0^\alpha = \alpha \psi_0 + (\alpha - 1)$, we have $\psi_0>1$, and hence, 
\begin{align*}
  h(\psi_0) &= \frac{\psi_0^\alpha + \psi_0^{\alpha+1}}{\psi_0 + \psi_0^{\alpha+1}}
= \frac{\alpha \psi_0 + (\alpha - 1) + \alpha \psi_0^2 + (\alpha - 1) \psi_0}
{\psi_0 + \alpha \psi_0^2 + (\alpha - 1) \psi_0} \\
&= \frac{\alpha \psi_0^2 + (2\alpha - 1) \psi_0 + (\alpha - 1)}{\alpha \psi_0^2 + \alpha \psi_0}
= \frac{(\alpha - 1)(\psi_0 + 1)}{\alpha \psi_0 + \alpha \psi_0}
= 1 + \frac{\alpha - 1}{\alpha \psi_0} \\
& < 1 + \frac{1}{e} \, .
\end{align*}
Therefore, the approximation rate $1 \over h(\psi_0) $ is greater than $e \over e+1$, which results in
\begin{equation*}
    \left( 1 - \frac{1}{e+1} \right) R_t(S_t^*) \le \frac{1}{h(\psi_0)} R_t(S_t^*) \le R_t(S_t^{\text{greedy}}) \, .
\end{equation*}
\end{proof}

\section{Lower Bound}~\label{ap_sec:lb}

\subsection{Proof of Theorem~\ref{thm:lb}}
The proof closely follows the lower-bound arguments developed for the MNL bandit setting~\citep{chen2020dynamic, lee2024nearly}. However, unlike the standard MNL setting—where the diversity function $g$ can be treated as a constant—our framework imposes Assumption~\ref{assm:non-degeneracy}, which prevents $g$ from being a constant function. Consequently, we derive a lower bound where $g$ is non-constant and strict submodular. 

\begin{proof}[Proof of Theorem~\ref{thm:lb}]
    Let $\lambda={1 \over 2}$ and $\epsilon \in \big(0, {1 / d^{3/2}}\big)$ that will be specified later. For every subset $V \subset [d]$, we define $\thetab_V \in \RR^d$ as $[\thetab_V]_j= \epsilon$ for $j \in V$, and $[\thetab_V]_j= 0$ for $j \notin V$, and $\Theta : = \{ \thetab_V : V \subset \Vcal_{d/4}\}$ where $\Vcal_{d/4}=\{V \subset [d], |V|= {d\over 4}\}$. Then, for $V \in \Vcal_{d/4}$, $\|\thetab_V\|_2 \le \sqrt{d \epsilon^2 \over 4 }\le {1 \over 2}$. 

    We consider the $K\times |\Vcal_{d/4}|$ context vectors invariant across rounds $t$. For each $U \in \Vcal_{d/4}$, there are identical $K$ context vectors with feature $x_U$, where $[x_U]_j = 1/ \sqrt{d}$ for $j \in U$ and $[x_U]_j=0$ for $j \notin U$. Then, for $U \in \Vcal_{d/4}$, $\|x_U\|_2 \le \sqrt{{d \over 4}\cdot{1\over d} }={1 \over 2}$. 

    Let $U_0=[d/4] \in \Vcal_{d/4}$ and $x_0 :=x_{U^0}=({1 \over \sqrt{d}}, {1 \over \sqrt{d}},\ldots, {1 \over \sqrt{d}}, 0, \ldots , 0 \big)$. We define $g(S):=1$ only if there exists $U \neq U_0$ such that $x_U \in S$, and otherwise (i.e. $S$ contains only $x_0$'s), we set $g(S):=0$. Then $g$ satisfies $0 \le g(S) \le 1$ (Assumption~\ref{assump_bdd}). Since $g(S_0)=0$ for $S_0=\{x_0 , \ldots, x_0\}$ and $g(S)=1$ for all $S\neq S_0$ with size $k$, $g$ satisfies Assumption~\ref{assm:non-degeneracy}. Furthermore, $g$ is monotone and strict submodular for all $\omega \in (0,1)$. 

    Since the worst-case regret in the worst-case problem instances is bounded below by the average of the worst-case expected regret of parameter instances in $\Theta$, we obtain 
    \begin{align*}
                \sup_{\thetab, \lambda}\mathbb{E}_{\thetab, \lambda}^\pi \left[\Rcal(T|\thetab, \lambda)\right]&=\sup_{\thetab, \lambda}\mathbb{E}_{\thetab, \lambda}^\pi \left[\sum_{t=1}^T R_t (S_t^*, \thetab, \lambda)-R_t(S_t, \thetab,\lambda) \right]\\
                &\ge {1 \over |\Vcal_{d/4}|}\sum_{V \in \Vcal_{d/4}}\mathbb{E}_{\thetab, \lambda}^\pi \left[\sum_{t=1}^T R_t (S_t^*, \thetab, \lambda)-R_t(S_t, \thetab,\lambda) \right].
    \end{align*}
    
    Let $\{S_t\}_{t=1}^T$ be a sequence of assortments generated by $\pi$. For a fixed $V$, we define $\tilde{S}_t:=\{x_{\tilde{U}_t}, \ldots, x_{\tilde{U}_t}\}$ as the assortment that contains an identical feature vector $x_{\tilde{U}_t}$, where $x_{\tilde{U}_t}:=\argmax_{x_U  \in S_t}x_U^\top \thetab_V$. Furthermore, we simplify notation by $\mathbb{E}_V:= \mathbb{E}_{\thetab_V, \lambda}^\pi$ and $\mathbb{P}_V:=\mathbb{P}_{\thetab_V, \lambda}^\pi$. 
    
    By Lemma~\ref{lem:lb_1}, for any $V \in \Vcal_{d/4}$, we have$\sum_{i \in S^*}p_t(i|S^*, \thetab_V, \lambda)-\sum_{i \in S^t}p_t(i|S_t, \thetab_V, \lambda)\ge {e^{-1/2}(K-1) \over (e^{-1/2}+Ke)^2}{\delta \epsilon \over 2 \sqrt{d}}$, where $\delta:=d/4 -|\tilde{U}_t \cap V|$. Thus, we have that  
    \begin{align*}
                &{1 \over |\Vcal_{d/4}|}\sum_{V \in \Vcal_{d/4}}\mathbb{E}_V\left[\sum_{t=1}^T R_t (S_t^*, \thetab, \lambda)-R_t(S_t, \thetab,\lambda) \right]\\
                &={1 \over |\Vcal_{d/4}|}\sum_{V \in \Vcal_{d/4}}\mathbb{E}_V \left[\sum_{t=1}^T \left[\sum_{i \in S^*}p_t(i|S_t, \thetab_V, \lambda)-\sum_{i \in S^t}p_t(i|S_t, \thetab_V, \lambda)\right]\right]\\
                &\ge {1 \over |\Vcal_{d/4}|} {e^{-1/2}(K-1) \over (e^{-1/2}+Ke)^2}{ \epsilon \over 2 \sqrt{d}}\sum_{V \in \Vcal_{d/4}}\mathbb{E}_V \left[\sum_{t=1}^T (d/4 -|\tilde{U}_t \cap V|)\right] \\
                &\ge  {e^{-1/2}(K-1) \over (e^{-1/2}+Ke)^2}{ \epsilon \over 2 \sqrt{d}}\left({dT \over 4}-{1 \over |\Vcal_{d/4}|}\sum_{V \in \Vcal_{d/4}}\sum_{j \in V}\left[\mathbb{E}_V \left[\sum_{t=1}^T \mathds{1}\{j \in \tilde{U}_t\}\right]\right]\right)                 
    \end{align*}    
    For $j\in V$, we define the random variables $\tilde{M}_j := \sum_{t=1}^T \mathds{1}\{j \in \tilde{U}_t\}$. Then, the right hand side is equal to 
       \begin{align*}
                & {e^{-1/2}(K-1) \over (e^{-1/2}+Ke)^2}{ \epsilon \over 2 \sqrt{d}}\left({dT \over 4}-{1 \over |\Vcal_{d/4}|}\sum_{V \in \Vcal_{d/4}}\sum_{j \in V}\mathbb{E}_V\left[\tilde{M}_j\right]\right) \\
                & = {e^{-1/2}(K-1) \over (e^{-1/2}+Ke)^2}{ \epsilon \over 2 \sqrt{d}}\left({dT \over 4}-{1 \over |\Vcal_{d/4}|}\sum_{V \in \Vcal_{d/4-1}}\sum_{j \notin V}\mathbb{E}_{V\cup\{j\}} \left[\tilde{M}_j\right]\right)  \\
                &\ge{e^{-1/2}(K-1) \over (e^{-1/2}+Ke)^2}{ \epsilon \over 2 \sqrt{d}}\left({dT \over 4}-{|\Vcal_{d/4-1}| \over |\Vcal_{d/4}|}\max_{V \in \Vcal_{d/4-1}}\sum_{j \notin V}\mathbb{E}_{V\cup\{j\}} \left[\tilde{M}_j\right]\right)~~~~~~~~~~~~~~~~~~~~~~~~~~~~~~~~~~~~~~~~
        \end{align*}
        \begin{align*}
                &={e^{-1/2}(K-1) \over (e^{-1/2}+Ke)^2}{ \epsilon \over 2 \sqrt{d}}\left({dT \over 4}-{|\Vcal_{d/4-1}| \over |\Vcal_{d/4}|}\max_{V \in \Vcal_{d/4-1}}\sum_{j \notin V}\mathbb{E}_{V} \left[\tilde{M}_j\right]+\mathbb{E}_{V\cup\{j\}} \left[\tilde{M}_j\right]-\mathbb{E}_{V} \left[\tilde{M}_j\right]\right)\\
                &\ge{e^{-1/2}(K-1) \over (e^{-1/2}+Ke)^2}{ \epsilon \over 2 \sqrt{d}}\left({dT \over 4}-{1 \over 3}\cdot {dT \over 4}-{1 \over 3}\max_{V \in \Vcal_{d/4-1}}\sum_{j \notin V}\left|\mathbb{E}_{V\cup\{j\}} \left[\tilde{M}_j\right]-\mathbb{E}_{V} \left[\tilde{M}_j\right]\right|\right)\\
                &={e^{-1/2}(K-1) \over (e^{-1/2}+Ke)^2}{ \epsilon \over 2 \sqrt{d}}\left({dT \over 6}-{1 \over 3}\max_{V \in \Vcal_{d/4-1}}\sum_{j \notin V}\left|\mathbb{E}_{V\cup\{j\}} \left[\tilde{M}_j\right]-\mathbb{E}_{V} \left[\tilde{M}_j\right]\right|\right)\\
                &\ge{e^{-1/2}(K-1) \over (e^{-1/2}+Ke)^2}{ \epsilon \over 2 \sqrt{d}}\left({dT \over 6}-{1 \over 3}\max_{V \in \Vcal_{d/4-1}}\sum_{j=1}^d\left|\mathbb{E}_{V\cup\{j\}} \left[\tilde{M}_j\right]-\mathbb{E}_{V} \left[\tilde{M}_j\right]\right|\right)\\
                &={e^{-1/2}(K-1) \over (e^{-1/2}+Ke)^2}{ \epsilon \over 2 \sqrt{d}}\left({dT \over 6}-{1 \over 3}\sum_{j=1}^d \max_{V \in \Vcal_{d/4-1}}\left|\mathbb{E}_{V\cup\{j\}} \left[\tilde{M}_j\right]-\mathbb{E}_{V} \left[\tilde{M}_j\right]\right|\right). 
    \end{align*}    

    We bound $\max_{V \in \Vcal_{d/4-1}}\sum_{j \notin V}|\mathbb{E}_{V\cup\{j\}} \left[\tilde{M}_j\right]-\mathbb{E}_{V} \left[\tilde{M}_j\right]|$ using KL divergence. By the definition of $\tilde{M}_j$, we can bound 
    \begin{align*}
        \left|\mathbb{E}_{V\cup\{j\}} \left[\tilde{M}_j\right]-\mathbb{E}_{V} \left[\tilde{M}_j\right]\right| 
        &\le \sum_{t=0}^T t\cdot \left| \mathbb{P}_{V} \left[\tilde{M}_j = t\right] - \mathbb{P}_{V\cup\{j\}} \left[\tilde{M}_j=t\right] \right|\\
        &\le T \cdot \sum_{t=0}^T  \left| \mathbb{P}_{V} \left[\tilde{M}_j = t\right] - \mathbb{P}_{V\cup\{j\}} \left[\tilde{M}_j=t\right] \right|\\
        &\le T \cdot \sup_A  | \mathbb{P}_{V} (A) - \mathbb{P}_{V\cup\{j\}} (A)| \\
        &\le T \cdot \sqrt{{1 \over 2} \text{KL}(\mathbb{P}_{V} \| \mathbb{P}_{V\cup\{j\}})},
    \end{align*}
    where the last inequality holds by Pinsker's inequality. 

    By Lemma~\ref{lem:lb_2}, we have $\text{KL}(\mathbb{P}_{V} \| \mathbb{P}_{V\cup\{j\}}) \le C \cdot {K \over (1+K)^2} \cdot {\mathbb{E}_V\left[\tilde{M}_j\right]\epsilon^2 \over d}$, for some $C>0$. Therefore, 
    \begin{align*}
        &{e^{-1/2}(K-1) \over (e^{-1/2}+Ke)^2}{ \epsilon \over 2 \sqrt{d}}\left({dT \over 6}-{1 \over 3}\sum_{j=1}^d \max_{V \in \Vcal_{d/4-1}}\left|\mathbb{E}_{V\cup\{j\}} \left[\tilde{M}_j\right]-\mathbb{E}_{V} \left[\tilde{M}_j\right]\right|\right)\\
        & \ge {e^{-1/2}(K-1) \over (e^{-1/2}+Ke)^2}{ \epsilon \over 2 \sqrt{d}}\left({dT \over 6}-{1 \over 3}\sum_{j=1}^d T \cdot \sqrt{{1 \over 2} \text{KL}(\mathbb{P}_{V} \| \mathbb{P}_{V\cup\{j\}})}\right) \\
        & \ge {e^{-1/2}(K-1) \over (e^{-1/2}+Ke)^2}{ \epsilon \over 2 \sqrt{d}}\left({dT \over 6}-{T \sqrt{d} \over 3}\cdot \sqrt{\sum_{j=1}^d {1 \over 2} \text{KL}(\mathbb{P}_{V} \| \mathbb{P}_{V\cup\{j\}})}\right) \\
        & \ge {e^{-1/2}(K-1) \over (e^{-1/2}+Ke)^2}{ \epsilon \over 2 \sqrt{d}}\left({dT \over 6}-{T \sqrt{d} \over 3}\cdot \sqrt{\sum_{j=1}^d {1 \over 2}  C \cdot {K \over (1+K)^2} \cdot {\mathbb{E}_V\left[\tilde{M}_j\right]\epsilon^2 \over d}}\right) \\
         & \ge {e^{-1/2}(K-1) \over (e^{-1/2}+Ke)^2}{ \epsilon \over 2 \sqrt{d}}\left({dT \over 6}-{T \sqrt{d} \over 3}\cdot \sqrt{{C \over 8}  \cdot {K \over (1+K)^2} \cdot {T\epsilon^2}}\right) ~~~~~~~(\because \sum_{j=1}^d \mathbb{E}_V\left[\tilde{M}_j\right] \le{ dT \over 4}).\\
    \end{align*}

    By setting $\epsilon = \sqrt{{d \over 2CT} \cdot {(1+K)^2\over K}}$, we finally have that 

    \begin{align*}
        \sup_{\thetab, \lambda}\mathbb{E}_{\thetab, \lambda}^\pi \left[\Rcal(T|\thetab, \lambda)\right]&=\sup_{\thetab, \lambda}\mathbb{E}_{\thetab, \lambda}^\pi \left[\sum_{t=1}^T R_t (S_t^*, \thetab, \lambda)-R_t(S_t, \thetab,\lambda) \right]\\
        &\ge {e^{-1/2}(K-1) \over (e^{-1/2}+Ke)^2}{ \epsilon \over 2 \sqrt{d}}\left({dT \over 6}-{T \sqrt{d} \over 3}\cdot \sqrt{{C \over 8}  \cdot {K \over (1+K)^2} \cdot {T\epsilon^2}}\right) \\
        &\ge {e^{-1/2}(K-1) \over (e^{-1/2}+Ke)^2}{  \sqrt{{1 \over 8CT} \cdot {(1+K)^2\over K}}}\left({dT \over 6}-{dT  \over 12}\right) \\
        & = \Omega\left({d \sqrt{T} \over \sqrt{K}}\right).
    \end{align*}

\end{proof}

\subsection{Technical Lemmas for Theorem~\ref{thm:lb}}
\begin{lemma}\label{lem:lb_1} Fix $\epsilon \in (0, 1/d^{3/2})$, $\lambda = {1 \over 2}$, and $V \in \Vcal_{d/4}$, and define $\delta:=d/4 -|\tilde{U}_t \cap V|$. Then, 
    \begin{equation*}
        \sum_{i \in S^*}p_t(i|S^*, \thetab_V, \lambda)-\sum_{i \in S^t}p_t(i|S_t, \thetab_V, \lambda)\ge {e^{-1/2}(K-1) \over (e^{-1/2}+Ke)^2}{\delta \epsilon \over 2 \sqrt{d}}. 
    \end{equation*}
\end{lemma}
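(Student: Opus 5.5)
We compare the per-round reward of the optimal assortment $S^*$ with that of $S_t$ under $\thetab_V$, $\lambda=\tfrac12$, routing through the auxiliary assortment $\tilde{S}_t$.

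\emph{The optimal assortment.} Write $u_U := x_U^\top\thetab_V = \epsilon|U\cap V|/\sqrt{d}$. Since $R_t(S) = \Phi/(1+\Phi)$ with
\[
\Phi(S) = e^{\lambda g(S)}\sum_{i\in S} e^{u_i},
\]
the reward is increasing in $\Phi$. Each $u_U$ lies in $[0,\epsilon\sqrt d/4]$, which is tiny because $\epsilon<d^{-3/2}$. The diversity bonus $e^{1/2}$, by contrast, is a constant factor. Hence any assortment of size $K$ containing some $x_U$ with $U\neq U_0$ has $g=1$ and beats every assortment with $g=0$. Among these, $\Phi$ is maximized by $K$ copies of $x_V$; if $V=U_0$, we use $K-1$ copies of $x_V$ and one other vector, at a loss of order $\epsilon/\sqrt d$. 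So, up to the $U_0$ corner case, $S^*$ is $K$ copies of the best feature, and
\[
\Phi^* = K e^{1/2} e^{\epsilon\sqrt d/4}.
\]

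\emph{Reduction to $\tilde S_t$.} Next we bound $\Phi(S_t)$. Every item of $S_t$ has utility at most $u_{\tilde U_t}$, and $g(S_t)\le 1$. If $|S_t|<K$, the reward is smaller still, because adding items only increases $\Phi$. Therefore
\[
\Phi(S_t) \le K e^{1/2} e^{u_{\tilde U_t}},
\]
with $u_{\tilde U_t} = \epsilon(d/4-\delta)/\sqrt d$.

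\emph{Gap in $\Phi$.} Subtracting,
\[
\Phi^*-\Phi(S_t) \;\ge\; K e^{1/2}\bigl(e^{\epsilon\sqrt d/4}-e^{\epsilon(d/4-\delta)/\sqrt d}\bigr) \;\ge\; K e^{1/2}\,\frac{\delta\epsilon}{\sqrt d},
\]
using $e^a-e^b\ge a-b$ for $a\ge b\ge 0$.

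\emph{Converting to rewards.} Apply the mean value theorem to $x\mapsto x/(1+x)$, whose derivative is $1/(1+x)^2$. This gives
\[
R(S^*)-R(S_t) \;\ge\; \frac{\Phi^*-\Phi(S_t)}{(1+\Phi^*)^2}.
\]
Dividing numerator and denominator by $e$ (the factor $e^{1/2}$ squared), this matches the target form
\[
\frac{(\cdot)}{(e^{-1/2}+Ke)^2}
\]
after bounding $e^{\epsilon\sqrt d/4}\le e^{1/2}$ in the denominator. The numerator $K$ can safely be weakened to $e^{-1/2}(K-1)$, and a factor $\tfrac12$ absorbs the constants.

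\emph{Main obstacle.} The hard part is the case where $S_t$ consists only of copies of $x_0$, so $g(S_t)=0$, together with the case $V=U_0$, where the optimum must sacrifice one slot. These cases need separate treatment.

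When $g(S_t)=0$, the loss of the factor $e^{1/2}$ gives a constant gap, which dominates $\delta\epsilon/\sqrt d$ since $\delta\le d/4$ and $\epsilon<d^{-3/2}$.

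When $V=U_0$, I will compare $S_t$ with the assortment made of $K-1$ best copies plus one filler. This is presumably where the factor $K-1$ in the statement comes from: only $K-1$ items carry the $\delta$-dependent gap. I will redo the bound with $(K-1)e^{u_{\tilde U_t}}$ in place of $Ke^{u_{\tilde U_t}}$.
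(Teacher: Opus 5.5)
Your proposal is correct and follows essentially the paper's proof: the same split on whether $V=U_0$, the same bound of $S_t$ by $K$ copies of its best item with $g=1$, and, when $V=U_0$, the same pairing of the filler $x_{U'}$ against the $K$-th item. That pairing term is nonnegative once $\delta\ge 1$ (so that $\tilde U_t\neq V$), the case $\delta=0$ is trivial, and this is exactly where the paper's factor $K-1$ comes from. Your mean-value step and the bound $e^a-e^b\ge a-b$ do the work of the paper's common-denominator algebra and Taylor estimate more cleanly. The separate $g(S_t)=0$ case you flag needs no extra work, since $\Phi(S_t)\le Ke^{1/2}e^{u_{\tilde U_t}}$ holds for every $S_t$.
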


\begin{proof}[Proof of Lemma~\ref{lem:lb_1}]  
We split the proof into two cases: (i) $V \neq [d/4]$, and (ii) $V =[d/4]$.\\

\textbf{Case 1.} $V \neq \left[d/4\right]$. 
    
     We recall $x_{\tilde{U}_t}:=\argmax_{x_U  \in S_t}x_U^\top \thetab_V$. If $\delta=0$, i.e. $\tilde{U_t}=V$, then the lemma holds trivially, thus we suppose that $\delta \neq 0$. 
     
     If $V \neq \left[d/4\right]$, it is obvious that $S^* = \{x_V, \ldots, x_V\}$ with $g(S^*)=1$, and so we have 
    \begin{align*}
        &\sum_{i \in S^*}p_t(i|S^*, \thetab_V, \lambda)-\sum_{i \in S^t}p_t(i|S_t, \thetab_V, \lambda) \\&\ge {K\exp(x_V^\top \thetab_V) \over e^{-1/2}+K\exp(x_V^\top \thetab_V) } - {K \exp(x_{\tilde{U}_t}^\top \thetab_V) \over \exp^{-\lambda g(S_t)}+K\exp(x_{\tilde{U}_t}^\top \thetab_V) } \\
        & \ge {K\exp(x_V^\top \thetab_V) \over e^{-1/2}+K\exp(x_V^\top \thetab_V) } - {K \exp(x_{\tilde{U}_t}^\top \thetab_V) \over \exp^{-1/2}+K\exp(x_{\tilde{U}_t}^\top \thetab_V) } ~~~~ (\because g(S_t)\le 1) \\
        & = {e^{-1/2} K(\exp(x_V^\top \thetab_V)-\exp(x_{\tilde{U}_t}^\top \thetab_V)) \over (e^{-1/2}+K\exp(x_V^\top \thetab_V))(\exp^{-1/2}+K\exp(x_{\tilde{U}_t}^\top \thetab_V)) }\\
        & = {e^{-1/2}K(\exp(x_V^\top \thetab_V)-\exp(x_{\tilde{U}_t}^\top \thetab_V)) \over (e^{-1/2}+Ke)^2 }~~~~~~~~~~~~~~~~~~~~~~~~~~~~ (\because \exp(x_U^\top \thetab_V) \le e, \forall U \in \Vcal_{d/4}) \\
        &\ge {e^{-1/2}K((x_V-X_{\tilde{U}_t})^\top \thetab_V - (x_{\tilde{U}_t}^\top \thetab_V)^2/2) \over (e^{-1/2}+Ke)^2 }~~~~~~~~~~~~~~~~~~~~ (\because 1+a \le e^a \le 1 + a + a^2/2, \forall a \in \left[0,1\right])\\
        &\ge {e^{-1/2}K(\delta \epsilon / \sqrt{d}- (\sqrt{d}\epsilon)^2/2) \over (e^{-1/2}+Ke)^2 }\\
        &\ge {e^{-1/2}K\delta \epsilon \over 2 \sqrt{d}(e^{-1/2}+Ke)^2 } ~~~~~~~~~~~~~~~~~~~~~~~~~~~~~~~~~~~~~~~~~~~~~~~~~~~~~~~~~~ (\because (\sqrt{d}\epsilon)^2 \le \epsilon / \sqrt{d} \le \delta \epsilon / \sqrt{d})\\
        &> {e^{-1/2}(K-1)\delta \epsilon \over 2 \sqrt{d}(e^{-1/2}+Ke)^2 }.
    \end{align*}

    \textbf{Case 2.} $V =\left[d/4\right]$. 

    We recall that $g(S)=0$ if $S$ contains only $x_{U_0}$'s, and otherwise $g(S)=1$.  For $V=\left[d/4\right]=U_0$, since $g(\{x_0, \ldots x_0\}) =0$, we have to compare whether it is better to fill only $x_0$'s in $S^*$  or add another $x_{U'}$ in $S^*$, because $g(S^*)$ becomes $1$ in the second case. Specifically, since the following inequality holds:
    \begin{align*}
        {e^{\lambda g(S_0)}}K\exp(x_0^\top \thetab_{U_0} ) &= K\exp(x_0^\top \thetab_{U_0} ) \\
        &< {e^{1/2}} (K-1)\exp(x_0^\top \thetab_{U_0} ) \\
        &< {e^{\lambda g(\{x_{U_0}, \ldots,x_{U_0}, x_{U'}\})}} \left((K-1)\exp(x_0^\top \thetab_{U_0} )+\exp(x_{U'}^\top \thetab_{U_0})\right), 
    \end{align*}    
    we obtain that $S^*=\{x_{U_0}, \ldots, x_{U_0}, x_{U'}\}$ for $|U' \cap \left[d/4\right]|=d/4 -1$, and $g(S^*)=1$. 
    
    If $x_0 \in S_t$, then $\tilde{U}_t = U_0 = V$. In this case $\delta=0$, and the lemma holds trivially. Thus, we suppose that $x_0 \notin S_t$. Then, $g(S_t) = 1$, and we have that 
    \begin{align*}
        &\sum_{i \in S^*}p_t(i|S^*, \thetab_V, \lambda)-\sum_{i \in S^t}p_t(i|S_t, \thetab_V, \lambda) \\
        &\ge \left(1-{e^{-1/2} \over e^{-1/2} + (K-1)\exp(x_0^\top \thetab_V) + \exp(x_{U'}^\top \thetab_V)}\right) - \left(1-{e^{-1/2} \over e^{-1/2} + K \exp(x_{\tilde{U}_t}^\top \thetab_V)} \right)\\
        & = e^{-1/2} { (K-1)\exp(x_0^\top \thetab_V) + \exp(x_{U'}^\top \thetab_V)-K\exp(x_{\tilde{U}_t}^\top \thetab_V)) \over ( e^{-1/2} + (K-1)\exp(x_0^\top \thetab_V) + \exp(x_{U'}^\top \thetab_V))(\exp^{-1/2}+K\exp(x_{\tilde{U}_t}^\top \thetab_V)) }\\
        & =e^{-1/2}  {(K-1)(\exp(x_0^\top \thetab_V)-\exp(x_{\tilde{U}_t}^\top \thetab_V)) + (\exp(x_{U'}^\top \thetab_V)-\exp(x_{\tilde{U}_t}^\top \thetab_V)) \over (e^{-1/2}+Ke)^2 }
    \end{align*}
    Since $\tilde{U}_t \neq V$ and $U'$ satisfies $|U' \cap V| = d/4 -1$, we have that $\exp(x_{U'}^\top \thetab_V)-\exp(x_{\tilde{U}_t}^\top \thetab_V) \ge 0$. Thus the right handside is bounded by 
    \begin{align*}
        &{e^{-1/2} (K-1)(\exp(x_V^\top \thetab_V)-\exp(x_{\tilde{U}_t}^\top \thetab_V)) \over (e^{-1/2}+Ke)^2 }\\
        &\ge {v_0(K-1)((x_V-X_{\tilde{U}_t})^\top \thetab_V - (x_{\tilde{U}_t}^\top \thetab_V)^2/2) \over (e^{-1/2}+Ke)^2 }~~~~~~~~~~ (\because 1+a \le e^a \le 1 + a + a^2/2, \forall a \in \left[0,1\right])\\
        &\ge {v_0(K-1)(\delta \epsilon / \sqrt{d}- (\sqrt{d}\epsilon)^2/2) \over (e^{-1/2}+Ke)^2 }\\
        &\ge {v_0(K-1)\delta \epsilon \over 2 \sqrt{d}(e^{-1/2}+Ke)^2 } ~~~~~~~~~~~~~~~~~~~~~~~~~~~~~~~~~~~~~~~~~~~~~~~~~~~ (\because (\sqrt{d}\epsilon)^2 \le \epsilon / \sqrt{d} \le \delta \epsilon / \sqrt{d}).
    \end{align*}

\end{proof}

\begin{lemma}[Bound on KL divergence, Lemma D.2 of Lee and Oh(2024)]\label{lem:lb_2}
    For any $V \in \Vcal_{d/4-1}$ and $j \in [d]$, there exists a positive constant $C>0$ such that
    \begin{equation*}
        \text{KL}(\mathbb{P}_{V} \| \mathbb{P}_{V\cup\{j\}}) \le C \cdot {K \over (1+K)^2} \cdot {\mathbb{E}_V\left[\tilde{M}_j\right]\epsilon^2 \over d}
    \end{equation*}
\end{lemma}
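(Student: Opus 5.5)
The plan is to follow the argument of Lemma D.2 in \citet{lee2024nearly}. The extra step is to check that the diversity term plays no role, because $\lambda=\tfrac12$ and $g$ are the same under both instances. First I would apply the chain rule for KL divergence to the law of the history $(S_1,\yb_1,\ldots,S_T,\yb_T)$ generated by $\pi$. The policy's conditional law of $S_t$ given the past is identical under $\PP_V$ and $\PP_{V\cup\{j\}}$, so only the choice distributions contribute:
\begin{equation*}
\text{KL}(\PP_V\|\PP_{V\cup\{j\}})=\sum_{t=1}^T \mathbb{E}_V\Big[\text{KL}\big(p_t(\cdot\mid S_t,\thetab_V,\lambda)\,\big\|\,p_t(\cdot\mid S_t,\thetab_{V\cup\{j\}},\lambda)\big)\Big].
\end{equation*}
Under both instances the outside option has weight $e^{-\lambda g(S_t)}$ with the same $g(S_t)$. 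The two choice models are therefore MNL models on $S_t\cup\{0\}$ with a common outside utility. Their item utilities differ by $\Delta_i=\xb_{U_i}^\top(\thetab_{V\cup\{j\}}-\thetab_V)=\frac{\epsilon}{\sqrt d}\mathds 1\{j\in U_i\}$.

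Second, I would bound each per-round KL by the $\chi^2$-divergence. A second-order expansion of the log-partition function then gives
\begin{equation*}
\text{KL}\le C_0\Big(\sum_{i\in S_t}p_i\Delta_i^2-\big(\textstyle\sum_{i\in S_t}p_i\Delta_i\big)^2\Big)+O(\Delta^3),
\end{equation*}
which is the variance of $\Delta$ under the choice distribution, with the outside option carrying $\Delta_0=0$. This works because every $|\Delta_i|\le \epsilon/\sqrt d\le d^{-2}$, and all utilities and the outside weight lie in $[e^{-1/2},e]$ up to constants, so the higher-order terms are dominated.

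Third, I would evaluate this variance. As in \citet{lee2024nearly}, the instance has every $U$ represented by $K$ identical copies, and the regret computation in the proof of Theorem~\ref{thm:lb} is stated through $\tilde S_t$. I would therefore compare $S_t$ with the homogeneous assortment $\tilde S_t$ and count the coordinate $j$ through $\tilde M_j$. For a homogeneous assortment of $K$ copies with common choice probability $p=\Theta(1/(K+1))$ and outside probability $1-Kp=\Theta(1/(K+1))$, the variance equals $Kp(1-Kp)\Delta^2=\Theta\big(\tfrac{K}{(1+K)^2}\big)\tfrac{\epsilon^2}{d}\mathds 1\{j\in\tilde U_t\}$. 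Summing over $t$ and taking $\mathbb{E}_V$ gives $C\frac{K}{(1+K)^2}\frac{\mathbb{E}_V[\tilde M_j]\epsilon^2}{d}$.

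The main obstacle is the third step, which is where the $\tilde M_j$ in the statement must be matched. A general $S_t$ may mix items, some containing $j$ and some not. For such an $S_t$, the variance is bounded by $\sum_i p_i\Delta_i^2\lesssim \frac{\epsilon^2}{d}\cdot\frac{\#\{i\in S_t: j\in U_i\}}{1+K}$. This quantity is not directly controlled by $\mathds 1\{j\in\tilde U_t\}$. I would first try the reduction used in \citet{lee2024nearly}: replacing $S_t$ by $\tilde S_t$ does not decrease regret, so it suffices to lower bound regret for policies offering homogeneous assortments. If that reduction is delicate here because $g$ depends on the composition of $S_t$, I would instead redefine the counting variable as $\frac1K\sum_t\sum_{i\in S_t}\mathds 1\{j\in U_i\}$, as in \citet{lee2024nearly}. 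I would then check that the regret decomposition in Lemma~\ref{lem:lb_1} still goes through with this averaged count, at the cost of constants.
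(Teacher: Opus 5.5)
Your chain rule (Step 1), the per-round KL bound that you derive instead of quoting Lee–Oh (Step 2), and the homogeneous computation with $v_0=e^{-\lambda g}\in\{1,e^{-1/2}\}$ (Step 3) are exactly the paper's proof. The paper handles the $S_t$ versus $\tilde S_t$ mismatch you flag only by writing the per-round divergence conditional on $\tilde S_t$, i.e.\ by tacitly assuming $S_t=\tilde S_t$.

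Your worry is justified, and neither of your remedies fixes it. The chain rule produces the divergence under the assortment actually offered. If $|S_t|=K$ and $n_j$ of its items have $j\in U_i$, your variance formula gives
\begin{equation*}
\text{KL}\bigl(p_t(\cdot\mid S_t,\thetab_V,\lambda)\,\big\|\,p_t(\cdot\mid S_t,\thetab_{V\cup\{j\}},\lambda)\bigr)\asymp\frac{\epsilon^2}{d}\cdot\frac{n_j\,(K-n_j+v_0)}{(K+v_0)^2}\, .
\end{equation*}
This has the form $\frac{v_0K}{(v_0+K)^2}\frac{\epsilon^2}{d}\mathds{1}\{\cdot\}$ only when $n_j\in\{0,K\}$.

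Option (a) is not a valid reduction. $\tilde S_t$ is defined through the unknown $\thetab_V$, so ``offering $\tilde S_t$'' is not a policy. Moreover, a round-by-round comparison of rewards says nothing about information. A mixed assortment with $n_j\approx K/2$ carries $\asymp\epsilon^2/d$ information about coordinate $j$, which is order $K$ times what a homogeneous one carries.

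Option (b) loses that same factor $K$, not just a constant. With $m=n_j/K$, the display gives only $\lesssim\frac{\epsilon^2}{d}\,m$, as your own bound $\frac{\epsilon^2}{d}\frac{n_j}{1+K}$ already shows. That is the bound of \citet{chen2020dynamic}, without the factor $\frac{K}{(1+K)^2}$. That factor is exactly what turns $d\sqrt{T}/K$ into $d\sqrt{T/K}$ in Theorem~\ref{thm:lb}.

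With $\tilde M_j$ as defined, the stated inequality actually fails for general $\pi$. Take $d\ge 12$, $k\notin V\cup\{j\}$, and $U\ni j$ disjoint from $V$. Let $\pi$ always offer $K-1$ copies of $x_{V\cup\{k\}}$ and one copy of $x_U$. Under both $\thetab_V$ and $\thetab_{V\cup\{j\}}$, the best item in $S_t$ is $x_{V\cup\{k\}}$, so $\tilde M_j\equiv 0$. Yet only the $x_U$ item's utility shifts, so the divergence is $\asymp T\epsilon^2/(dK)>0$.

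So your Steps 1--3, like the paper's proof, establish the lemma only for policies with $S_t=\tilde S_t$. For arbitrary policies the statement itself has to change. With either counting variable, the factor $\frac{K}{(1+K)^2}$ is not available for mixed assortments, and no relabeling of the count can repair this.
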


\begin{proof}[Proof of Lemma~\ref{lem:lb_2}]
    In the proof of Lemma D.2 of Lee and Oh(2024), the following holds for some positive constant $C>0$. 
    \begin{equation*}
        \text{KL}(\mathbb{P}_{V} (\cdot ~|~ \tilde{S_t})~\| ~\mathbb{P}_{V\cup\{j\}} (\cdot ~|~ \tilde{S_t})) \le C \cdot {v_0 K \over (v_0+K)^2} \cdot {m_j(\tilde{S_t})\epsilon^2 \over d},
    \end{equation*}
    where $m_j(\tilde{S_t}):=\mathds{1}\{j \in \tilde{U}_t\}$, and $v_0$ is a outside option parameter, defined as $\exp(-\lambda g(\tilde{S_t}))$ in our setting. Since $g(\tilde{S_t})$ has a value of $0$ or $1$, we have that     \begin{equation*}
        \text{KL}(\mathbb{P}_{V} (\cdot ~|~ \tilde{S_t})~\| ~\mathbb{P}_{V\cup\{j\}} (\cdot ~|~ \tilde{S_t})) \le C \cdot {K \over (1+K)^2} \cdot {m_j(\tilde{S_t})\epsilon^2 \over d},
    \end{equation*}
    Therefore, by the chain rule of relative entropy, we have that 
    \begin{align*}
        \text{KL}(\mathbb{P}_{V} || \mathbb{P}_{V\cup\{j\}}) &= \sum_{t=1}^T \mathbb{E}_V\left[\text{KL}(\mathbb{P}_{V} (\cdot ~|~ \tilde{S_t})~\| ~\mathbb{P}_{V\cup\{j\}} (\cdot ~|~ \tilde{S_t}))\right] \\
        & \le\sum_{t=1}^T C \cdot {K \over (1+K)^2} \cdot {\mathbb{E}_V\left[m_j(\tilde{S_t})\right]\epsilon^2 \over d}\\
        & = C \cdot {K \over (1+K)^2} \cdot {\mathbb{E}_V\left[\tilde{M}_j\right]\epsilon^2 \over d}.        
    \end{align*}
\end{proof}

\section{Regret Upper Bound of Algorithm~\ref{alg:ofu_dmnl} (\texorpdfstring{$\ofudmnl$)}{OFU-DMNL}} \label{ap_sec:pf_thm2}

\subsection{Technical Lemmas}
In this section, we introduce technical lemmas used to derive the regret bound of Algorithm~\ref{alg:ofu_dmnl}.

\begin{lemma} \label{lem:ucb_submodular}
    Suppose Assumptions~\ref{assump_bdd} and~\ref{assump_str_submod} hold, and set $\nu={\omega \over 2}$ in Algorithm~\ref{alg:ofu_dmnl}. Let us define the event $\Tcal^e$ as the set of rounds corresponding to adaptive exploration, as follows:
    \begin{equation}\label{eq:Tcal}
        \Tcal^e:=\left\{t\in[T] : \|[{\bf{0}_d},1]\|_{\Hb_t^{-1}}>{{\omega \hat{\lambda}_t}\over 2 \alpha_t}\right\} \, .
    \end{equation}
    Then, for $t\notin\Tcal^e$, $\tilde{f}_t$ is monotone and submodular where $\tilde{f}_t$ is defined as follows:
    \begin{equation*}
    \tilde{f}_t(S):= \log \left( \sum_{i \in S} \exp \left( \zb_{ti}(S)^\top {\bf{w}}_{t} + \alpha_t\|  \zb_{ti}(S) \| _{\Hb_t^{-1}}\right)\right) = \log \left( \sum_{i \in S} \exp \left( \ucb (\zb_{ti}(S) \right)\right) \, .    
    \end{equation*}
\end{lemma}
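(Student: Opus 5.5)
The plan is to reduce everything to a one-dimensional statement about how the optimistic utilities respond to the scalar diversity value. For $t \notin \Tcal^e$ we have $\|\eb\|_{\Hb_t^{-1}} \le \omega\hat\lambda_t/(2\alpha_t)$, where $\eb := [\zero_d,1]$. In particular $\hat\lambda_t \ge 2\alpha_t\|\eb\|_{\Hb_t^{-1}}/\omega > 0$.

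For each item $i$, write $\ucb(\zb_{ti}(S)) = \xb_{ti}^\top\hat\thetab_t + \phi_i(g_t(S))$ with
\[
\phi_i(\gamma) := \hat\lambda_t\gamma + \alpha_t\|[\xb_{ti},\gamma]\|_{\Hb_t^{-1}} .
\]
Differentiating the norm and applying Cauchy--Schwarz in the $\Hb_t^{-1}$ inner product gives $|\frac{d}{d\gamma}\|[\xb_{ti},\gamma]\|_{\Hb_t^{-1}}| \le \|\eb\|_{\Hb_t^{-1}}$. Hence every $\phi_i$ is increasing with slope in $[(1-\tfrac{\omega}{2})\hat\lambda_t, (1+\tfrac{\omega}{2})\hat\lambda_t]$. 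Moreover, any two slopes differ by at most $\omega\hat\lambda_t$.

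Monotonicity of $\tilde f_t$ is then immediate. Adding an element adds a positive summand inside the log. Since $g_t$ is monotone and each $\phi_i$ is increasing, it also does not decrease any existing summand.

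For submodularity, fix $S$ and $e_1,e_2\notin S$, and set $S_1,S_2,S_3$ as in Appendix~\ref{sec:challenge}. Define
\[
L(A,\gamma) := \log\sum_{i\in A}\exp\bigl(\xb_{ti}^\top\hat\thetab_t+\phi_i(\gamma)\bigr),
\]
so that $\tilde f_t(A)=L(A,g_t(A))$. I would split each marginal into an \emph{item term} at frozen diversity and a \emph{diversity term}:
\[
\tilde f_t(S_1)-\tilde f_t(S) = \bigl[L(S_1,g_t(S))-L(S,g_t(S))\bigr] + \bigl[L(S_1,g_t(S_1))-L(S_1,g_t(S))\bigr],
\]
and analogously for $\tilde f_t(S_3)-\tilde f_t(S_2)$, with base level $g_t(S_2)$. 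The diversity terms are controlled by the slope band: $\partial_\gamma L(A,\gamma)$ is a softmax average of the $\phi_i'$, so it lies in the same interval. Writing $\Delta_1=g_t(S_1)-g_t(S)$, $\Delta_3=g_t(S_3)-g_t(S_2)$ and $\delta=g_t(S_2)-g_t(S)\ge0$, the item terms are $\log(1+e^{u}/\Sigma)$. Two effects then compare the RHS item term with the LHS one. First, the larger base set $S_2\supseteq S$ only enlarges $\Sigma$, which is the usual LogSumExp submodularity. Second, moving $\gamma$ from $g_t(S)$ to $g_t(S_2)$ changes the exponent ratio by at most $\omega\hat\lambda_t\delta$, since slopes differ by at most $\omega\hat\lambda_t$ and $\frac{d}{dx}\log(1+e^x)\le1$. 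This gives
\[
\text{RHS}-\text{LHS} \le \hat\lambda_t\Bigl[(1+\tfrac{\omega}{2})\Delta_3+\omega\delta-(1-\tfrac{\omega}{2})\Delta_1\Bigr].
\]

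The main obstacle is making this bracket nonpositive using $\omega$-strict submodularity, which requires case splitting. If $\delta>0$, strictness applied to $S\subseteq S_2$ with new element $e_1$ gives $\Delta_3\le(1-\omega)\Delta_1$. One still has to absorb the $\omega\delta$ cross term. For that I would use the identity $\Delta_1+\delta = g_t(S_3)-g_t(S) = \Delta_3 + \delta$ read in the other order. That is, I would redo the decomposition symmetrically, routing through whichever of $S_1$ or $S_2$ has the larger diversity jump, so the $\delta$ correction is charged against a strict gap.

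If $\delta=0$, the item terms are evaluated at the same $\gamma$ and the cross term vanishes. But $\Delta_3$ can equal $\Delta_1$: strictness on $S\subseteq S_1$ forces $g_t(S_3)=g_t(S_1)$ when $\Delta_1>0$. In this tie case I would not split at all. Instead I would compare $L(S_1,\cdot)$ and $L(S_3,\cdot)$ along the same $\gamma$-path from $g_t(S)$ to $g_t(S_1)$ and show directly that the increment for the larger set is dominated once the item gains are accounted for. I expect this tie case, together with the bookkeeping that keeps the $\pm\tfrac{\omega}{2}$ slack sufficient, to be where most of the care is needed; the trivial case $\Delta_1=\Delta_3=\delta=0$ reduces to plain LogSumExp submodularity.
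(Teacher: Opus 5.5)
\textbf{The case you defer is false, not just hard.} Your monotonicity argument is correct and is essentially the paper's. The gap is in submodularity, in the case you defer: exactly one of $e_1,e_2$ raises $g_t$ at $S$. This is your tie case $\delta=0<\Delta_1$, and also its mirror image $\Delta_1=0<\delta$, which is hiding inside your ``$\delta>0$'' branch. In this case $\tilde f_t$ is in general \emph{not} submodular, so no argument can close it.

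Take $g_t(S)=1-2^{-n_S}$, a positive multiple of the paper's $g_\rho$ with $\rho=\tfrac12$ and hence $\tfrac12$-strict submodular. Let $S=\{j\}$, with $e_2$ in $j$'s category and $e_1$ in a new one. Then $g_0:=g_t(S)=g_t(S_2)=\tfrac12$ and $g_1:=g_t(S_1)=g_t(S_3)=\tfrac34$. With $u_i(\gamma):=\xb_{ti}^\top\hat\thetab_t+\phi_i(\gamma)$, the inequality $\tilde f_t(S_1)-\tilde f_t(S)\ge\tilde f_t(S_3)-\tilde f_t(S_2)$ is equivalent to
\[
e^{u_j(g_1)-u_j(g_0)}+e^{u_{e_1}(g_1)-u_j(g_0)}\ \ge\ e^{u_{e_2}(g_1)-u_{e_2}(g_0)} .
\]
The hypothesis $t\notin\Tcal^e$ only controls $\alpha_t\|[\zero_d,1]\|_{\Hb_t^{-1}}$. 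So $e_2$'s $\gamma$-slope may exceed $j$'s while $e_1$'s optimistic utility is small compared with $j$'s. Concretely, choose:
\begin{itemize}
\item $\Hb_t^{-1}$ with no coupling between the first $d$ coordinates and the last;
\item $\hat\thetab_t=\zero$ and $\hat\lambda_t=\tfrac12$, so $\omega\hat\lambda_t/2=0.125$;
\item $\alpha_t\|[\zero_d,1]\|_{\Hb_t^{-1}}=0.12$ and $\alpha_t\|\xb_{tj}\|_{\Hb_t^{-1}}=4$;
\item $\xb_{te_1}=\xb_{te_2}=\zero$.
\end{itemize}
Then the left side is about $1.134+0.023=1.157$, while the right side is $e^{0.155}\approx 1.168$.

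So the ``direct comparison along the same $\gamma$-path'' you plan cannot succeed. As $e^{u_{e_1}}$ becomes negligible, the submodularity gap tends to $[L(S,g_1)-L(S,g_0)]-[L(S_2,g_1)-L(S_2,g_0)]$. This is negative whenever $\phi_{e_2}'$ exceeds the softmax-average slope over $S$ on $[g_0,g_1]$.

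The paper's proof has the same hole. In its case $g_t(S)=g_t(S_2)$ it asserts $\|[\xb_{tj_1},g_t(S_1)]\|_{\Hb_t^{-1}}+\|[\xb_{tj_2},g_t(S_2)]\|_{\Hb_t^{-1}}\ge\|[\xb_{tj_1},g_t(S)]\|_{\Hb_t^{-1}}+\|[\xb_{tj_2},g_t(S_3)]\|_{\Hb_t^{-1}}$ for all $j_1\in S$, $j_2\in S_3$. That holds only for $j_1=j_2$; the example above violates it with $j_1=j$, $j_2=e_2$.

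\textbf{Where the claim is true, your split is too lossy.} The claim does hold when $\Delta_1>0$ and $\delta>0$, but your decomposition does not reach it. Your bracket is at most $\omega\delta-\tfrac{\omega^2}{2}\Delta_1$. The analogous bracket with $e_1,e_2$ swapped is at most $\omega\Delta_1-\tfrac{\omega^2}{2}\delta$. Both are positive when $\delta\approx\Delta_1$, so routing through the larger jump does not absorb the cross term. The loss comes from comparing softmax averages over different sets at different diversity levels.

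Instead, avoid softmax averages altogether:
\begin{enumerate}
\item Expand $e^{\tilde f_t(S_1)+\tilde f_t(S_2)}$ and $e^{\tilde f_t(S)+\tilde f_t(S_3)}$ as double sums over $S_1\times S_2$ and $S\times S_3$.
\item Inject $S\times S_3$ into $S_1\times S_2$ by $(i,j)\mapsto(i,j)$ for $j\neq e_1$, and $(i,e_1)\mapsto(e_1,i)$. Only $(e_1,e_2)$ is left unmatched, and it contributes a positive term.
\item The $\xb^\top\hat\thetab_t$ parts cancel within each matched pair.
\end{enumerate}
For pairs of the first kind, your slope band gives
\[
[\phi_i(g_t(S_1))-\phi_i(g_t(S))]-[\phi_j(g_t(S_3))-\phi_j(g_t(S_2))]\ \ge\ \tfrac{\omega^2}{2}\hat\lambda_t\Delta_1 ,
\]
using strictness on $S\subseteq S_2$ with $e_1$, which needs $\delta>0$. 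For swapped pairs,
\[
[\phi_i(g_t(S_2))-\phi_i(g_t(S))]-[\phi_{e_1}(g_t(S_3))-\phi_{e_1}(g_t(S_1))]\ \ge\ \tfrac{\omega^2}{2}\hat\lambda_t\delta ,
\]
using strictness on $S\subseteq S_1$ with $e_2$, which needs $\Delta_1>0$.

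This is what the paper's termwise argument amounts to, once its re-indexing of the numerator is corrected. That re-indexing treats the exponents of $(e_1,i)$ and $(i,e_1)$ as equal, which they are not. Together with the trivial case $\Delta_1=\delta=0$, this is all that can be proved. The mixed case must be excluded by an extra hypothesis or avoided in the downstream greedy analysis.
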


\begin{proof}[Proof of Lemma~\ref{lem:ucb_submodular}]
    Suppose $\|[{\bf{0}_d},1]\|_{\Hb_t^{-1}} \leq \dfrac{\omega \hat{\lambda}_{t}}{2\alpha_t}$ holds.

    \paragraph{Monotonicity.}
    Recall that the diversity-augmented feature vector is defined $\zb_{ti}(S) := [\xb_{ti}, \, g_t(S)]$. 
    Then, $\tilde{f}_t(S \cup \{i\}) - \tilde{f}_t(S)$ can be written as follows:
    \[
    \tilde{f}_t(S \cup \{i\}) - \tilde{f}_t(S) 
    = \log \left[
    \frac{
    \sum_{j \in S} \exp \left( \ucb([\xb_{tj}, g_t(S \cup \{i\})]) \right)
    + \exp\left( \ucb([\xb_{ti}, g_t(S \cup \{i\})]) \right)
    }{
    \sum_{j \in S} \exp \left( \ucb([\xb_{tj}, g_t(S)]) \right)
    }
    \right] \, .
    \]
    
    Then, for each $j \in S$ we have
    \begin{align*}
        & \ucb([\xb_{tj}, g_t(S \cup \{i\})]) - \ucb(\xb_{tj}, g_t(S))
        \\
        & = \hat{\lambda}_{t} (g_t(S \cup \{i\}) - g_t(S)) 
        + \alpha_{t} \left( \| [\xb_{tj}, g_t(S \cup \{i\}]) \|_{\Hb_t^{-1}} - \| [\xb_{tj}, g_t(S)] \|_{\Hb_t^{-1}} \right)
        \\
        &\geq \hat{\lambda}_{t} (g_t(S \cup \{i\}) - g_t(S)) 
        - \alpha_{t} \left\| [\mathbf{0}_{d}, g_t(S \cup \{i\}) - g_t(S)] \right\|_{\Hb_{t}^{-1}}
        \\
        &= \left( \hat{\lambda}_{t} - \alpha_t \left\| [\zero_d,1] \right\|_{\Hb_{t}^{-1}} \right) (g_t(S \cup \{i\}) - g_t(S))
        \\
        & \geq \left( 1-{\omega \over2} \right) \hat{\lambda}_{t} (g_t(S \cup \{i\}) - g_t(S)) \ge 0,
    \end{align*} 
    where the last inequality holds since $\|[{\bf{0}_d},1]\|_{\Hb_t^{-1}} \leq \frac {w \hat{\lambda}_{t}}{2\alpha_t}$.
    Therefore, we conclude $\tilde{f}_t(S \cup \{i\}) - \tilde{f}_t(S) >0 $.\\

    \paragraph{Submodularity.}
    To show submodularity of $\tilde{f}_t$, it is enough to show that the inequality in Eq.~\ref{ineq: weightnorm} holds. 
    If $g_t(S)=g_t(S_2)$, then $g_t(S_1)-g_t(S) \ge g_t(S_3)-g_t(S_2)$ by submodularity of $g_t$, and so $g_t(S_1 ) \leq g_t(S_3)$.
    By the monotonicity of $g_t$, we have $g_t(S_1)=g_t(S_3)$.
    Therefore, the inequality in Eq.~\ref{ineq: weightnorm} holds. 
    
    Now, suppose $g_t(S) < g_t(S_2)$.
    Then, for all $j_1 \in S$ and $j_2 \in S_3$, we have
    \begin{align*}
        & \hat{\lambda}_{t} \left( g_t(S_1) + g_t(S_2) - g_t(S) - g_t(S_3) \right)
        \\
        & + \alpha_t \left( \| [\xb_{t,j_1}, g_t(S_1)] \|_{\Hb_t^{-1}} 
        + \| [\xb_{t,j_2}, g_t(S_2)] \|_{\Hb_t^{-1}} 
        - \| [\xb_{t,j_1}, g_t(S)) \|_{\Hb_t^{-1}} 
        - \| [\xb_{t,j_2}, g_t(S)) \|_{\Hb_t^{-1}} \right)
        \\
        &\geq \hat{\lambda}_{t} \left( g_t(S_1) - g_t(S) - (g_t(S_3) - g_t(S_2)) \right)
        - \alpha_t (g_t(S_1) - g_t(S)) \cdot \| [\mathbf{0}_{d},1] \|_{\Hb_t^{-1}} 
            \\
            & \quad 
            - \alpha_t (g_t(S_3) - g_t(S_2)) \cdot \| [\mathbf{0}_{d},1] \|_{\Hb_t^{-1}}
        \\
        & \geq \hat{\lambda}_{t} \omega (g_t(S_1) - g_t(S))
        - 2 \alpha_t (g_t(S_1) - g_t(S)) \cdot \| [\mathbf{0}_{d},1] \|_{\Hb_t^{-1}}
        \\
        & > \left(  \hat{\lambda}_{t} \omega - 2 \alpha_t \frac{  \hat{\lambda}_{t} \omega}{2 \alpha_t } \right) (g_t(S_1) - g_t(S)) \geq 0 \, .
    \end{align*}
\end{proof}

\begin{lemma}
    \label{lem:exp_condition}
    Suppose that Assumptions~\ref{assump_bdd} and ~\ref{assump_str_submod} hold.
    If $\lambda_{\min} (\Hb_t) \ge {\alpha_T \over l}\big(1+{2 \over \omega}\big)$, we have $t \notin \Tcal^e$.
\end{lemma}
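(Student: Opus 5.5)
The plan is to reduce the exploration test in \eqref{eq:Tcal} to a single scalar inequality in $u_t := \|[\zero_d,1]\|_{\Hb_t^{-1}}$, and then control $u_t$ through $\lambda_{\min}(\Hb_t)$. Throughout I work on the high-probability event of Lemma~1 of \citet{lee2024nearly}, namely $\|\wb_t-\wb^*\|_{\Hb_t}\le\alpha_t$; the lemma's conclusion is only meaningful on this event.

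\textbf{Step 1 (lower bound on $\hat\lambda_t$).} Since $\hat\lambda_t-\lambda^* = [\zero_d,1]^\top(\wb_t-\wb^*)$, Cauchy--Schwarz in the $\Hb_t$-geometry gives
\[
|\hat\lambda_t-\lambda^*|\le \|[\zero_d,1]\|_{\Hb_t^{-1}}\,\|\wb_t-\wb^*\|_{\Hb_t}\le \alpha_t u_t .
\]
Combining this with Assumption~\ref{assump_bdd} ($\lambda^*>l$) yields $\hat\lambda_t \ge l-\alpha_t u_t$.

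\textbf{Step 2 (sufficient condition).} It suffices to show $u_t \le \frac{\omega(l-\alpha_t u_t)}{2\alpha_t}$, because then $u_t\le \frac{\omega\hat\lambda_t}{2\alpha_t}$ and hence $t\notin\Tcal^e$. Rearranging, this condition is equivalent to
\[
\alpha_t u_t\Bigl(1+\tfrac{2}{\omega}\Bigr)\le l .
\]
Since $\alpha_t\le\alpha_T$ (the radius is nondecreasing in $t$), it is enough to prove
\[
u_t\le \frac{l}{\alpha_T(1+2/\omega)} =: \frac1c .
\]

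\textbf{Step 3 (eigenvalue control; main obstacle).} The hypothesis reads $\lambda_{\min}(\Hb_t)\ge c$. The standard bound is
\[
u_t^2=[\zero_d,1]^\top\Hb_t^{-1}[\zero_d,1]\le 1/\lambda_{\min}(\Hb_t)\le 1/c,
\]
which only gives $u_t\le c^{-1/2}$. This is where the square-root mismatch has to be closed. Note that $c\ge1$: we have $\alpha_T\ge1$ for the prescribed radius, and $l<\lambda^*\le1$, $\omega<1$. Consequently $c^{-1/2}\ge c^{-1}$, so the naive route runs in the wrong direction.

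What I would try first is to use the block structure of $\Hb_t$ rather than its smallest eigenvalue alone. Write $u_t^2=1/s_t$, where $s_t$ is the Schur complement of the $\thetab$-block in $\Hb_t$; then bound $s_t$ below by the accumulated curvature in the diversity coordinate. Each exploration round contributes an amount of order $\kappa\sigma_0$ to that coordinate, via \eqref{eq:sigma_0}, so $s_t$ grows linearly in the number of such rounds. The aim is to convert the hypothesis on $\lambda_{\min}(\Hb_t)$ into $s_t\ge c^2$. If that conversion fails in general, the fallback is to verify the inequality $c^{-1/2}\le c^{-1}$ in the regime where it actually holds, namely when the $\alpha_T$ and $l$ normalizations make $c\le1$. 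I would then note in the proof that the linear threshold is exactly what is consumed downstream in Lemma~\ref{lem:num_init}.

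Steps 1 and 2 are routine; the genuine difficulty is entirely in Step 3.
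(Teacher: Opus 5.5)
Your Steps~1--2 are exactly the paper's argument: the paper bounds $\|[\zero_d,1]\|_{\Hb_t^{-1}}\le \lambda^*\omega/((2+\omega)\alpha_t)$, substitutes $\lambda^*\le\hat\lambda_t+\alpha_t\|[\zero_d,1]\|_{\Hb_t^{-1}}$ from the confidence set, and rearranges. The paper then handles your Step~3 in one line, $\|[\zero_d,1]\|_{\Hb_t^{-1}}\le\lambda_{\max}(\Hb_t^{-1})=1/\lambda_{\min}(\Hb_t)$. That is precisely the inequality you decline to use, and you are right to. The valid bound is $\|[\zero_d,1]\|_{\Hb_t^{-1}}\le\lambda_{\max}(\Hb_t^{-1})^{1/2}$. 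Since $\Hb_t\succeq\Lambda\Ib_{d+1}$ with $\Lambda>1$ under the prescribed parameters, dropping the square root goes in the wrong direction. So the obstacle you located is a flaw in the paper's proof, not only in your attempt.

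Your proposal still does not close the gap, and neither suggested route can. The Schur complement satisfies $s_t\ge\lambda_{\min}(\Hb_t)$, with equality for $\Hb_t=c\,\Ib_{d+1}$. Hence $s_t\ge c^2$ cannot be extracted from $\lambda_{\min}(\Hb_t)\ge c$. Counting curvature from exploration rounds uses information that is not in the lemma's hypothesis; that is the job of Lemma~\ref{lem:num_init}. Your fallback regime $c\le 1$ is empty, as you observe yourself.

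In fact the linear threshold is genuinely too weak. For $\Hb_t=c\,\Ib_{d+1}$ we get $u_t=c^{-1/2}$. Because $\hat\lambda_t\le\|\wb_t\|_2\le 1$, the test in \eqref{eq:Tcal} fires as soon as $c^{-1/2}>\omega/(2\alpha_t)$. At $t=T$ this reads $\omega(2+\omega)<4l\alpha_T$, which holds once $l\alpha_T\ge 3/4$.

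The missing step is to change the statement rather than the proof: assume $\lambda_{\min}(\Hb_t)\ge\bigl(\tfrac{\alpha_T}{l}(1+\tfrac{2}{\omega})\bigr)^2=c^2$. Then $u_t\le\lambda_{\min}(\Hb_t)^{-1/2}\le 1/c$, and your Steps~1--2 finish the proof verbatim. The price is paid in Lemma~\ref{lem:num_init}. There the exploration budget becomes $2c^2/(\kappa K\sigma_0)=\Ocal\bigl(d\log^2T\log^2K/(\kappa K\sigma_0\omega^2l^2)\bigr)$, which is still polylogarithmic in $T$. So the $\sqrt{T}$ term of Theorem~\ref{thm:regret} is unaffected, but its lower-order term and the failure-probability exponent change.
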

\begin{proof}[Proof of Lemma~\ref{lem:exp_condition}]
    Suppose that $\lambda_{\min} (\Hb_t) \ge {\alpha_T (\delta) \over l}\big(1+{2 \over \omega}\big)$. To show $t \notin \Tcal^e$, we have to prove that $\|[\mathbf{0},1]\|_{{\Hb_t}^{-1}} \leq \dfrac{\omega \hat{\lambda}_{t}}{2\alpha_t(\delta)}$. By the properties of eigenvalues, 
    \begin{equation*}
        \|[{\bf{0}_d},1]\|_{\Hb_t^{-1}} \le \lambda_{\max}(\Hb_t^{-1}) = {1 \over \lambda_{\min}(\Hb_t)} \le l {\omega \over (2+\omega)\alpha_T(\delta)}\le \lambda^* {\omega \over (2+\omega)\alpha_t(\delta)} \, .
    \end{equation*}
    Since $\lambda^*=[{\bf{0}_d},1]^\top [\thetab^*, \lambda^*] \le [{\bf{0}_d},1]^\top(\hat{\thetab}_t, \hat{\lambda}_t) + \alpha_t \|[{\bf{0}_d},1]\|_{\Hb_t^{-1}} \le \hat{\lambda}_t + \alpha_t \|[{\bf{0}_d},1]\|_{\Hb_t^{-1}} $ by Lemma~\ref{lem:confi_set}, then we have
    \begin{align*}
        \|[{\bf{0}_d},1]\|_{\Hb_t^{-1}} &
        \le \left(\hat{\lambda}_t + \alpha_t \|({\bf{0}},1)\|_{\Hb_t^{-1}} \right) \cdot {\omega \over (2+\omega)\alpha_t}
        \\
        &={\omega \over (2+\omega) \alpha_t}\hat{\lambda}_t + {\omega \over 2+\omega}\|[{\bf{0}_d},1]\|_{\Hb_t^{-1}}. 
    \end{align*}    
    By subtracting  ${\omega \over 2+\omega}\|[{\bf{0}_d},1]\|_{\Hb_t^{-1}}$ on the both side, it follows that 
    \begin{equation*}
        \left(1-{\omega\over2+\omega}\right)\|[{\bf{0}_d},1]\|_{\Hb_t^{-1}} \le {\omega \over (2+\omega)\alpha_t(\delta)}\lambda_t,
    \end{equation*}
    and consequently, 
     \begin{equation*}
       \|[{\bf{0}_d},1]\|_{\Hb_t^{-1}} \le {\omega\lambda_t \over 2\alpha_t}. 
    \end{equation*}   
\end{proof}

\begin{lemma}
    \label{lem:mineigen}
    Suppose Assumptions~\ref{assump_bdd} and~\ref{assm:non-degeneracy} hold.
    Let $\tau := |\Tcal^e \cap [t]|$ denote the number of adaptive exploration rounds up to round $t$.
    Then, with probability at least $1 - (d+1)\exp\left(-\tfrac{\tau\sigma_0}{10}\right)$, we have: 
    \begin{equation*}
        \lambda_{\min} \left( \sum_{t'=1}^{t} \sum_{i \in S_{t'}}\zb_{t'i}(S_{t'})\zb_{t'i}(S_{t'})^\top
        \right) \ge {\tau  K \sigma_0 \over 2} \, ,
    \end{equation*}
    where $\sigma_0$ is defined in Eq.\eqref{eq:sigma_0}.    
\end{lemma}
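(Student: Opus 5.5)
The plan is a matrix Chernoff bound restricted to the adaptive-exploration rounds, which is essentially Tropp's matrix martingale inequality~\citep{Tropp2011}. Write
\[
\Mb_t := \sum_{t'=1}^{t} \sum_{i \in S_{t'}} \zb_{t'i}(S_{t'})\zb_{t'i}(S_{t'})^\top .
\]
Every summand is positive semidefinite, so dropping rounds only decreases $\Mb_t$ in the Loewner order. In particular, $\Mb_t \succeq \sum_{t' \in \Tcal^e \cap [t]} \Zb_{t'}$, where $\Zb_{t'} := \sum_{i \in S_{t'}} \zb_{t'i}(S_{t'})\zb_{t'i}(S_{t'})^\top$. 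It therefore suffices to lower bound $\lambda_{\min}$ of the sum over the $\tau$ exploration rounds.

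On an exploration round, $S_{t'}$ is drawn uniformly from $\Scal_K$, independently of the past given the filtration $\Fcal_{t'-1}$. The event $t' \in \Tcal^e$ is itself $\Fcal_{t'-1}$-measurable, since it depends only on $\Hb_{t'}$, $\hat\lambda_{t'}$ and $\alpha_{t'}$. By the definition of $\sigma_0$ in Eq.~\eqref{eq:sigma_0},
\[
\EE[\Zb_{t'} \mid \Fcal_{t'-1}] = \frac{1}{|\Scal_K|}\sum_{S\in\Scal_K}\sum_{i\in S}\zb_{t'i}(S)\zb_{t'i}(S)^\top \succeq K\sigma_0 \Ib_{d+1}.
\]
Each $\Zb_{t'}$ is PSD, and its eigenvalues are bounded by $\lambda_{\max}(\Zb_{t'}) \le \sum_{i\in S_{t'}}\|\zb_{t'i}\|_2^2 \le 2K$, because $\|\xb\|\le1$ and $g\le1$. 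Rescaling to $\Zb_{t'}/(2K)$ gives summands with eigenvalues in $[0,1]$ and conditional means whose sum over the $\tau$ rounds has $\lambda_{\min} \ge \mu_{\min} := \tau\sigma_0/2$.

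Next I would apply the lower-tail matrix Chernoff inequality for adapted sequences with conditional-mean lower bound (Tropp 2011, Theorem 1.1/3.1), using deviation parameter $1/2$:
\[
\PP\Bigl(\lambda_{\min}\bigl(\textstyle\sum \Zb_{t'}/(2K)\bigr) \le \tfrac12\mu_{\min}\Bigr) \le (d+1)\Bigl(\tfrac{e^{-1/2}}{(1/2)^{1/2}}\Bigr)^{\mu_{\min}} \le (d+1)e^{-\mu_{\min}/10}.
\]
The last step holds because $-\tfrac12 + \tfrac12\log 2 \approx -0.153 \le -0.1$. Undoing the rescaling gives $\lambda_{\min} \ge 2K\cdot\tfrac12\cdot\tau\sigma_0/2 = \tau K\sigma_0/2$, with failure probability at most $(d+1)e^{-\tau\sigma_0/20}$.

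The main obstacle is reconciling the constants with the stated rate $e^{-\tau\sigma_0/10}$. If the bound $\|\zb\|^2\le 2$ is used, the exponent comes out as $\tau\sigma_0/20$, which is short by a factor of 2. I would first try to recover the factor by normalizing more carefully: if the boundedness assumption is read as $\|\zb_{ti}(S)\|_2\le 1$, then the scale is $K$ and the $/10$ follows directly. Failing that, I would use the sharper Chernoff exponent for deviation $1/2$ together with the conditional-mean bound.

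A second subtlety is that $\tau$ is random, since it depends on which rounds trigger exploration. I would handle this by applying the martingale version of Tropp's inequality to the full adapted sequence $\mathds{1}\{t'\in\Tcal^e\}\Zb_{t'}$, treating the predictable indicator as part of the sequence. Alternatively, I would fix $\tau$ and condition on the stopping times of the exploration rounds; the uniform sampling on those rounds makes this conditioning legitimate.
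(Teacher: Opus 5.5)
Your argument is the paper's argument. You restrict to exploration rounds, use Eq.~\eqref{eq:sigma_0} to get $\mathbb{E}[\mathbf{Z}_{t'}\mid\mathcal{F}_{t'-1}]\succeq K\sigma_0\Ib_{d+1}$ there, and apply Lemma~\ref{lem:Tropp_3.1} with $\mu=\tau K\sigma_0$ and $\gamma=1/2$. The paper instead applies Tropp to the full sum over all $t$ rounds and discards the non-exploration rounds only inside $W$, via superadditivity of $\lambda_{\min}$. That is equivalent to your indicator-weighted sequence.

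The factor of two you flag is genuine, and it is a defect of the paper's proof rather than of yours. The paper reaches $e^{-\tau\sigma_0/10}$ only by asserting $\lambda_{\max}\bigl(\sum_{i\in S_{t'}}\zb_{t'i}(S_{t'})\zb_{t'i}(S_{t'})^\top\bigr)\le K$, i.e.\ $R=K$. Assumption~\ref{assump_bdd} only gives $\|\zb_{ti}(S)\|_2^2=\|\xb_{ti}\|_2^2+g_t(S)^2\le 2$, and $2K$ is attained, e.g.\ by $K$ identical unit-norm features with $g_t(S)=1$.

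Your second fallback, the sharper exponent, cannot close the gap. With $R=2K$, the choice $\mu=\tau K\sigma_0$, $\gamma=1/2$ is already optimal within Tropp's bound for the target level $\tau K\sigma_0/2$. The exponent it yields is $\tfrac{1-\ln 2}{2}\cdot\tfrac{\tau\sigma_0}{2}\approx 0.077\,\tau\sigma_0$, which is smaller than $\tau\sigma_0/10$. So the constant $1/10$ requires a normalization such as $\|\xb_{ti}\|_2^2+g_t(S)^2\le 1$, which is your first fallback. Otherwise the lemma should read $e^{-(1-\ln 2)\tau\sigma_0/4}$, or your $e^{-\tau\sigma_0/20}$. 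This is harmless downstream, since Lemma~\ref{lem:num_init} uses the exponent only up to $\Ocal(\cdot)$.

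On the random $\tau$, keep the predictable-indicator version. Literally conditioning on which rounds were exploration rounds would bias the law of the $S_{t'}$, because later exploration decisions depend on earlier assortments and feedback. Re-indexing by the exploration stopping times with the stopped filtration is fine, but the indicator version is simpler. In either version $\mu$ must be a fixed threshold $\tau_0K\sigma_0$, with the conclusion holding on $\{\tau\ge\tau_0\}$, which is exactly how the lemma is used in Lemma~\ref{lem:num_init}.
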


\begin{proof}[Proof of Lemma~\ref{lem:mineigen}]
    Let $\Hcal_t$ be the history $\{\{X_{t'}\}_{t'\in[t]},\{S_{t'}\}_{t'\in[t]},\{y_{t'}\}_{t'\in[t]} \}$ until round $t$.
    By Assumption~\ref{assm:non-degeneracy}, for any adaptive exploration round $t' \in \Tcal^e \cap [t]$, we have
    \begin{align*}
        \lambda_{\min}\left(\mathbb{E}\left[\sum_{i \in S_{t'}}\zb_{t'i}(S_{t'})\zb_{t'i}(S_{t'})^\top | \Hcal_{t'-1}\right]\right)
        & =\lambda_{\min}\left(\mathbb{E}\left[\sum_{i \in S_{t'}}\zb_{t'i}(S_{t'})\zb_{t'i}(S_{t'})^\top\right]\right)
        \\
        & = \lambda_{\min}\left({1 \over |\Scal|}\sum_{S \in \Scal}\left[\sum_{i \in S}\zb_{t'i}(S_{t'}) \zb_{t'i}(S_{t'})^\top\right]\right)
        \\
        & \ge K\sigma_0 \, .
    \end{align*}
    Then, by the subadditivity of minimum eigenvalues, 
    \begin{align*}
        \lambda_{\min}\left(\sum_{t'=1}^t\mathbb{E}\left[\sum_{i \in S_{t'}}\zb_{t'i}(S_{t'})\zb_{t'i}(S_{t'})^\top | \Hcal_{t'-1}\right]\right)&\ge \lambda_{\min}\left(\sum_{t'\in \Tcal^e \cap [t]}^t\mathbb{E}\left[\sum_{i \in S_{t'}}\zb_{t'i}(S_{t'})\zb_{t'i}(S_{t'})^\top | \Hcal_{t'-1}\right]\right)\\
        & \ge \sum_{t'\in \Tcal^e \cap [t]}^t\lambda_{\min}\left(\mathbb{E}\left[\sum_{i \in S_{t'}}\zb_{t'i}(S_{t'})\zb_{t'i}(S_{t'})^\top | \Hcal_{t'-1}\right]\right) \\
        & \ge |\Tcal^e \cap [t]| \cdot K\sigma_0 \\
        &= \tau K\sigma_0
    \end{align*} 
    In other words, $\mathbb{P}\left[\lambda_{\min}\left(\sum_{t'=1}^t\mathbb{E}\left[\sum_{i \in S_{t'}}\zb_{t'i}(S_{t'})\zb_{t'i}(S_{t'})^\top | \Hcal_{t'-1}\right]\right) \ge  \tau K\sigma_0)\right]=1$ holds.

    By applying Lemma~\ref{lem:Tropp_3.1} and $\lambda_{\max}\big(\sum_{i \in S_{t'}}\zb_{t'i}(S_{t'})\zb_{t'i}(S_{t'})^\top \big)\le K$ for all $t' \in [t]$ to compute the lower bound of the minimum eigenvalue of the Gram matrix after $t$ rounds, we have 
\begin{equation*}
    \mathbb{P}\left[\lambda_{\min}\left(\sum_{t'=1}^t\sum_{i \in S_{t'}}\zb_{t'i}(S_{t'})\zb_{t'i}(S_{t'})^\top \right) \le  {\tau K \sigma_0  \over 2}\right]\le (d+1)\left({e^{0.5} \over {0.5}^{0.5}}\right)^{-{\tau K \sigma_0  \over K}} \le (d+1) e^{-{\tau\sigma_0 \over 10}},
\end{equation*}
using the fact that $-0.5-0.5 \log(0.5) \le -0.1$.
\end{proof}

\begin{lemma}
    \label{lem:num_init}
    Suppose that Assumptions ~\ref{assump_bdd}, ~\ref{assm:non-degeneracy}, and ~\ref{assump_str_submod} hold.
    If we set $\nu={\omega \over 2}$ in Algorithm~\ref{alg:ofu_dmnl}, then for $\delta \in (0,1)$ with probability $1-\delta-(d+1)T^{-\Ocal({\sigma_0 \sqrt{d}  \log K \over \kappa K l_{\lambda}\omega})}$, the total number of adaptive exploration rounds is bounded as follows:
    \begin{equation*}
        |\Tcal^e | = \Ocal\left({\sqrt{d} \log T \log K \over \kappa K\sigma_0 \omega l }\right). 
    \end{equation*}
\end{lemma}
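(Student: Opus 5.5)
The argument combines Lemma~\ref{lem:exp_condition} and Lemma~\ref{lem:mineigen}. The key fact is that an exploration round can only occur while $\lambda_{\min}(\Hb_t)$ is still small. Each exploration round, however, adds curvature in expectation, so after enough of them $\lambda_{\min}(\Hb_t)$ crosses the threshold $\frac{\alpha_T}{l}(1+\frac{2}{\omega})$. By Lemma~\ref{lem:exp_condition}, no further exploration is triggered after that point.

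\textbf{Step 1: pass from $\Hb_t$ to the Gram matrix.} I will work on the event of Lemma~\ref{lem:confi_set}, so that $\|\wb_s - \wb^*\|_{\Hb_s}\le\alpha_s$ and every $\wb_{s+1}$ lies in $\Wcal$; this event costs probability $\delta$. Since $\|\wb_{s+1}\|_2\le 1$, $\|\xb\|_2\le 1$ and $0\le g\le 1$, the definition of $\kappa$ should give
\[
\mathcal{G}_s(\wb_{s+1})\succeq \kappa\sum_{i\in S_s}\zb_{si}(S_s)\zb_{si}(S_s)^\top .
\]
This is the standard MNL Hessian lower bound $\sum_i p_i\zb_i\zb_i^\top-\sum_{i,j}p_ip_j\zb_i\zb_j^\top\succeq \sum_i p_ip_0\zb_i\zb_i^\top$. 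It follows that $\Hb_{t+1}\succeq \Lambda\Ib+\kappa\sum_{s\le t}\sum_{i\in S_s}\zb_{si}\zb_{si}^\top$. One caveat: $\kappa$ is defined over $0\le\lambda\le1$, while $\wb_{s+1}$ only satisfies $\|\wb_{s+1}\|_2\le 1$. I expect this mismatch to be absorbed into constants, or handled by interpreting $\kappa$ over the unit ball.

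\textbf{Step 2: show exploration stops.} Let $\tau_t=|\Tcal^e\cap[t]|$. By Lemma~\ref{lem:mineigen}, with probability at least $1-(d+1)e^{-\tau\sigma_0/10}$ we have $\lambda_{\min}(\Hb_{t+1})\ge \kappa\tau K\sigma_0/2$. Set
\[
\tau_0:=\Big\lceil \frac{2\alpha_T(1+2/\omega)}{\kappa K\sigma_0 l}\Big\rceil .
\]
Once $\tau_t\ge\tau_0$, the threshold of Lemma~\ref{lem:exp_condition} is met (using $\alpha_t\le\alpha_T$ monotone), so every later round lies outside $\Tcal^e$. Hence $|\Tcal^e|\le\tau_0$. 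Plugging in $\alpha_T=\Ocal(\sqrt d\log T\log K)$ and $1+2/\omega=\Ocal(1/\omega)$ gives the claimed $\Ocal\big(\frac{\sqrt d\log T\log K}{\kappa K\sigma_0\omega l}\big)$.

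\textbf{Step 3: the probability bound, which is the delicate step.} Lemma~\ref{lem:mineigen} is stated for a fixed $\tau$, but $\tau$ here is random. I plan to apply it at the stopping time when the $\tau_0$-th exploration round occurs, viewing the matrix martingale restricted to exploration rounds, and if necessary take a union bound over the at most $T$ candidate rounds. The failure probability is then $(d+1)e^{-\tau_0\sigma_0/10}$. Substituting $\tau_0\propto \sqrt d\log T\log K/(\kappa K\sigma_0\omega l)$ turns this into $(d+1)T^{-\Ocal(\sqrt d\log K/(\kappa K l\omega))}$, up to the $\sigma_0$ factor in the exponent stated in the lemma. The extra factor $T$ from a union bound can be absorbed by adjusting the constant in the exponent.

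The main obstacle I expect is making this stopping-time application rigorous. Exploration rounds are chosen adaptively and non-exploration rounds contribute positive semidefinite terms. Tropp's inequality (Lemma~\ref{lem:Tropp_3.1}) handles adapted sequences, so restricting the predictable quadratic variation to exploration rounds, where its conditional minimum eigenvalue is at least $K\sigma_0$, should suffice.
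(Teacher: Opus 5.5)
Your proposal is correct and matches the paper's proof step for step. The steps are the bound $\mathcal{G}_s(\wb_{s+1})\succeq\kappa\sum_{i\in S_s}\zb_{si}(S_s)\zb_{si}(S_s)^\top$, then Lemma~\ref{lem:mineigen} with $\tau_0=\frac{2\alpha_T}{\kappa lK\sigma_0}\bigl(1+\frac{2}{\omega}\bigr)$, then Lemma~\ref{lem:exp_condition} together with monotonicity of $\Hb_t$. Three points of yours are more careful than the paper, which simply applies Lemma~\ref{lem:mineigen} as if $\tau$ were fixed: your stopping-time use of Lemma~\ref{lem:Tropp_3.1} for the random $\tau$, your remark that $\hat\lambda_{s+1}$ may leave the range $[0,1]$ used to define $\kappa$, and your observation that $\sigma_0$ cancels from the exponent.

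One caveat, inherited rather than introduced by you, concerns the proof of Lemma~\ref{lem:exp_condition}. It uses $\|[\zero_d,1]\|_{\Hb_t^{-1}}\le 1/\lambda_{\min}(\Hb_t)$, but only $\|[\zero_d,1]\|_{\Hb_t^{-1}}\le 1/\sqrt{\lambda_{\min}(\Hb_t)}$ is valid, and $\lambda_{\min}(\Hb_t)\ge\Lambda>1$. So the correct threshold is $\bigl(\frac{\alpha_T}{l}(1+\frac{2}{\omega})\bigr)^2$. Your argument goes through unchanged with this threshold. However, $\tau_0$, and hence the bound on $|\Tcal^e|$, then scales with $\alpha_T^2/(l\omega)^2$, i.e.\ with $d$ rather than $\sqrt{d}$.
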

\begin{proof}[Proof of Lemma~\ref{lem:num_init}]
    We will show that the number of adaptive exploration rounds can not exceed ${2 \alpha_T \over\kappa l K\sigma_0 }\left(1+{2 \over \omega}\right)$ rounds by contradiction.
    Suppose Algorithm ~\ref{alg:ofu_dmnl} induces ${2 \alpha_T \over\kappa l K\sigma_0 }\left(1+{2 \over \omega}\right)$ adaptive exploration rounds.
    Then, by Lemma~\ref{lem:mineigen}, with probability at least $1 - (d+1)\exp\left(-\tfrac{{2 \alpha_T }\left(1+{2 \over \omega}\right)}{10 \kappa  l K}\right)=1-(d+1)T^{-\Ocal({\sigma_0 \sqrt{d}  \log K \over \kappa K \omega l})}$,
    we have
    \begin{equation*}
        \lambda_{\min} (V_{t+1})
        \ge {1 \over \kappa} {\alpha_T \over l} \left(1+{ 2 \over \omega}\right) \, ,
    \end{equation*}
    where
    $\Vb_t := \sum_{t'=1}^{t-1} \sum_{i \in S_{t'}}\zb_{t'i}(S_{t'})\zb_{t'i}(S_{t'})^\top$.
    Note that for any $\xb_i, \xb_j \in \RR^d$, $(\xb_i - \xb_j)(\xb_i - \xb_j)^\top = \xb_i \xb_i^\top + \xb_j \xb_j^\top - \xb_i \xb_j^\top - \xb_j \xb_i^\top \succeq \zero_{d \times d}$,
    which implies $\xb_i \xb_i^\top + \xb_j \xb_j^\top \succeq \xb_i \xb_j^\top + \xb_j \xb_i^\top$.
    To simplify, for $i \in S_t$, if we abbreviate $p_{t}(i \mid S_{t}, \wb_t) $ by $p_{ti}( \wb_t)$ and $\zb_{ti}(S_t)$ by $\zb_{ti}$, then for all $s \in [t-1]$, we have
    \begin{align*}
        \Gcal_s(\mathbf{w}_{s+1})
        & = \sum_{i \in S_{s}} p_{si}(\wb_{s+1}) \zb_{si} \zb_{si}^\top 
        - \sum_{i \in S_{s}} \sum_{j \in S_{s}}p_{si}( \mathbf{w}_{s+1})p_{sj}( \mathbf{w}_{s+1}) \zb_{si}\zb_{sj}^\top \\
        &=  \sum_{i \in S_{s}}p_{si}( \mathbf{w}_{s+1}) \zb_{si} \zb_{si}^\top  
        - \tfrac{1}{2} \sum_{i \in S_{s}} \sum_{j \in S_{s}}p_{si}( \mathbf{w}_{s+1})p_{sj}( \mathbf{w}_{s+1}) \big(\zb_{si} \zb_{sj}^\top + \zb_{sj} \zb_{si}^\top \big)\\
        &\succeq  \sum_{i \in S_{s}}p_{si}( \mathbf{w}_{s+1}) \zb_{si} \zb_{si}^\top  
        - \tfrac{1}{2} \sum_{i \in S_{s}} \sum_{j \in S_{s}}p_{si}( \mathbf{w}_{s+1})p_{sj}( \mathbf{w}_{s+1}) \big(\zb_{si} \zb_{si}^\top + \zb_{sj} \zb_{sj}^\top \big)\\
        &=  \sum_{i \in S_{s}}p_{si}( \mathbf{w}_{s+1}) \zb_{si} \zb_{si}^\top  
        -  \sum_{i \in S_{s}} \sum_{j \in S_{s}}p_{si}( \mathbf{w}_{s+1})p_{sj}( \mathbf{w}_{s+1}) \big(\zb_{si} \zb_{si}^\top \big)\\
        &= \sum_{i \in S_{s}}p_{si}( \mathbf{w}_{s+1}) \left( 1 - \sum_{j \in S_{s}} p_{sj}( \mathbf{w}_{s+1}) \right) \zb_{si} \zb_{si}^\top  \\
        &= \sum_{i \in S_{s}}p_{si}(\mathbf{w}_{s+1}) p_{s0}(\mathbf{w}_{s+1}) \zb_{si} \zb_{si}^\top
        \\
        &\succeq \kappa \sum_{i \in S_{s}} \zb_{si} \zb_{si}^\top \, ,
    \end{align*}
    where $\kappa:=\min_{t \in [T], S \in \Scal, i \in S, \|\thetab\|_2 \le 1, 0 \le \lambda \le 1}p_t(i|S,\thetab,\lambda) p_t(0|S,\thetab,\lambda) > 0$.
    Hence, we have
    \begin{equation*}
        \Hb_{t+1}=\Lambda\Ib_{d+1}+\sum_{s=1}^{t-1}\Gcal_s(\wb_{s+1})\succeq \Lambda\Ib_{d+1}+\kappa \sum_{s=1}^{t-1}\sum_{i \in S_{s}} \zb_{si} \zb_{si}^\top \succeq  \kappa \Vb_{t+1} \, .
    \end{equation*}
    Thus, it holds that
    \begin{equation*}
        \lambda_{\min}(\Hb_{t+1}) \ge \kappa \lambda_{\min}(\Vb_{t+1}) \ge  {\alpha \over l} \left(1+{ 2 \over \omega}\right).
    \end{equation*}
    By Lemma~\ref{lem:exp_condition}, this implies $t+1 \notin \Tcal^e$.
    Therefore, with probability at least $1-\delta-(d+1)T^{-\Ocal({\sigma_0 \sqrt{d}  \log K \over \kappa K \omega l})}$, the number of adaptive exploration rounds is bounded by 
    \begin{equation*}
        |\Tcal^e| \le {2 \alpha_T \over\kappa l K \sigma_0 }\left(1+{2 \over \omega}\right)=\Ocal\left({\sqrt{d} \log T \log K \over \kappa K\sigma_0 \omega l}\right) \, .
    \end{equation*}
\end{proof}

\subsection{Proof of Theorem~\ref{thm:regret}}
\begin{proof}[Proof of Theorem~\ref{thm:regret}]
    We define the event $\Tcal^e$ as the set of rounds corresponding to adaptive exploration, formally defined in Equation~\ref{eq:Tcal}.

    For $t\notin \Tcal^e$, by Lemma~\ref{lem:ucb_submodular} since $\tilde{f_t}$ in Eq.\eqref{eq:optimistic_logsumexp} is monotone and submodular, we have 
    \begin{align*}
        R(S_t^*, \thetab^*,\lambda^*) 
        & = {\exp(f_t(S_t^*)) \over 1+\exp(f_t(S_t^*)) }
        \\
        & \le {\exp(\tilde{f_t}(S_t^*)) \over 1 + \exp(\tilde{f}_t(S_t^*)) }
        \\
        & \le {\exp\left(({e \over e-1})\tilde{f_t}(S_t))\right) \over 1+{\exp \left(({e \over e-1})\tilde{f_t}(S_t))\right)\}}} \tag{$\because \tilde{f_t} \text{ is submodular}$}
        \\
        & ={\left[\exp\tilde{f_t}(S_t))\right]^{({e \over e-1})} \over 1+\left[\exp\tilde{f_t}(S_t))\right]^{({e \over e-1})}}\\
        &\le \left({e+1 \over e}\right) \tilde{R}_t(S_t) \, ,
    \end{align*}
    where the last inequality holds because $h(\psi) := \left( \frac{\psi^\alpha}{1 + \psi^\alpha} \right) / \left( \frac{\psi}{1 + \psi} \right) <1+{1\over e}$ holds for $\psi=\exp(\tilde{f_t}(S_t))$ and $\alpha={e \over e-1}$ (refer to the proof of Theorem~\ref{thm:ap_rate}).

    Therefore, with probability $1-\delta-(d+1)T^{-\Ocal({\sigma_0 \sqrt{d}  \log K \over \kappa K l \omega})}$,
    \begin{align*}
        \Rcal^{\gamma}(T) &=\sum_{t=1}^T \mathbb{E}[\gamma R_t(S_t^*,\thetab^*, \lambda^*) - R_t(S_t, \thetab^*, \lambda^*)
        \\
        &\le \sum_{t\notin \Tcal^e} \mathbb{E}[\gamma R_t(S_t^*,\thetab^*, \lambda^*) - R_t(S_t, \thetab^*, \lambda^*)  +|\Tcal^e|
        \\
        &\le \sum_{t\notin \Tcal^e} \mathbb{E}[ \tilde{R}_t(S_t) - R_t(S_t, \thetab^*, \lambda^*)] +|\Tcal^e|
        \\
        &= \tilde{\mathcal{O}}\left({\sqrt{K}(d+1) \over K+1}\cdot \sqrt{T} + {1 \over \kappa} (d+1)^2\right)+\Ocal\left({\sqrt{d} \log T \log K \over \kappa K\sigma_0 \omega l}\right)
        \numberthis \label{eq:regret_eq_1}
        \\
        &=\tilde{\mathcal{O}}\left({\sqrt{K} (d+1) \over K+1}\cdot \sqrt{T}+ {1 \over  \kappa} (d+1)^2+{\sqrt{d} \over \kappa K \sigma_0 \omega l}\right) \, ,
    \end{align*}
    where for Eq.\eqref{eq:regret_eq_1}, we invoke the regret bound of $\texttt{OFU-MNL+}$ under uniform revenue setting from~\citet{lee2024nearly}, as our diversified optimistic expected reward $\tilde{R}(S_t)$ serves as an optimistic estimate of $R_t(S_t, \thetab^*, \lambda^*)$.
    The subsequent analysis closely follows the proof of Theorem~2 in~\citet{lee2024nearly}.
\end{proof}

\section{Experimental Details} \label{ap_sec:experiment}
\subsection{Baselines}\label{ap_sec:exp_settings}
We compare the empirical performance of our algorithm against the existing MNL bandit algorithms \texttt{UCB-MNL}, \texttt{TS-MNL}, and \texttt{OFU-MNL+}, along with two variants of \texttt{OFU-MNL+} that incorporate assortment diversity, in the DMNL bandit setting. 

\begin{algorithm}[h]
\caption{\texttt{OFU-MNL-DR} (\texttt{OFU-MNL-}\textbf{D}iversity integrated \textbf{R}eward)}
\label{alg:ofu-mnl-dr}
    \begin{algorithmic}[1]
        \State \textbf{Input: } diversity function $\{g_t\}_{t\ge1}$, regularization parameter $\Lambda$, confidence radius $\{\alpha_t\}_{t\ge1}$, step size $\eta$, balancing diversity parameter $\lambda$
        \State \textbf{Initialization:} $\Hb_1=\Lambda \Ib_{d}$ and $\thetab_1$ at any point in $\{ \thetab \in \RR^d: \| \thetab \|_2 \le 1 \}$
        \For{$t = 1,\dots,T$}
            \State Compute $u_{t,i}= \xb_{t,i}^\top {\thetab}_{t} + \alpha_t \|\xb_{t,i}\| _{\Hb_t^{-1}}$
                \State $S_t \gets \emptyset$
                \For{$k = 1, \ldots, K$}
                    \State $a_{t,k} \gets \argmax_{e \in [N] \setminus S_t} \left [{ \sum_{i \in S_t \cup \{e\}} \exp ({u_{t,i}}) \over 1+\sum_{i \in S_t \cup \{e\}} \exp ({u_{t,i}})}+\lambda g(S_t \cup \{e\})\right ]$
                    \State $S_t \gets S_t \cup \{a_{t,k}\}$
                \EndFor
            \State Offer $S_t$ and observe $y_t$
            \State Update $\tilde{\Hb}_t = \Hb_t + \eta \,\mathcal{G}_t(\thetab_{t+1})$, and update the estimator $\thetab_{t+1}$
            \State Update $\Hb_{t+1}=\Hb_t + \mathcal{G}_t(\mathbf{w}_{t+1})$
        \EndFor
    \end{algorithmic}
\end{algorithm}
\begin{algorithm}[h]
\caption{\texttt{OFU-DMNL-FULL} (\texttt{OFU-DMNL} with exhaustive-search)}
\label{alg:ofu_dmnl_full}
    \begin{algorithmic}[1]
    \State \textbf{Input:} diversity function $\{g_t\}_{t\ge1}$, regularization parameter $\Lambda$, confidence radius $\{\alpha_t\}_{t\ge1}$, step size $\eta$, exploration parameter $\nu$
    \State \textbf{Initialization:} $\Hb_1= \Lambda \Ib_{d+1}$  and $\mathbf{w}_1$ at any point in $\mathcal{W}$.
    \For{$t = 1,\dots,T$}
        \State Offer $S_{t} \gets \displaystyle \argmax_{S \in \Scal} \tilde{R}_t(S)$, and observe $y_t$
        \State Update $\tilde{\Hb}_t = \Hb_t + \eta \,\mathcal{G}_t(\mathbf{w}_t)$, $\mathbf{w}_{t+1}$, and $\Hb_{t+1}=\Hb_t + \mathcal{G}_t(\mathbf{w}_{t+1})$
    \EndFor
    \end{algorithmic}
\end{algorithm}

\paragraph{Algorithm for MNL bandits with submodular rewards.} 
We consider the item-wise optimistic construction algorithm \texttt{OFU-MNL-DR} (Algorithm~\ref{alg:ofu-mnl-dr}) for the original MNL choice model with a submodular reward function as a baseline.
Inspired by~\citet{qin2014contextual}, diversity in the original MNL bandit can be promoted by modifying only the reward function as $R'_t(S, \thetab^*, \lambda) := \dfrac{\sum_{j \in S} \exp(\xb_{tj}^\top \thetab^*)}{1 + \sum_{j \in S} \exp(\xb_{tj}^\top \thetab^*)} + \lambda g(S)$, where $g(S)$ is the diversity score function, and $\lambda$ is a  balancing parameter between relevance and diversity.
We adapt \texttt{OFU-MNL+} with greedy assortment construction to maximize $R'_t$. Notably, \texttt{OFU-MNL+} requires the value of $\lambda$ to be specified as a hyperparameter.

\paragraph{Exhaustive-Search Algorithm for DMNL bandits.} 
We also consider the exhaustive-search algorithm, referred to as \texttt{OFU-DMNL-FULL} (Algorithm~\ref{alg:ofu_dmnl_full}), for the DMNL bandit model.
This algorithm evaluates all $\binom{N}{K}$ possible assortments in each round and thus requires approximately $\Ocal\left(\left(\tfrac{eN}{K}\right)^K\right)$ reward estimations per round.

\subsection{Experimental Results in Diverse Environments}

\subsubsection{Runtime Comparison}
\begin{figure}[h]
    \centering
    \includegraphics[width=\textwidth, trim=0cm 9cm 0cm 0.5cm, clip]{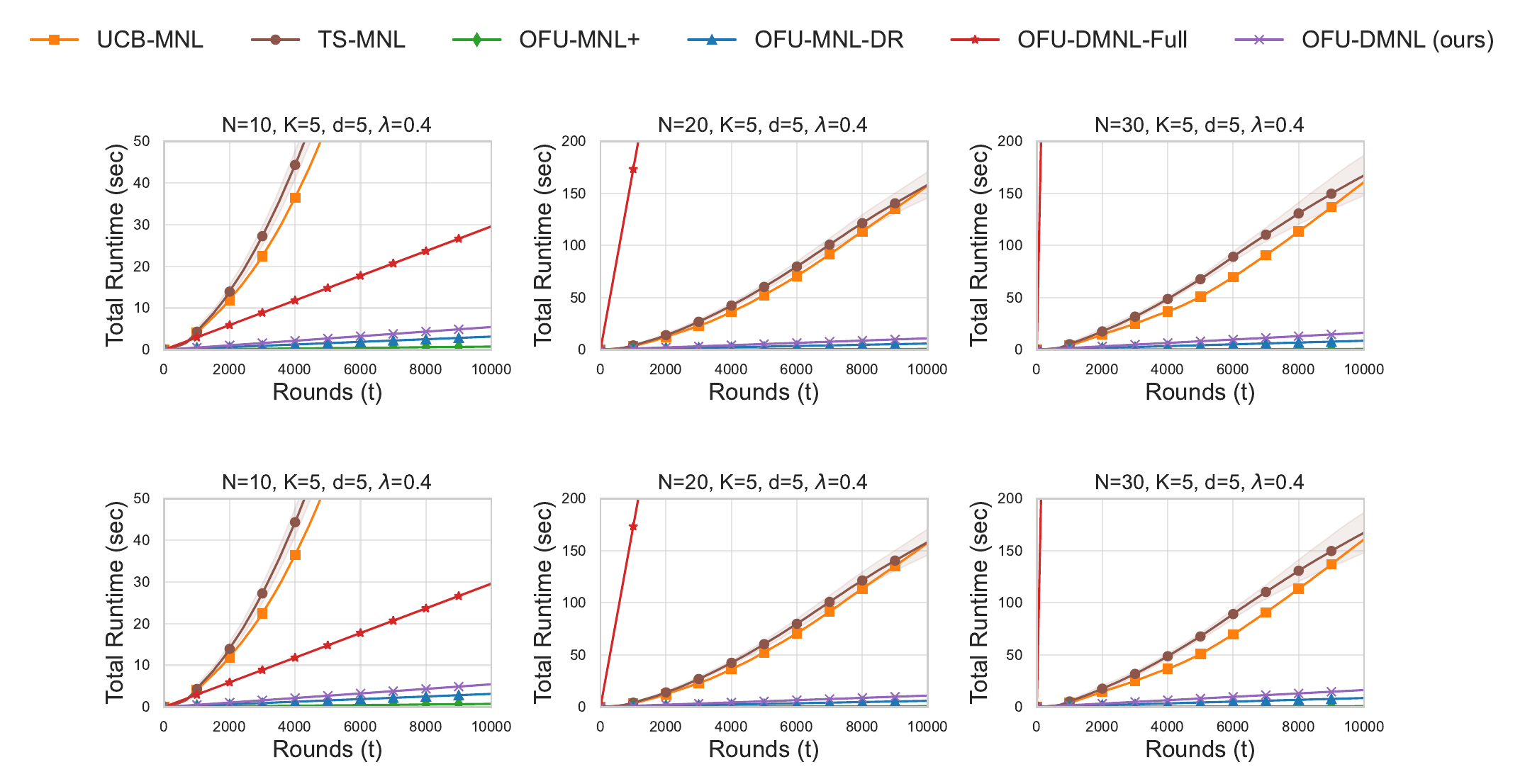}
     \vspace{-15pt}  
    \caption{Cumulative runtime of algorithms under $N=10, \, 20, \, 30$ and $T=1000$}
    \label{fig:exp_time}
\end{figure}
The results in Figure~\ref{fig:exp_time} shows that our algorithm operates very efficiently by leveraging online mirror descent. In particular, compared to the exhaustive-search algorithm, which requires per-round $O((eN/K)^K)$ computation, our algorithm runs in $O(NK)$ time and thus achieves incomparably better performance in large-$N$ settings. Although it is slower than other OMD-based MNL algorithms due to the cost of item-wise construction, it is still faster than MLE-based methods such as \texttt{UCB-MNL} and \texttt{TS-MNL}. This demonstrates that, despite incorporating the estimation of the diversity parameter, our algorithm remains computationally efficient.

Among the baselines, \texttt{OFU-MNL+} outperforms \texttt{UCB-MNL} and \texttt{TS-MNL}, while \texttt{OFU-DMNL-FULL} is entirely impractical from a runtime perspective. Therefore, in the subsequent experiments we focus on comparing the performance of the three algorithms \texttt{OFU-MNL+}, \texttt{OFU-MNL-DR}, and \texttt{OFU-DMNL}.

\subsubsection{Regret Comparison}
\begin{figure}[t]
    \centering
    \includegraphics[width=\textwidth, trim=0cm 0.5cm 0cm 0.5cm, clip]{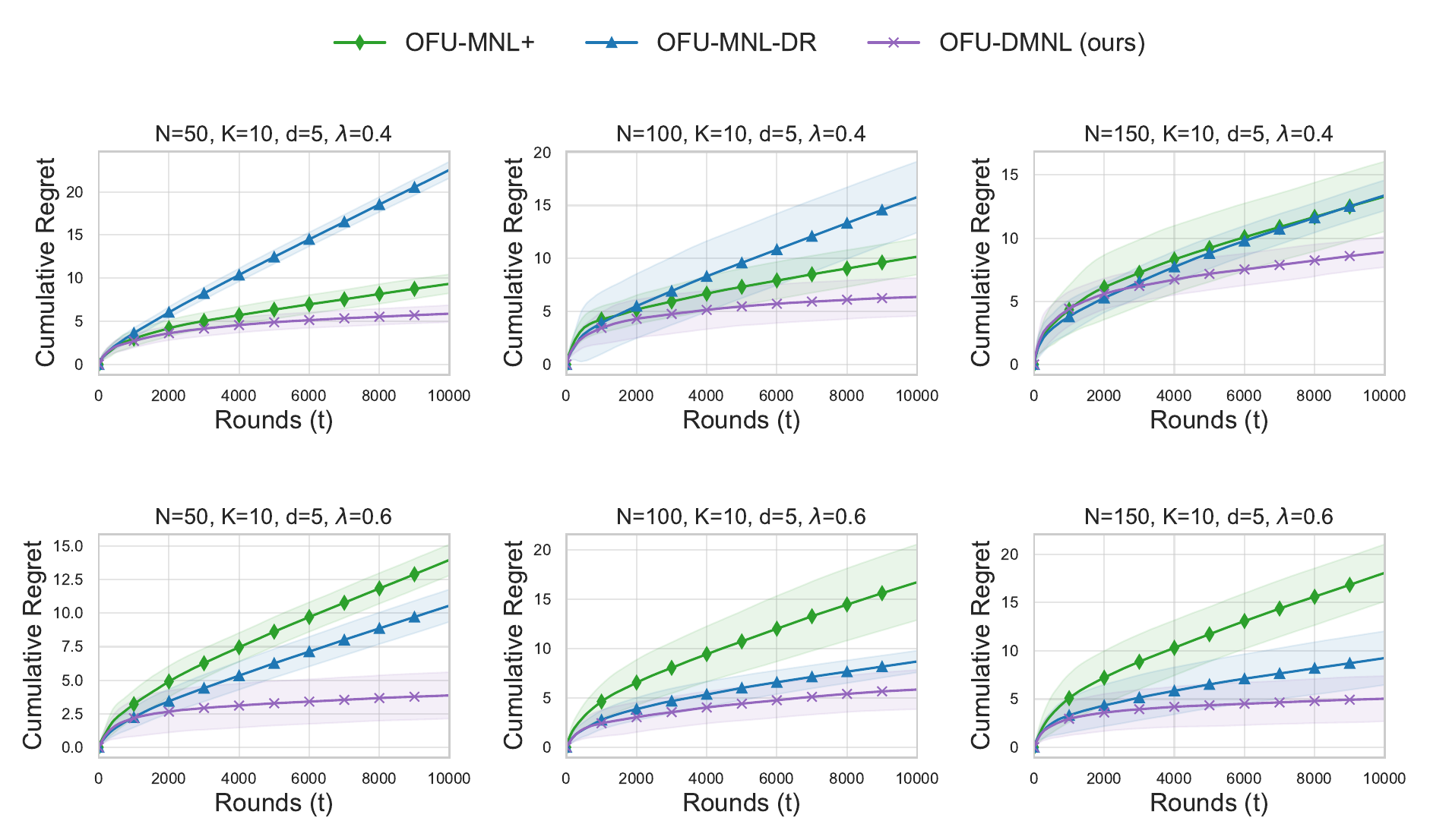}
     \vspace{-15pt}  
    \caption{Performance of algorithms under various $(N, K, \lambda^*)$ configurations}
    \label{fig:exp_largeK}
\end{figure}

\begin{figure}[t]
    \centering
    \includegraphics[width=\textwidth, trim=0cm 0.5cm 0cm 2cm, clip]{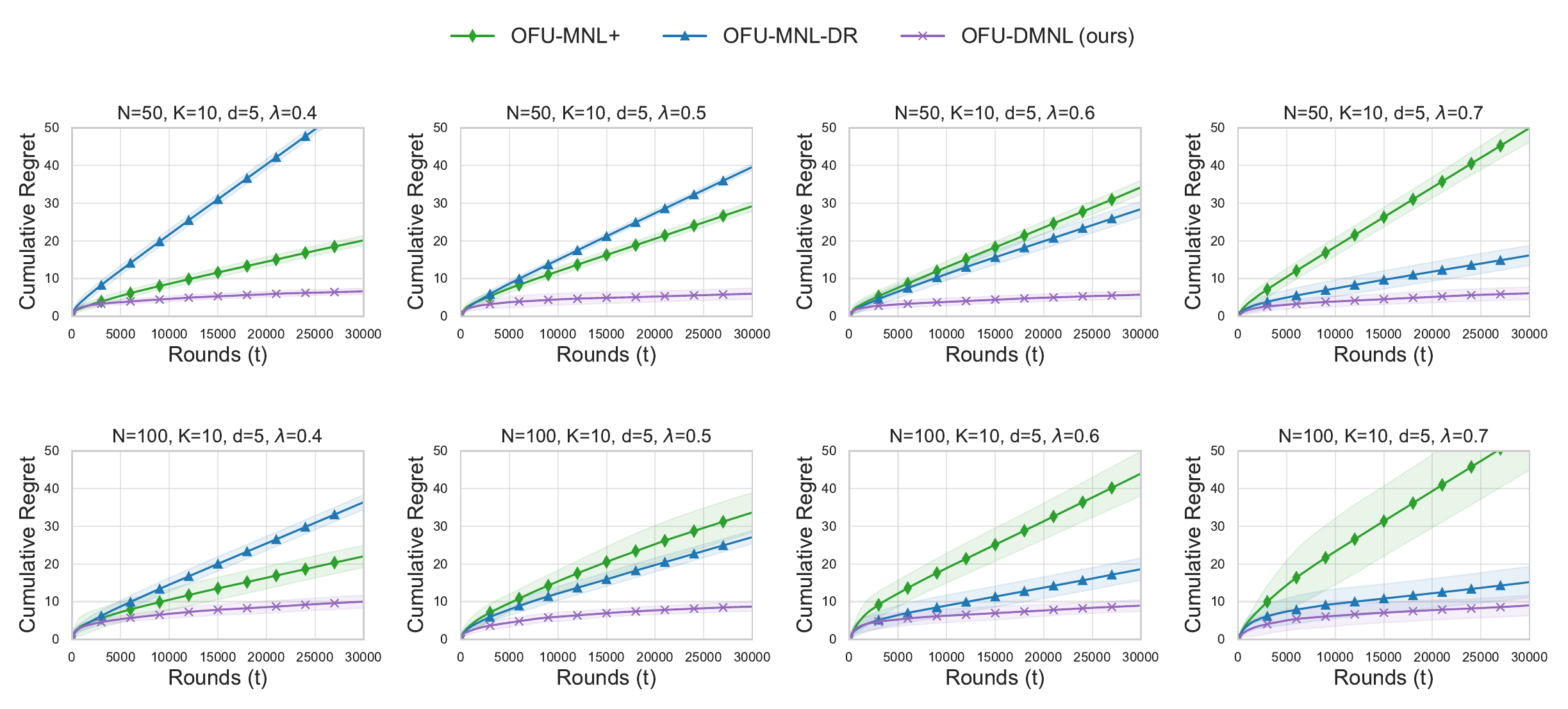}
     \vspace{-15pt}  
    \caption{Performance of algorithms under different balances between relevance and diversity }
    \label{fig:exp_lamD}
\end{figure}

 As shown in Figure~\ref{fig:exp_largeK}, our algorithm demonstrates strong performance even when $N$ and $K$ are large. Furthermore, Figure~\ref{fig:exp_lamD} shows that as the balance between $\|\thetab^* \|_2$ and $\lambda^*$ varies across $0.6:0.4$, $0.5:0.5$, $0.4:0.6$, and $0.3:0.7$, the relative ranking of \texttt{OFU-MNL} and \texttt{OFU-MNL-DR} fluctuates, whereas our proposed algorithm consistently adapts and learns robustly across all balance settings.

\clearpage
\subsection{Experiment based on real-world Data} 
\label{ap_subsec:exp_real}

We additionally designed a semi-synthetic experiment based on a real-world dataset, which provides the most practical and feasible alternative in the absence of access to online field deployment. We used the Massive Rotten Tomatoes Movie \& Review dataset provided in Kaggle (https://www.kaggle.com), which contains over $1.4$M+ reviews on $140$K+ unique movies, each labeled as positive or negative, along with rich movie-level metadata. We first converted each review text into a vector representation using TF-IDF (via TfidfVectorizer from scikit-learn), followed by dimensionality reduction using truncated SVD to obtain $d$-dimensional context vectors. This process resulted in a context–label dataset suitable for downstream modeling. From this dataset, we trained a linear model to classify the binary labels, which was then used to approximate the true relevance utility of each movie from its context features. This constructed an online assortment selection environment in which we evaluated the performance of standard MNL bandit baselines, their variants, and our proposed method.

In each round of the online experiment, we $N$ randomly sampled movies , and asked the algorithm to choose an assortment of size $K$. We use exponential decaying categorical function defined as $g(S):=1+\rho+\ldots + \rho^{n_S-1}$, where $n_S$ is the number of categories covered by the assortment $S$.

\begin{figure}[h]
    \centering
    \includegraphics[width=0.9\textwidth, trim=0cm 0.5cm 0cm 2cm, clip]{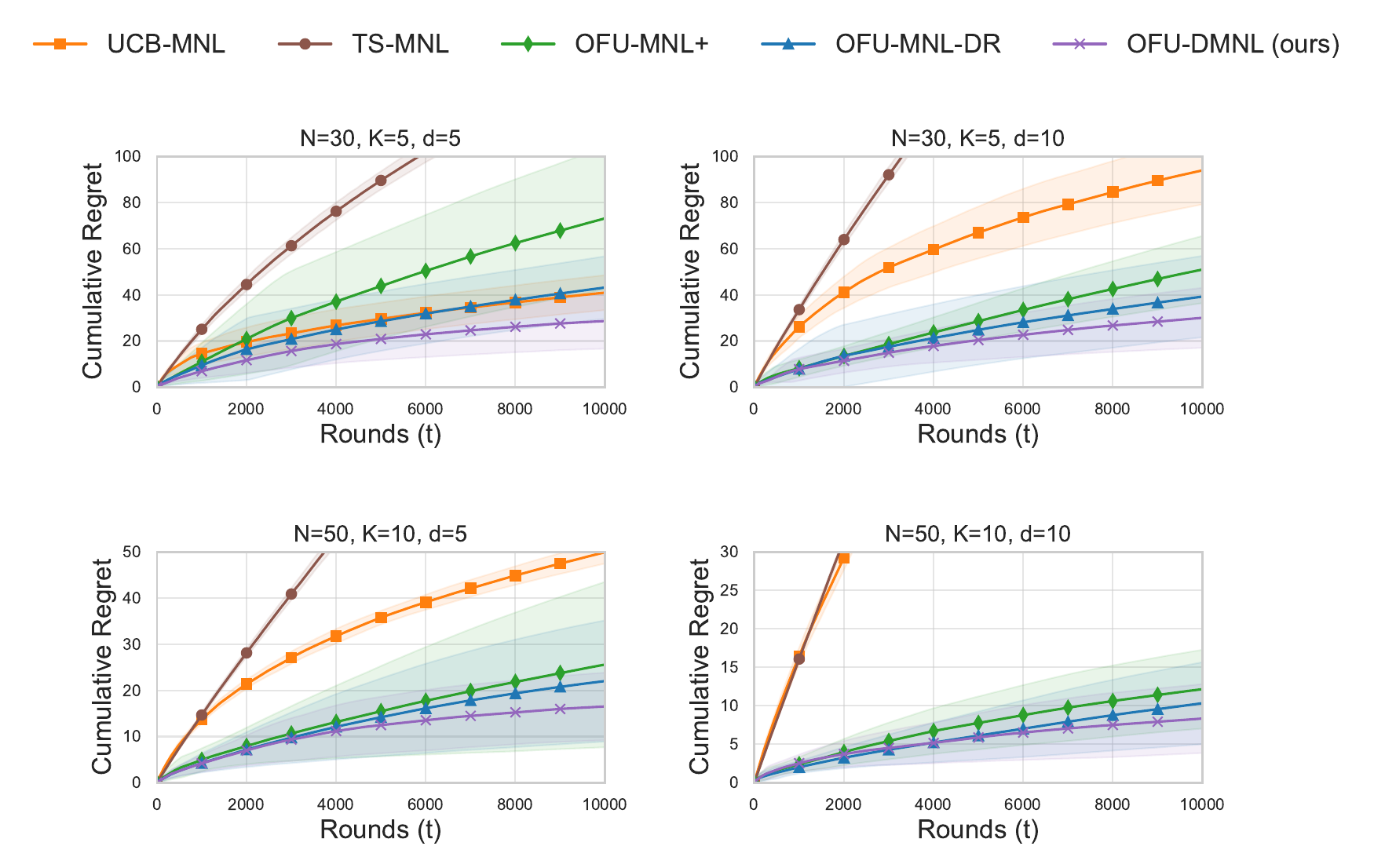}
     \vspace{-10pt}  
    \caption{Performance of algorithms with synthetic data }
    \label{fig:exp_synthetic}
\end{figure}

As shown in Figure~\ref{fig:exp_synthetic}, our algorithm performs robustly in the synthetic-data experiments and consistently outperforms the baseline algorithms. These results suggest that our method is also likely to perform well on real-world data.

\section{Auxiliary Lemmas}

\begin{proposition}[Proposition B.6 in~\citet{bach2013learningsubmodularfunctionsconvex}]\label{prop:concavesubmodular}
    Let $f$ be a monotone submodular function and $\phi$ a non-decreasing concave function. Then, the composition function $\phi (f(S))$ is submodular. 
\end{proposition}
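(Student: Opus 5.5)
The plan is to verify the defining inequality of Definition~\ref{def:submodular} directly. We combine two facts. The first is that a concave function has nonincreasing increments over intervals of a fixed length. The second is that $\phi$ is nondecreasing, which lets us compare increments over intervals of different lengths. Throughout, $\phi$ is assumed concave and nondecreasing on an interval $I$ containing the range of $f$ on $\Scal$. In the paper's application this is the logistic map on the relevant range, and this domain requirement is the one point to check when invoking the result.

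Fix $S \subseteq S'$ and $e \notin S'$. Set $a := f(S)$ and $b := f(S')$, together with $\Delta := f(S\cup\{e\}) - f(S)$ and $\Delta' := f(S'\cup\{e\}) - f(S')$. By monotonicity of $f$ (Definition~\ref{def:monotone}) we have $a \le b$ and $\Delta' \ge 0$. By submodularity of $f$ we have $\Delta \ge \Delta'$. The goal is
\begin{equation*}
\phi(b+\Delta') - \phi(b) \le \phi(a+\Delta) - \phi(a).
\end{equation*}
I will prove this through the chain
\begin{equation*}
\phi(b+\Delta') - \phi(b) \le \phi(a+\Delta') - \phi(a) \le \phi(a+\Delta) - \phi(a).
\end{equation*}
The second inequality is immediate, since $a+\Delta' \le a+\Delta$ and $\phi$ is nondecreasing.

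The first inequality is the main step. It says that for fixed $h = \Delta' \ge 0$, the map $x \mapsto \phi(x+h) - \phi(x)$ is nonincreasing on $I$. If $h = 0$ the claim is trivial. Otherwise I would use the three-chord (slope) inequality for concave functions: for $x \le y$, the slope of the chord over $[y, y+h]$ is at most the slope of the chord over $[x, x+h]$. This holds because both chords can be compared with the chord over $[x, y+h]$, or over $[x+h, y]$ when the two intervals overlap in the other order. Concretely, write $x+h$ and $y$ as convex combinations of $x$ and $y+h$ with weights $\tfrac{y-x}{y-x+h}$ and $\tfrac{h}{y-x+h}$ in the appropriate order. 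Concavity then gives $\phi(x+h)+\phi(y) \ge \phi(x)+\phi(y+h)$, which is exactly the claim. Applying this with $x = a$ and $y = b$ completes the chain, so $\phi \circ f$ is submodular.

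Finally, $\phi\circ f$ is also monotone, since it is a nondecreasing function composed with a monotone set function. This is the additional property used in Section~\ref{sec:ap_rate}.
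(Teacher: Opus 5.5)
Your proof is correct and is essentially the paper's argument. The paper writes the increment as $\psi(x,\delta)=\phi(x+\delta)-\phi(x)$ and uses that it is non-increasing in $x$ (concavity) and non-decreasing in $\delta$ (monotonicity of $\phi$). It applies these two steps in the opposite order to yours: it first shifts the base point from $f(S)$ to $f(S')$ while keeping the larger increment $f(S\cup\{e\})-f(S)$, and then shrinks the increment. Your chord-inequality justification of the diminishing-increments property, which the paper only asserts, is a sound addition, as is your caveat about the domain on which $\phi$ must be concave.
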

\begin{proof}[Proof of Proposition~\ref{prop:concavesubmodular}]
    Define $h(S):= \phi (f(S))$. By submodularity of $f$, for any $S_1 \subseteq S_2$ and $e \notin S_2$, we have
\begin{equation*}
    f(S_1 \cup \{e\}) - f(S_1) \ge f(S_2 \cup \{e\}) - f(S_2) \, .
\end{equation*}
Also, concavity and non-decreasing property of $\phi$ imply that the difference $\psi(x, \delta) =\phi(x+\delta) - \phi(x)$ is non-increasing in the $x$ and non-decreasing in $\delta$. That is, if $x_1 \le x_2$, then $\psi(x_1, \delta) \ge \psi(x_2, \delta)$, and if $\delta_1 \le \delta_2$, then $\psi(x, \delta_1) \le \psi(x, \delta_2)$. Using this, we obtain the following:
\begin{align*}
    \phi(f(S_1 \cup \{e\})) - \phi (f(S_1))
    & = \phi(f(S_1) + f(S_1 \cup \{e\}) - f(S_1)) - \phi (f(S_1))
    \\
    & = \psi(f(S_1), f(S_1 \cup \{e\}) - f(S_1)) 
    \\
    & \ge \psi(f(S_2), f(S_1 \cup \{e\}) - f(S_1)) 
    \tag{$\because f(S_1) \le f(S_2)$}
    \\
    & \ge \psi(f(S_2), f(S_2 \cup \{e\}) - f(S_2)) 
    \tag{$\because f(S_1 \cup \{e\}) - f(S_1) \ge f(S_2 \cup \{e\}) - f(S_2)$}
    \\
    & = \phi(f(S_2) + f(S_2 \cup \{e\}) - f(S_2)) - \phi (f(S_2))
    \\
    & = \phi(f(S_2 \cup \{e\})) - \phi (f(S_2)) \, .
\end{align*}
This inequality is exactly 
\begin{equation*}
    h(S_1 \cup \{ e \}) - h(S_1) \ge h(S_2 \cup \{e\}) - h(S_2) \, ,
\end{equation*}
which shows that $h$ is also submodular.
\end{proof}

\begin{lemma}[Lemma 1 in~\citet{lee2024nearly}]
\label{lem:confi_set}
    Suppose that Assumption~\ref{assump_bdd} hold.
    For any $\delta \in (0,1]$, if we set $\eta={1 \over 2} \log(K+1) + 2, \Lambda = 84\sqrt{2} (d+1) \eta$, and $\alpha_t = \Ocal(\sqrt{d+1} \log t \log K)$, then we have
    \begin{equation*}
        \PP \left( \forall t \ge 1, \| \wb_t - \wb^* \|_{\Hb_t} \le \alpha_t \right) \, ,
    \end{equation*}
    where the estimated parameter updated by the rule in Eq.\eqref{eq:omd_update}. 
\end{lemma}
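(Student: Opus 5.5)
The statement immediately above is Lemma~\ref{lem:confi_set}, the confidence bound for the online mirror descent estimator. My plan is to reduce it directly to Lemma 1 of \citet{lee2024nearly}, applied in dimension $d+1$ with the diversity-augmented features $\zb_{ti}(S_t)=[\xb_{ti},g_t(S_t)]$ and parameter $\wb^*=[\thetab^*,\lambda^*]$. The first step is to check that, once the assortment $S_t$ is fixed, the DMNL likelihood is exactly a standard contextual MNL likelihood in these features: $p_t(i\mid S_t,\wb)=\exp(\zb_{ti}^\top\wb)/(1+\sum_{j\in S_t}\exp(\zb_{tj}^\top\wb))$. Here $g_t(S_t)$ is common to all items of $S_t$, and it is determined by the history and by $S_t$ before $y_t$ is revealed. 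Consequently $\ell_t$, its gradient, and $\mathcal{G}_t$ coincide with the objects in \citet{lee2024nearly}, and the update in Eq.\eqref{eq:omd_update} is their OMD step verbatim.

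Next I will verify the boundedness hypotheses their lemma needs. By Assumption~\ref{assump_bdd}, $\|\wb^*\|_2\le1$, so $\wb^*\in\Wcal$. Also $\|\zb_{ti}\|_2^2=\|\xb_{ti}\|_2^2+g_t(S_t)^2\le 2$, so the features have norm at most $\sqrt2$ rather than $1$. The noise $y_t-p_t(\cdot\mid S_t,\wb^*)$ is a martingale difference sequence with respect to the filtration generated by the history and $S_t$. This holds even for the adaptively explored rounds, because $S_t$ is measurable before $y_t$ is drawn. Their self-concordance-type bound $\|\nabla^2\ell_t\|$ versus the gradient relies only on the MNL structure together with a bound on $|\zb^\top\wb|$, which here is at most $\sqrt2$.

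The step I expect to be the main obstacle is propagating the feature-norm constant $\sqrt2$ through their proof. Concretely, the constant in the $\alpha_t$ expression and in $\Lambda$ changes by absolute factors, which is why $\Lambda$ carries the extra $\sqrt2$ and $\alpha_t$ is stated only up to $\Ocal(\cdot)$. My first attempt will be to rescale $\zb\mapsto\zb/\sqrt2$ and $\wb\mapsto\sqrt2\,\wb$. Under this rescaling the $\Hb_t$-norm distance scales by $\sqrt2$ and the parameter ball has radius $\sqrt2$, and their lemma tolerates a parameter bound of $O(1)$. Replacing $d$ by $d+1$ throughout then yields $\|\wb_t-\wb^*\|_{\Hb_t}\le\alpha_t=\Ocal(\sqrt{d+1}\log t\log K)$ for all $t$, with probability at least $1-\delta$, via their union-bound/stopping-time argument.
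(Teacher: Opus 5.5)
Your reduction is the paper's route. The paper gives no argument beyond citing Lemma~1 of \citet{lee2024nearly} with $d$ replaced by $d+1$, and your preliminary checks are exactly what that citation tacitly needs:
\begin{itemize}
\item for fixed $S_t$ the DMNL likelihood is a standard MNL likelihood in $\zb_{ti}(S_t)$;
\item $\wb^*\in\Wcal$;
\item $y_t-p_t(\cdot\mid S_t,\wb^*)$ is a martingale difference, including in the random exploration rounds.
\end{itemize}
You also notice something the paper ignores: $\|\zb_{ti}(S_t)\|_2$ can reach $\sqrt2$, and $|\zb_{ti}(S_t)^\top\wb|=\sqrt2$ is attainable with $\|\wb\|_2\le 1$. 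So the hypotheses of the cited lemma are not literally met.

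Your resolution of that point does not work as stated. The constants $\eta=\tfrac12\log(K+1)+2$ and $\Lambda=84\sqrt2(d+1)\eta$ are simply those of \citet{lee2024nearly} with $d\mapsto d+1$. They are tuned there to unit-norm features and parameters, so the $\sqrt2$ in $\Lambda$ is not a correction for the augmented features.

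Your own rescaling shows that no such factor in $\Lambda$ could serve as one. With $\zb'=\zb/\sqrt2$ and $\wb'=\sqrt2\,\wb$, the Hessians become $\mathcal{G}_t/2$. The rescaled OMD therefore reproduces the paper's iterates only with regularizer $\Lambda/2$ and projection radius $\sqrt2$. In that case $\Hb'_t=\Hb_t/2$ and $\|\sqrt2(\wb_t-\wb^*)\|_{\Hb_t/2}=\|\wb_t-\wb^*\|_{\Hb_t}$, so the distance is unchanged rather than scaled by $\sqrt2$.

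The rescaled problem thus has half the cited regularizer, and a parameter bound outside the radius-one setting in which Lemma~1 of \citet{lee2024nearly} is stated. To cover it, you would have to re-derive the admissible $\eta$ and $\Lambda$ from the general-radius analysis of \citet{zhang2024online}, rather than reuse the stated values. Alternatively, redo that analysis directly with utilities bounded by $\sqrt2$ in absolute value.

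What your argument actually establishes is $\|\wb_t-\wb^*\|_{\Hb_t}\le\alpha_t=\Ocal(\sqrt{d+1}\log t\log K)$ with enlarged absolute constants in $\eta$ and $\Lambda$. That is all Theorem~\ref{thm:regret} uses. It is not the bound with the literal parameter values in the statement, unless you verify that the original proof has the needed slack. You should drop the claim that the stated $\sqrt2$ already handles the feature norm.
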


\begin{lemma}[Theorem 3.1 in~\citet{Tropp2011}]
\label{lem:Tropp_3.1}
    Let $\mathcal{H}_1 \subset \mathcal{H}_2 \cdots$ be a filtration and consider a finite sequence $\{X_k\}$ of positive semi-definite matrices with dimension $d$ adapted to this filtration. Suppose that $\lambda_{\max}(X_k) \le R$ almost surely. Define the series $Y \equiv \sum_k X_k$ and $W \equiv \sum_k \mathbb{E}[X_k | \mathcal{H}_{k-1}]$. Then for all $\mu \ge 0,~ \gamma \in [0,1)$ we have 
    \begin{equation*}
        \mathbb{P}[\lambda_{\min}(Y) \le (1-\gamma) \mu ~~\text{and} ~~\lambda_{\min}(W) \ge \mu] \le d({e^{-\gamma} \over (1-\gamma)^{1-\gamma}})^{\mu/R} \, .
    \end{equation*}
\end{lemma}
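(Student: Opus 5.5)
The plan is the matrix Laplace-transform (Lieb concavity) method for adapted sequences. We apply it to the negated summands $-X_k$ and build a matrix supermartingale that compensates $Y$ by a multiple of its predictable quadratic part $W$. Fix $\theta>0$ and set $g(\theta):=(1-e^{-\theta R})/R$.

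\textbf{Step 1 (semidefinite bound on one increment).} Since $x\mapsto e^{-\theta x}$ is convex on $[0,R]$, it lies below its chord: $e^{-\theta x}\le 1-g(\theta)x$ on $[0,R]$. Each $X_k$ is positive semidefinite with $\lambda_{\max}(X_k)\le R$, so its eigenvalues lie in $[0,R]$, and the transfer rule gives $e^{-\theta X_k}\preceq \Ib-g(\theta)X_k$. Taking conditional expectations and using $1-a\le e^{-a}$ (again via the transfer rule), we get
\[
\mathbb{E}[e^{-\theta X_k}\mid\mathcal{H}_{k-1}]\preceq \exp\!\big(-g(\theta)\,\mathbb{E}[X_k\mid\mathcal{H}_{k-1}]\big).
\]
Because the matrix logarithm is operator monotone, this becomes $\log \mathbb{E}[e^{-\theta X_k}\mid\mathcal{H}_{k-1}]\preceq -g(\theta)\,\mathbb{E}[X_k\mid\mathcal{H}_{k-1}]$.

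\textbf{Step 2 (supermartingale).} Define
\[
S_k:=\mathrm{tr}\exp\!\Big(\sum_{j\le k}\big(-\theta X_j+g(\theta)\,\mathbb{E}[X_j\mid\mathcal{H}_{j-1}]\big)\Big),\qquad S_0=d .
\]
Write $H$ for the $\mathcal{H}_{k-1}$-measurable part of the exponent, namely the sum over $j<k$ together with $g(\theta)\,\mathbb{E}[X_k\mid\mathcal{H}_{k-1}]$. By Lieb's theorem, $A\mapsto \mathrm{tr}\exp(H+\log A)$ is concave on positive definite matrices. Conditional Jensen then gives
\[
\mathbb{E}[S_k\mid\mathcal{H}_{k-1}]\le \mathrm{tr}\exp\!\big(H+\log\mathbb{E}[e^{-\theta X_k}\mid\mathcal{H}_{k-1}]\big).
\]
Since $\mathrm{tr}\exp$ is monotone, Step 1 bounds the right-hand side by $S_{k-1}$. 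Iterating yields $\mathbb{E}[S_n]\le d$. This is the step I expect to be the main obstacle: it needs Lieb's concavity theorem in precisely this form, together with a careful handling of which terms are predictable.

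\textbf{Step 3 (reduction to the event, Markov, and optimization).} Suppose $\lambda_{\min}(Y)\le(1-\gamma)\mu$ and $\lambda_{\min}(W)\ge\mu$. Pick a unit vector $v$ with $v^\top Yv\le(1-\gamma)\mu$; then $v^\top Wv\ge\mu$. Hence
\[
\lambda_{\max}\big(-\theta Y+g(\theta)W\big)\ge -\theta(1-\gamma)\mu+g(\theta)\mu,
\]
and therefore $S_n\ge \exp\!\big(g(\theta)\mu-\theta(1-\gamma)\mu\big)$ on the event. Markov's inequality gives
\[
\mathbb{P}[\cdot]\le d\exp\!\big(\theta(1-\gamma)\mu-g(\theta)\mu\big).
\]
Choose $\theta R=-\log(1-\gamma)$, so that $g(\theta)=\gamma/R$. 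The exponent becomes $\frac{\mu}{R}\big(-(1-\gamma)\log(1-\gamma)-\gamma\big)$, which is exactly the claimed bound $d\big(e^{-\gamma}/(1-\gamma)^{1-\gamma}\big)^{\mu/R}$. The case $\gamma=0$ is trivial, since the bound is then $d\ge 1$.
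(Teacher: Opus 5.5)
Your proof is correct. The paper gives no proof of this lemma: it quotes it from \citet{Tropp2011} and uses it only as a black box in Lemma~\ref{lem:mineigen}. What you wrote is essentially Tropp's own argument:
\begin{itemize}
\item the chord bound combined with $1-a\le e^{-a}$ gives the semidefinite mgf estimate $\mathbb{E}[e^{-\theta X_k}\mid\mathcal{H}_{k-1}]\preceq\exp(-g(\theta)\,\mathbb{E}[X_k\mid\mathcal{H}_{k-1}])$;
\item Lieb's concavity theorem then makes $\mathrm{tr}\exp(-\theta Y_k+g(\theta)W_k)$ a supermartingale starting at $d$;
\item Markov's inequality with $\theta R=-\log(1-\gamma)$ yields exactly the stated rate.
\end{itemize}
Tropp's presentation differs mainly in packaging. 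He first proves a general master tail bound for adapted sequences under a cumulant hypothesis of the form $\log\mathbb{E}[e^{\theta X_k}\mid\mathcal{H}_{k-1}]\preceq g(\theta)\,W_k$, then specializes it, with $-X_k$ for the lower tail. You inline that specialization and apply Markov directly at the final time, which is all the stated event requires because it involves only the final sums $Y$ and $W$.

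A few small points are worth making explicit in a write-up:
\begin{itemize}
\item Assume $R>0$, otherwise $\mu/R$ is undefined and the statement is vacuous.
\item Take $\mathcal{H}_0$ to be the trivial $\sigma$-algebra, so that $\mathbb{E}[X_1\mid\mathcal{H}_0]$ is defined.
\item Justify the conditional Jensen step with the $\mathcal{H}_{k-1}$-measurable matrix $H$. One option is a regular conditional distribution. Another is the supergradient inequality for the smooth concave map $A\mapsto\mathrm{tr}\exp(H+\log A)$ at the $\mathcal{H}_{k-1}$-measurable point $A=\mathbb{E}[e^{-\theta X_k}\mid\mathcal{H}_{k-1}]$, which is positive definite since $e^{-\theta X_k}\succeq e^{-\theta R}\,\Ib$.
\end{itemize}
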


\end{document}